\pgfplotsset{compat=1.18}
\theoremstyle{plain}
\newtheorem{theorem}{Theorem}[section]
\newtheorem{corollary}[theorem]{Corollary}
\theoremstyle{definition}
\theoremstyle{remark}
\newcommand{\pout}[1]{\ensuremath{P_{\mathrm{out}}\qty(#1)}}
\newcommand{\Zout}{\ensuremath{\mathcal{Z}_{\mathrm{out}}}}
\newcommand{\Zoutstar}{\ensuremath{\mathcal{Z}_{\mathrm{out}^\star}}}
\newcommand{\Zw}{\ensuremath{\mathcal{Z}_{\mathrm{w}}}}
\newcommand{\Zwstar}{\ensuremath{\mathcal{Z}_{\mathrm{w}^\star}}}
\newcommand{\fout}{\ensuremath{f_{\mathrm{out}}}}
\newcommand{\foutstar}{\ensuremath{f_{\mathrm{out}^\star}}}
\newcommand{\fw}{\ensuremath{f_{\mathrm{w}}}}
\newcommand{\fwstar}{\ensuremath{f_{\mathrm{w}^\star}}}
\newcommand{\EEb}[2]{\ensuremath{\mathbb{E}_{#1}\qty[#2]}}
\newcommand{\RR}{\ensuremath{\mathbb{R}}}
\newcommand{\din}{\ensuremath{\Delta_{\text{\tiny{IN}}}}}
\newcommand{\dout}{\ensuremath{\Delta_{\text{\tiny{OUT}}}}}
\newcommand{\deff}{\ensuremath{\Delta_{\text{eff}}}}
\newcommand{\zetain}{\ensuremath{\zeta_{\text{\tiny{IN}}}}}
\newcommand{\zetaout}{\ensuremath{\zeta_{\text{\tiny{OUT}}}}}
\newcommand{\chiin}{\ensuremath{\chi_{\text{\tiny{IN}}}}}
\newcommand{\chiout}{\ensuremath{\chi_{\text{\tiny{OUT}}}}}
\newcommand{\lambdaopt}{\ensuremath{\lambda_{\mathrm{opt}}}}
\newcommand{\zetainzero}{\ensuremath{\zeta_{\text{\tiny{IN}},0}}}
\newcommand{\zetaoutzero}{\ensuremath{\zeta_{\text{\tiny{OUT}},0}}}
\newcommand{\ystar}{\ensuremath{y^\star}}
\newcommand{\losshuber}[1]{{\ell^{\rm Huber}_{#1}}}
\newcommand{\wstar}{\ensuremath{\boldsymbol{w}^\star}}
\newcommand{\what}{\ensuremath{\hat{\boldsymbol{w}}}}
\newcommand{\Egen}{\ensuremath{E_{\mathrm{gen}}}}
\newcommand{\Eest}{\ensuremath{E_{\mathrm{estim}}}}
\newcommand{\excessEgen}{\ensuremath{E_{\mathrm{gen}}^{\mathrm{excess}}}}
\newcommand{\ntest}{\ensuremath{n_{\mathrm{t}}}}
\begin{abstract}
We study robust linear regression in high-dimension, when both the dimension $d$ and the number of data points $n$ diverge with a fixed ratio $\alpha=n/d$, and study a data model that includes outliers. We provide exact asymptotics for the performances of the empirical risk minimisation (ERM) using $\ell_2$-regularised $\ell_2$, $\ell_1$, and Huber losses, which are the standard approach to such problems. We focus on two metrics for the performance: the generalisation error to similar datasets with outliers, and the estimation error of the original, unpolluted function. Our results are compared with the information theoretic Bayes-optimal estimation bound. For the generalization error, we find that optimally-regularised ERM is asymptotically consistent in the large sample complexity limit if one perform a simple calibration, and compute the rates of convergence. 
For the estimation error however, we show that due to a norm calibration mismatch, the consistency of the estimator requires an oracle estimate of the optimal norm, or the presence of a cross-validation set not corrupted by the outliers. We examine in detail how performance depends on the loss function and on the degree of outlier corruption in the training set and identify a region of parameters where the optimal performance of the Huber loss is identical to that of the $\ell_2$ loss, offering insights into the use cases of different loss functions.
\end{abstract}
\begin{document}

    \title{Asymptotic Characterisation of Robust Empirical Risk Minimisation Performance \\ in the Presence of Outliers}
    
    \author{Matteo~Vilucchio}
    \affiliation{Information, Learning and Physics Laboratory, EPFL, 1015 Lausanne, Switzerland}
    \author{Emanuele~Troiani}
    \affiliation{Statistical Physics of Computation Laboratory, EPFL, 1015 Lausanne, Switzerland}
    \author{Vittorio~Erba}
    \affiliation{Statistical Physics of Computation Laboratory, EPFL, 1015 Lausanne, Switzerland}
    \author{Florent~Krzakala}
    \affiliation{Information, Learning and Physics Laboratory, EPFL, 1015 Lausanne, Switzerland}
    
    \maketitle

    \section{Introduction}

    The problem of robust regression --- \textit{i.e.} linear regression with spurious outliers in the training set --- has been extensively studied in the classical low-dimensional setting \cite{Huber2011}. 
    More recently, the high-dimensional setting has also been considered mainly concerning the characterisation of the asymptotic performance of robust estimators --- see Section~\ref{sec:related-works} for a review of related works.
    In this paper, we provide new theoretical results on the problem in the high-dimensional regime where the dimensionality of the problem $d$ is comparable to the number of training samples $n$ available ($n, d \to \infty$ with fixed ratio $\alpha = n/d$), investigating in particular the phenomenology arising due to the presence of outliers.

    We build upon the recent progresses in high-dimensional statistics that characterise the asymptotic performance of the Bayes Optimal (BO) and the Empirical Risk Minimisation (ERM) estimators for general data models learned with generalised linear models.
    In particular, we consider a probabilistic model of training datasets --- first introduced in \cite{Box} --- that features a tunable presence of outliers and we investigate the high-dimensional setting performances of some standard regression estimators. We focus our attention on empirical risk minimisers under the $\ell_2$-regularised $\ell_2$, $\ell_1$ and Huber~losses \cite{huber1964}, and study how they asymptotically perform in various outlier settings (e.g. small/large outlier percentage, small/large outlier variance). 
    We focus on two metrics for the performance, the \textit{generalisation error} to similar datasets with outliers, and the \textit{estimation error} of the original, noiseless function, and we compare to information theoretic Bayes-optimal bounds.
    
    For the probabilistic model of outliers we consider, eq.~(\ref{eq.datamodel}), we provide closed-form expressions for the high-dimensional limit of the generalisation error of ERM for the $\ell_2$, $\ell_1$ and Huber~losses, and for the BO estimator (for which we compute exact rates at large sample complexity $\alpha$). 
    \textbf{Our main results are:}
    \begin{itemize}[noitemsep,wide=0pt]
        \item \textbf{Optimally-regularised ERM is not always consistent.}
        Based on the value of the parameters characterising the outliers' distribution, optimally-regularised ERM estimation can fail to achieve BO performance for large sample complexity $\alpha$, i.e. fail to achieve consistency. 
        This happens for both the generalisation error and for the estimation error, in different regions of the parameter space.
        Notably, for the generalisation error there is a regime in which optimally-regularised ERM with $\ell_1$ loss or the Huber loss (with un-optimised scale parameter) is not consistent, while optimally-regularised ERM with $\ell_2$ and optimally-scaled Huber loss is consistent.
        This demonstrates the superiority of optimally-scaled Huber loss over the other losses.
        \item \textbf{Calibration of ERM minimiser norm restores consistency.}
        In the aforementioned cases in which ERM is not consistent, the inconsistency is caused exclusively by a mismatch between the norm of the ERM minimiser and that of the ground-truth. We  show that consistency for the generalization error, and optimal rates, can  be recovered by cross-validating on a test set to find the optimal norm.
        \item \textbf{Non-consistency of the estimation error.}
        Even when the cross-validated estimator is consistent and achieves optimal rates for the generalization error, it may remain inconsistent for the estimation error on the original, unpolluted function, which is arguably a more interesting objective. While this, too, can be calibrated, such a calibration requires either having access to an oracle knowledge of the BO norm, or to a clean, unpolluted dataset to cross-validate for the optimal norm. In the absence of such an ideal hold-out dataset, the Huber and $\ell_1$ methods do {\it not} lead to consistent estimates of the objective functions, and robust estimation is biased.
        \item \textbf{Dependency of the generalisation error on the parameters.}
        We study the dependency of the generalisation error on the amount and strength of the outliers in the training dataset.
        We find that there is an extended region of parameters where the performance of the Huber~loss is the same as the simpler $\ell_2$ function, offering insights into the use cases of different loss functions.
    \end{itemize}

    In Section~\ref{sec:related-works} we discuss related works.
    In Section~\ref{sec:data-noise-model} we introduce the data model we consider, and the noise model accounting for outliers.
    In Section~\ref{sec:bo-and-erm} we provide the analytical characterisation for the estimation and generalisation errors for the ERM and BO estimators.
    In Section~\ref{sec:consistency-errors} we discuss the large sample complexity limit, and prove our results concerning the consistency of ERM estimators.
    In Section~\ref{sec:explore} we discuss the dependency of the generalisation error on the parameters characterising the outliers' distribution. We provide more details and proofs figures in the Supplemental Materials. The code to reproduce the figures can be found as a package available at \url{https://github.com/IdePHICS/RobustRegression}.

    \section{Related works}\label{sec:related-works}

    The problem of regression in the presence of outliers has been studied extensively, starting with Huber's seminal works \cite{huber1964, Huber1965} where he first introduced M-estimators as a robust alternative to classical estimators and secondly extended the concept of robustness to hypothesis testing. A comprehensive framework for understanding and defining robustness in statistical procedures was then given by \cite{hampel1968contributions, hampel1971}. 
    
    Several algorithms have been proposed to approach problems with outliers --- see the review \cite{Huber2011}. 
    In most cases, these are variants of standard linear regression based, for example, on custom loss functions \cite{huber1964}, on hard-thresholding \cite{Bhatia}, on some sparsity assumption \cite{Suggala} or on the sum of squares method \cite{Raghavendra}.

    The choice of the outlier model greatly affects both the theoretical analyses and practical algorithms. 
    The double Gaussian noise model that we consider has been first introduced in \cite{Box}. A similar model that considers outliers as a two-way process is studied in \cite{BergerBerliner1986}. Recently this model has been reconsidered by \cite{Bellec_20}, where they provide bounds on the generalisation error in the case of Huber-loss regression. 
    Other models have also been proposed; see, for example, \cite{Bean,  Tsakonas, Yuen, Dalalyan, Gagnon}. 

    There have been numerous works analysing the generalisation error for M-estimation and robust regression in high dimensions.
    In \cite{elkaraui, Bean, Donoho}, they consider high-dimensional M-estimation for a model of data with exponentially-distributed outlier noise. They provide results on exact asymptotics and study the trade-off between using $\ell_2$ and $\ell_1$~loss. 
    In \cite{hastie}, they prove bounds for the precise asymptotics of the generalisation error of $\ell_2$ regularised $\ell_2$~loss, while in \cite{Thrampoulidis}, they prove asymptotics for M-estimation using Gordon min-max techniques.
    In \cite{Liu_M_estimation}, they consider a sparse model under an almost-high-dimensional limit $n \sim \log(d)$. 
    In \cite{Huang20}, a comparison of the precise asymptotics of $\ell_1$-regularised $\ell_2$, $\ell_1$ and Huber~loss is presented, with a focus on the interplay between the sparsity and the sample complexity of the problem. 
    In \cite{Bellec_21}, they present results on the asymptotic normality of M-estimators.

    The characterisation of the generalisation error for ERM that we use and specialise to our specific data model eq.~(\ref{eq.datamodel}) has been proven in \cite{aubin2020generalisation} and more recently in greater generality in \cite{Loureiro_2022}. The characterisation of the Bayes optimal performance has been rigorously established in \cite{Barbier2019}. We are using these results as the basis of our investigations in this paper.  We stress that all these works do not consider model outliers, thus providing  no direct insight on robust regression.
    
    \section{Data and Noise Model}\label{sec:data-noise-model}

    We will consider a linear regression task in $\mathbb{R}^d$, \textit{i.e.} we will fit a linear model --- the ``student'' --- 
    $\hat y(\boldsymbol{x}; \boldsymbol{w}) = \boldsymbol{x} \cdot \boldsymbol{w} / \sqrt{d}$ to a dataset $\mathcal{D} = \qty{(\boldsymbol{x}_i, y_i)}_{i=1}^{n}$ made of $n$ pairs of data-points $\boldsymbol{x}_i \in \mathbb{R}^d$ and labels $y_i \in \mathbb{R}$ sampled independently from a joint probability distribution $P(\boldsymbol{x}, y)$.
    
    We will consider a probabilistic generative model $P(\boldsymbol{x}, y) = P_{\rm out}(y | \boldsymbol{x} \cdot \wstar / \sqrt{d} ) P_{\rm input}(\boldsymbol{x})$. In this model, the dataset labels are generated by corrupting the output of a ``teacher'' linear model with weights $\boldsymbol{w}^*$ through an output channel $P_{\rm out}$. We will consider the simple Gaussian case where the teacher vector is distributed as $\wstar\sim\mathcal{N}(0, \mathbbm{1}_d)$, and the samples $\boldsymbol{x}$ have Gaussian distribution $P_{\rm input} = \mathcal{N}(0, \mathbbm{1}_d)$. 
    Assuming Gaussian data may seem limiting, but a long line of work has proven to various degrees that real data often behave as Gaussian data due to an underlying mechanism called “Gaussian universality”, see for example \cite{HuLu23}. Notice also that correlated feature may be addressed under the setting of \cite{Loureiro_2022}. We do not do that, as we found an interesting phenomenology already for uncorrelated features.
    
    In order to model datasets with outliers, we consider the following generative process for the labels
    \begin{equation} \label{eq.datamodel}
        P_{\rm out}(y | \ystar )
        = \begin{cases}
            \ystar  + z\sqrt{\din} & \text{with prob. } 1 - \epsilon \\
            \beta \ystar + z\sqrt{\dout} & \text{with prob. } \epsilon \\ 
        \end{cases}
    \end{equation}
    where $\ystar = \boldsymbol{x} \cdot \wstar / \sqrt{d}$ is the clean teacher-generated label, $z$ is a standard Gaussian noise, $\din$ is the variance of the non-outlying training points, $\dout$ is the variance of the outlying training points, $\epsilon$ is the fraction of outliers and 
    $\beta \geq 0$ is a parameter controlling the outliers' norm. 
    In the following, we will consider often the special case of $\beta = 0$, but all of our results can be extended for any value $\beta \in \mathbb{R}_+$ (see Appendix~\ref{SM:thm-ERM}).

    We will consider two different metrics to analyse the performance of the estimators. The first metric is the generalisation error of a model with weights $\boldsymbol{w}$, defined as 
    \begin{equation}\label{eq:gen-error}
        \Egen(\boldsymbol{w}) = \EEb{ (\boldsymbol{x}_{\text{new}}, y_{\text{new}}) }{ \qty(y_{\text{new}} - \frac{\boldsymbol{x}_{\text{new}}\cdot \boldsymbol{w}}{\sqrt{d}} )^2} \,, 
    \end{equation}
    where the average is over a sample-label pair $(\boldsymbol{x}_{\text{new}}, y_{\text{new}})$ generated with the same distribution as the training set.
    The second metric we will consider is the estimation error, which is the mean-square error between the teacher and student weights, defined as
    \begin{equation}\label{eq:estimation-error}
        \Eest(\boldsymbol{w}) = \frac{1}{d}\norm{\wstar - \boldsymbol{w}}_2^2 \, ,
    \end{equation}
    We remark that, as the sample distribution $ P_{\rm input}(\boldsymbol{x})$ is Gaussian, the estimation error also coincides with the generalisation error evaluated on a clean dataset
    \begin{equation}
        \Eest(\boldsymbol{w}) = \EEb{ \boldsymbol{x} }{ \left( \ystar(\boldsymbol{x}) - \frac{\boldsymbol{x} \cdot \boldsymbol{w}}{\sqrt{d}} \right)^2} \,, 
    \end{equation}
    where the average is over the sample distribution $ P_{\rm input}(\boldsymbol{x})$, and $\ystar = \boldsymbol{x} \cdot \wstar / \sqrt{d}$ is the noiseless teacher-generated label. Thus, in this setting, cross-validation on clean datasets is equivalent to cross-validation on the mean-square error with the ground truth.
    We will be interested in characterising the behaviour of the generalisation error and the estimation error when both the dimension $d$ and the number of training samples $n$ are large, $d, n \to \infty$ with finite sample complexity $\alpha = n/d$.

    \section{Characterisation of the performance of BO and ERM estimators}\label{sec:bo-and-erm}

    \paragraph{Bayes Optimal estimator.}
    The BO estimator is defined as the estimator achieving, on average over the data distribution, the lowest value of a given performance metric.
    Define the posterior distribution as 
    \begin{equation}
        P(\boldsymbol{w} \mid \mathcal{D}) =  \frac{1}{\mathcal{Z}_{\mathcal{D}}} P(\boldsymbol{w}) P(\mathcal{D} \mid \boldsymbol{w}) \, ,
    \end{equation}
    where $\mathcal{Z}_{\mathcal{D}}$ is a normalisation factor, $\mathcal{D}$ is the training dataset, and in our case $P(\boldsymbol{w}) = \mathcal{N}(0, \mathbbm{1}_d)$.
    It is well known \cite{cover1991information} that the optimal estimator error, as defined in eq.~(\ref{eq:estimation-error}), is achieved from the posterior average 
    \begin{equation}
        \what^{\rm BO}_{\rm estim}( \mathcal{D}) = 
        \EEb{ \boldsymbol{w} | \mathcal{D} }{ \boldsymbol{w} }
    \end{equation}
    Instead, the optimal estimator for the generalisation error, as defined in eq.(\ref{eq:gen-error}) is given by (see Appendix~\ref{SM:bo-gen-error})
    \begin{equation}
        \what^{\rm BO}_{\rm gen}( \mathcal{D}) = \EEb{ \boldsymbol{w} | \mathcal{D} }{ \EEb{ \bar{\boldsymbol{x}}, \bar{y} | \boldsymbol{w} }{ \bar{y} \, \bar{\boldsymbol{x}} } } \,,
    \end{equation}
    where $\EEb{ \boldsymbol{w} | \mathcal{D} }{ \cdot }$ is the posterior average, and $\EEb{ \bar{\boldsymbol{x}}, \bar{y} | \boldsymbol{w} }{ \cdot }$ is the average over a sample-label pair $(\bar{\boldsymbol{x}}, \bar{y})$ generated with the data model eq.~(\ref{eq.datamodel}) using $\boldsymbol{w}$ as a teacher vector. 
    
    In the high-dimensional limit, one can characterise analytically the behaviour of both the BO generalisation error and the BO estimation error. This was first rigorously proven in \cite{Barbier2019}, whose results we adapt to our situation to characterise the information theoretic Bayesian bound. In \cite{Barbier2019} the authors showed that in generalized linear models, the value of the estimation error concentrates on
    \begin{equation}\label{eq:est-error-BO}
        \Eest^{\mathrm{BO}} = 1 - q_{\mathrm{b}} \,,
    \end{equation}
    while the value of the generalisation error concentrates on (here we specify their equations to the particular model of data used in the present paper)
    \begin{equation}\label{eq:gen-error-BO}
        \Egen^{\mathrm{BO}} = 1 + \epsilon (\beta^2 - 1) - q_{\mathrm{b}} (1 + \epsilon (\beta - 1))^2 + \deff \, ,
    \end{equation}
    where $\deff = (1-\epsilon)\din + \epsilon\dout$ and the order parameter $q_{\mathrm{b}}$ --- which can be interpreted as the normalised overlap between the posterior average and the teacher vector --- can be found by solving the system of two equations
    \begin{equation}\label{eq:bayes-opt-fpes}
    \begin{aligned}
        q_{\mathrm{b}} &= \frac{\hat{q}_{\mathrm b}}{1 + \hat{q}_{\mathrm b}} \, ,
        \quad 
        \hat{q}_{\mathrm{b}} =\alpha \int\mathbb{E}_{\xi}\left[\mathcal{Z}_{\mathrm{out}}\left(y, q_{\mathrm{b}}^{1 / 2} \xi, \rho_{\mathrm{w}^{\star}}-q_{\mathrm{b}}\right) 
        f_{\mathrm{out}}\left(y, q_{\mathrm{b}}^{1 / 2} \xi, \rho_{\mathrm{w}^{\star}}-q_{\mathrm{b}}\right)^{2}\right] \, dy,
    \end{aligned}
    \end{equation}
    with $\fout$ and $\Zout$ defined as 
    \begin{equation}
    \begin{split}
        \Zout(y,\omega,V) &= \mathbb{E}_{z\sim\mathcal{N}(\omega,V)}\left[\pout{y | z}\right] \, , \quad
        \fout(y,\omega,V) = \partial_\omega \log (\Zout(y,\omega,V)) \, .
    \end{split}
    \end{equation}
 
    As $\alpha \to \infty$, $q_b \to 1$, showing that for large sample complexity the BO estimation goes to zero, while the BO generalisation error does not, as a non-zero excess error remains. For this reason, in the following we will often work with the \textit{excess generalisation error}, defined as
    \begin{equation}\label{eq.excess-gen-error}
        \Egen^{\mathrm{excess}}(\boldsymbol{w}) 
        = \Egen(\boldsymbol{w}) - \Egen^{\mathrm{BO}}(\alpha = +\infty) 
        = \Egen(\boldsymbol{w}) - \epsilon (1-\epsilon) (1-\beta)^2 - \deff\,.
    \end{equation}
    
    Notice that, in practical scenarios, computing the posterior average, or more generally providing an algorithm that efficiently reaches Bayes optimal performance --- for example, in polynomial time in $n, d$ --- is challenging. In this specific case, an efficient algorithm to compute the BO estimator accurately in the considered limit on large $n,d$ and fixed ratio $n/d$ exists. It is the generalised approximate message passing (GAMP) algorithm \cite{Kabashima, Rangan, Barbier2019}, which we will use to plot numerical simulations for BO errors in the following (see Appendix~\ref{SM:GAMP} for details). 
    
    We will use the BO estimator as a baseline to compare the performance of ERM with the selected losses to the optimal one in the presence of outliers. Specifically, eq.~(\ref{eq:est-error-BO}) and eq.~(\ref{eq:gen-error-BO}) provide the information-theoretical lower bounds for the estimation error and generalisation error, respectively.
    
    \paragraph{Empirical risk minimisation.}
    Define the risk function 
    \begin{equation}\label{eq:risk-def}
        \mathcal{R}(\boldsymbol{w}) = \sum_{i=1}^n \mathcal{L}(y_i,\hat{y}_i (\boldsymbol{w})) + \lambda\: r(\boldsymbol{w}) \, ,
    \end{equation}
    where $\hat y_i(\boldsymbol{w}) = \boldsymbol{x}_i \cdot \boldsymbol{w}/\sqrt{d}$ are the labels predicted by the model on each training sample, $y_i$ are the noisy labels of the training dataset,  $\mathcal{L}$ is a convex function of $\hat{y}$, $r$ a convex function of $\boldsymbol{w}$ and $\lambda \geq 0$.
    Minimising the risk function $\mathcal{R}(\boldsymbol{w})$ provides an estimate $\what$ of the teacher vector $\wstar$, and this estimate is uniquely defined thanks to convexity of the risk.
    
    The risk function is a sum of a data-dependent term $\mathcal{L}(y, \hat y)$ and of a regularisation function $r(\boldsymbol{w})$.
    For the data-dependent loss function $\mathcal{L}$, we will consider three convex losses: $\ell_2$, $\ell_1$ and Huber~loss. 
    The  $\ell_2$~loss $\ell_2(y,\hat y) = \frac{1}{2} (y-\hat y)^2$ is the most common choice in regression and will provide a good baseline for more outlier-robust losses.
    The $\ell_1$~loss $\ell_1(y,\hat y) = \abs{y-\hat y}$ is the simplest choice for outlier-robust regression, as it penalises less poor interpolation of outliers w.r.t. the $\ell_2$~loss.
    The Huber~loss
    \begin{equation}\label{eq:huber-definition}
        \losshuber{a}(y, y^\prime) =
        \begin{cases}
            \frac{1}{2}(y- y^\prime)^2 &\text{if } \abs{y-y^\prime}<a\\ 
            a \abs{y- y^\prime}-\frac{a^2}{2} &\text{if } \abs{y-y^\prime}\geq a\\
        \end{cases}
    \end{equation}
    has been proposed as a middle-ground between $\ell_2$ and $\ell_1$ \cite{huber1964}.
    It is more resilient to outliers than the $\ell_2$~loss, while remaining smooth and convex as the $\ell_2$~loss.
    Proper tuning of the scale parameter $a$ effectively produces an $\ell_2$~loss landscape for the non-outlying data points and an $\ell_1$-like one for the outliers.
    For $a \to +\infty$ the Huber~loss reduces to the $\ell_2$~loss, while for $a \to 0$ there is a straightforward mapping to a $\ell_1$~loss (one needs to rescale the total loss by a factor $a$ to take this limit, obtaining an effective regularisation $\lambda a$).
    While our analysis could be carried on for any choice of convex regulariser $r(\boldsymbol{w})$, we will only study the $\ell_2$ regularisation $r(\boldsymbol{w}) = \frac{1}{2}\|\boldsymbol{w}\|_2^2$. 
    Notice that due to our choices of scaling, for $\lambda$ to have an effect on the norm of the learned weights in the limit $\alpha \to \infty$, one needs the scaling $\lambda = \mathcal{O}(\alpha)$.

    We will mostly consider optimally-tuned risk minimisation, meaning that we will set $\lambda$, and $a$ in the case of the Huber loss, such as to minimise either the generalisation or the estimation error.
    In practice, the former is equivalent to cross-validation on a dataset statistically identical to the training set, while the latter to cross-validation on a clean dataset, without any noise or outliers.

    We will perform numerical simulations for the errors of ERM estimators using, for the $\ell_2$ loss, the explicit solution for its minimiser. For Huber loss, we use the L-BFGS method \cite{Liu1989} initialised on a random Gaussian weight. For the $\ell_1$ loss we use its mapping onto Huber with small scale parameter, setting $a=10^{-3}$. 

    Our first main result is the analytic characterisation of the generalisation and estimation errors of the ERM estimator.
    \begin{theorem}\label{thm:main-results}
        For the ERM estimator of the risk function eq.~(\ref{eq:risk-def}) 
        with $\ell_2$ regularisation and $\lambda \geq 0$,
        under the data model defined in eq. (\ref{eq.datamodel}) with $\beta = 0$
        and in the high dimensional limit $n, d \to \infty$ with $n/d = \alpha$ fixed,
        we have that the excess generalisation error and the estimation error concentrate on 
        \begin{equation}
        \begin{aligned}
            \excessEgen \,&= 1 - \epsilon + q - 2  m  (1 - \epsilon ) \,, \\ 
            \Eest \,&= 1 + q - 2m \,,
        \end{aligned}
        \end{equation}
        where the values of $q$ and $m$ are the solutions of a system of six self-consistent equations for the unknowns $(m, q, \Sigma, \hat{m}, \hat{q}, \hat{\Sigma})$.
        Three equations are loss-independent
        \begin{equation}\label{eq:l2-regularization-fpe}
            m = \frac{\hat m}{\lambda + \hat\Sigma}\,, \quad q = \frac{\hat m^2 + \hat q}{(\lambda + \hat\Sigma)^2}\,, \quad \Sigma = \frac{1}{\lambda+\hat \Sigma}\,,
        \end{equation}
        while the other three are loss-dependent.
        For the $\ell_2$~loss
        we have that
        \begin{equation}
            \begin{aligned}
                \hat m &= \frac{\alpha}{1 + \Sigma} (1 - \epsilon) 
                \,, \quad \hat \Sigma = \frac{\alpha}{1 + \Sigma} \, , \\
                \hat q &= \frac{\alpha}{(1 + \Sigma)^2} \qty[ 1 + q + (1-\epsilon) \din + \epsilon \dout - \epsilon - 2 m \qty(1- \epsilon)] \, .
            \end{aligned}
        \end{equation}
        For the $\ell_1$~loss
        and for the Huber~loss
        we have that:
        \begin{equation}
            \begin{aligned}
                \hat m &= \frac{\alpha}{\nu} \qty[(1-\epsilon) \erf\qty(\chiin)] 
                \, , \quad
                \hat \Sigma = \frac{\alpha}{\nu }  \qty[(1-\epsilon ) \erf\qty(\chiin) + \epsilon \erf\qty(\chiout)] \, , \\
                \hat q &= \frac{\alpha}{\nu ^2} \Bigg[(1-\epsilon) \left(\zetain - \mu ^2\right) \erf\qty(\chiin) + \epsilon \left(\zetaout-\mu ^2\right) \erf\qty(\chiout)
                \\
                &\quad 
                - \mu  \sqrt{\frac{2}{\pi }} \left( (1-\epsilon ) \sqrt{\zetain} \, e^{-\chiin^2}\right. + \left.\epsilon \sqrt{\zetaout} \, e^{-\chiout^2} \right) + \mu^2\Bigg] \, , 
            \end{aligned}
        \end{equation}
        where 
        $\zetain = \din -2 m+q+1$, $\zetaout = \dout+q$, while the other constants depend on the loss.
        For $\ell_1$~loss
        \begin{equation}
            \mu^{\ell_1} = \nu^{\ell_1} = \Sigma
            \, , \quad
            \chiin^{\ell_1} = \Sigma / \sqrt{2 \zetain}
            \, , \quad
            \chiout^{\ell_1} = \Sigma / \sqrt{2 \zetaout}
            \, ,
        \end{equation}
        and for Huber~loss
        \begin{equation}
            \nu^{\text{Huber}} = \Sigma + 1
            \, , \quad
            \mu^{\text{Huber}} = a\nu^{\text{Huber}} 
            \, , \quad
            \chiin^{\text{Huber}} = \mu^{\text{Huber}} / \sqrt{2\zetain} 
            \, , \quad
            \chiout^{\text{Huber}} = \mu^{\text{Huber}} / \sqrt{2\zetaout}
            \, .
        \end{equation}
    \end{theorem}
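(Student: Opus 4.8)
The plan is to derive the stated formulas as a specialisation of the rigorous ERM characterisation of \cite{Loureiro_2022} (and \cite{aubin2020generalisation}), rather than re-establishing concentration from scratch: those results already guarantee that in the limit $n,d\to\infty$ at fixed $\alpha=n/d$ the overlaps $m=\tfrac1d\wstar\cdot\what$, $q=\tfrac1d\|\what\|_2^2$ and the variance $\Sigma$ converge to the solution of a six-dimensional fixed point that splits into a regulariser block and a loss block coupled only through $(m,q,\Sigma)$ and $(\hat m,\hat q,\hat\Sigma)$. Granting this, the two error formulas are immediate. Since $P_{\rm input}=\mathcal N(0,\mathbbm 1_d)$, the pair $(\ystar,\hat y)=(\boldsymbol x\cdot\wstar/\sqrt d,\ \boldsymbol x\cdot\what/\sqrt d)$ is jointly Gaussian with covariance set by $\rho=1,m,q$, so $\Eest\to\rho-2m+q=1-2m+q$; expanding $\Egen=\EE{(y_{\rm new}-\hat y)^2}$ over the two components of eq.~(\ref{eq.datamodel}) at $\beta=0$ and subtracting the $\alpha\to\infty$ Bayes value of eq.~(\ref{eq.excess-gen-error}) yields the stated $\excessEgen$. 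All remaining work is the explicit evaluation of the two fixed-point blocks.

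The regulariser block is elementary: with Gaussian prior and $r(\boldsymbol w)=\tfrac12\|\boldsymbol w\|_2^2$ the proximal of $\lambda r$ is linear, and the prior-side Gaussian integrals reduce to second moments that reproduce eq.~(\ref{eq:l2-regularization-fpe}); being loss-independent, this explains the common structure of the statement. The channel block is the core. For an ERM loss the score is $\fout(y,\omega,V)=\qty(\mathrm{prox}_{V\mathcal L(y,\cdot)}(\omega)-\omega)/V$, the gradient of the Moreau envelope, and the three hat-equations are the Gaussian averages $\hat\Sigma=-\alpha\,\EE{\partial_\omega\fout}$, $\hat m=\alpha\,\EE{\partial_{\ystar}\fout}$ and $\hat q=\alpha\,\EE{\fout^{\,2}}$ over the joint law of $(\ystar,\omega)$ and the label $y$ from the mixture. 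I would first write $\fout$ in closed form: for $\ell_2$ it is the linear map $(y-\omega)/(1+\Sigma)$, giving the $\ell_2$ equations by plain second moments; for Huber and $\ell_1$ it is the clipped map that is linear with slope $1/\nu$ on $|y-\omega|<\mu$ and saturates to $\pm\mu/\nu$ outside, with $(\nu,\mu)=(\Sigma+1,a(\Sigma+1))$ for Huber and $(\nu,\mu)=(\Sigma,\Sigma)$ for $\ell_1$. The organising observation is that, conditionally on each component, the residual $y-\omega$ is centred Gaussian with variance $\zetain=1-2m+q+\din$ or $\zetaout=q+\dout$, so $\chiin,\chiout$ are exactly the saturation thresholds in standard-deviation units; integrating the clipped map and its square against these Gaussians then reduces to truncated first- and second-moment integrals producing the region masses $\erf(\chiin),\erf(\chiout)$ and the boundary densities $e^{-\chiin^2},e^{-\chiout^2}$. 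The factor $(1-\epsilon)$ specific to $\hat m$ is then transparent: at $\beta=0$ only the inlier channel carries $\ystar$, so $\partial_{\ystar}\fout$ vanishes on the outlier component.

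The main obstacle is the bookkeeping of the Huber truncated moments: one must assemble the interior slope-$1/\nu$ contribution and the saturated exterior contribution for both $\hat\Sigma$ and $\hat q$, track the scale $a$ through $\mu=a\nu$, and check that across the two components the intermediate $\mu^2$ pieces collapse to the single additive $+\mu^2$ displayed in $\hat q$. Two limits guard against algebra errors: $a\to\infty$ must collapse the Huber equations onto the $\ell_2$ ones, and the rescaled $a\to0$ limit (loss divided by $a$, effective regularisation $\lambda a$) must return the $\ell_1$ equations with $\nu=\mu=\Sigma$. Finally, the general-$\beta$ extension promised in Appendix~\ref{SM:thm-ERM} touches only the outlier channel: the residual there acquires mean $\beta\ystar-\omega$, modifying $\zetaout$ and reinstating an $\ystar$-coupling in $\hat m$, while the inlier computation and the overall structure are unchanged.
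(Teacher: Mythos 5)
Your proposal follows essentially the same route as the paper's own proof: invoke the general fixed-point characterisation of \cite{Loureiro_2022} (in the setting of \cite{aubin2020generalisation}), specialise the prior block to the Gaussian prior with $\ell_2$ regularisation to get the three loss-independent equations, write $\fout$ for each loss via its proximal operator (linear for $\ell_2$, clipped for $\ell_1$/Huber), and evaluate the resulting truncated-Gaussian moments over the inlier/outlier mixture with residual variances $\zetain,\zetaout$. Your only departure --- writing the $\hat m$ equation as $\alpha\,\EE{\partial_{\ystar}\fout}$ instead of the paper's $\alpha\int\EEb{\xi}{\Zoutstar\,\foutstar\,\fout}\dd{y}$ --- is an equivalent integration-by-parts (Stein) identity for this shift-invariant channel, and it correctly explains the $(1-\epsilon)$ factor in $\hat m$ at $\beta=0$ (weight $\beta$ on the outlier branch in general), so it is a stylistic rather than substantive difference.
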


    The previous Theorem can be proven following the proof of Theorem~1 in \cite{Loureiro_2022}, through a reduction to our case and a computation of loss-specific quantities (namely their proximal operators) that is provided in detail in Appendix~\ref{SM:thm-ERM}. 
   We also provide a similar result for $\beta > 0$.
    The parameters $m$ and $q$ appearing in the previous statement have a simple interpretation as the values around which the teacher-student and student-student overlaps concentrate in high dimension
    \begin{equation}
        m = \frac{1}{d} \wstar \cdot \what 
        \,, \quad 
        q = \frac{1}{d} \norm{\what}_2^2
        \,.
    \end{equation}
    
    In the case of Ridge~regression we have that the equations can be solved explicitly and get to an explicit form for the generalisation error. We show this in Appendix~\ref{SM:corollary-L2}.

    \section{Consistency (and lack thereof) of the generalisation \& estimation error}\label{sec:consistency-errors}

    We now study, for both generalisation and estimation error and at large sample complexity $\alpha$, whether optimally-regularised ERM estimators achieve BO performance or not, \textit{i.e.} whether ERM is a consistent estimator.

    We start by remarking that the large sample complexity asymptotics of the BO generalisation error and of the BO estimation error is 
    \begin{equation}
        \Eest^{\mathrm{BO}} = 0 
        + \mathcal{O}\left(\alpha^{-1}\right)
        \, , \quad 
        \Egen^{\mathrm{BO}} = 
        \epsilon(1-\epsilon)(1-\beta)^2 + \Delta_{\rm eff}
        + \mathcal{O}\left(\alpha^{-1}\right)
    \end{equation}
    which can be found by expanding around $q_b = 1 + c/\alpha + \mathcal{O}\left(\alpha^{-2}\right)$ in eq.~(\ref{eq:bayes-opt-fpes}).  We prove the (fast) rate $1/\alpha$ for the errors
    in Appendix~\ref{SM:rate-BO}. This rate is a typical feature of BO estimators, see for example \cite{aubin2020generalisation,haussler1991estimating}.

    \begin{figure}[t!]
        \centering
        \includegraphics[width=0.49\columnwidth]{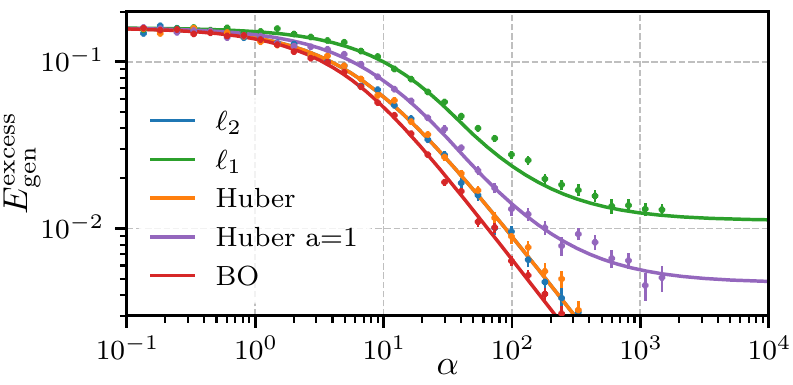}
        \hfill
        \includegraphics[width=0.49\columnwidth]{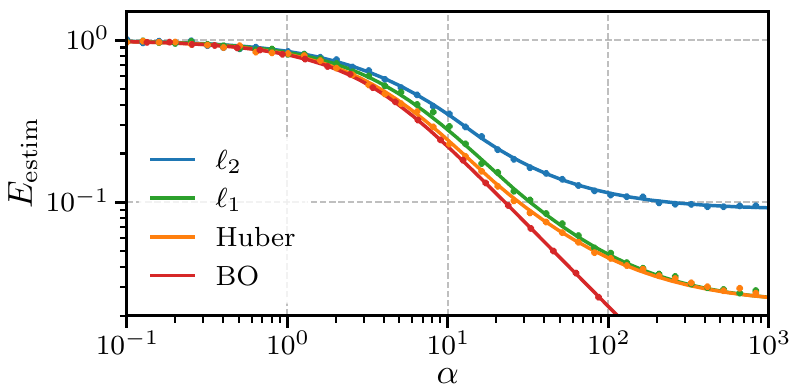}
        \hfill
        \caption{
        Generalisation and estimation errors as a function of the sample complexity $\alpha$ for $\beta = 0$, in two regimes where ERM do not always achieve BO performance for large $\alpha$.
        \textbf{(Left)} Here we plot the excess generalisation error for $\epsilon = 0.6$, $\din = 1$ and $\dout = 0.5$, and for all losses we use the optimal value of $\lambda$.
        We plot two versions of the Huber loss, one with optimal scale parameter $a$ and the other with fixed $a=1$.
        We see that only the error of the $\ell_2$ and Huber loss with optimal scale $a$ (superimposed in the plot) vanishes at large sample complexity, reaching BO performance. For fixed-scale Huber loss, and $\ell_1$ loss, the error converges to a finite value.
        The parameters used for the simulations (dots) of BO are $d=1000$ averaged over $50$ samples, while for the ERM $d=200$ averaged over $1000$ samples.
        \textbf{(Right)} Here we plot the estimation error for $\epsilon = 0.3$, $\din = 1.0$ and $\dout = 5.0$, and for all losses we use optimal $(\lambda, a)$. We see that all losses converge to a finite value of the estimation error, while the BO error goes to zero for large sample complexity. The parameters used for BO are $d=4000$ averaged over $10$ samples, while for the ERM $d=200$ aveaged over $1000$ samples.
        }
        \label{fig:figure-1}
        \vspace{-0.25cm}
    \end{figure}

    We then consider ERM.
    We find that ERM always estimates correctly the direction of the teacher weights. One could then expect that by adjusting the $\ell_2$ regularisation parameter, ERM estimators could also achieve the BO norm for the metric under consideration (generalisation/estimation error). 
    We show that this is not always possible if we restrict $\lambda \geq 0$ --- a very natural constraint given that for $\lambda < 0$ the risk function for $\ell_1$ and Huber losses becomes non-convex and unbounded from below.
    This is due to the fact that for some values of the parameters characterising the outliers' distribution, the norm of ERM estimators is shorter than the BO one, requiring \textit{a priori} a negative regularisation to push it to larger norms.    
    \begin{theorem}\label{thm:large-alpha-ERM}
        Under the same conditions of Theorem \ref{thm:main-results}, as $\alpha \to \infty$ (taken after the $n, d \to \infty$ limit), we have that
        \begin{itemize}[noitemsep,wide=0pt, topsep=0pt]
            \item the normalised scalar product between ERM estimators (for the three losses considered) with the teacher weights converges to one, regardless of the value of the regulariser $\lambda$ and of the scale parameter $a$ of the Huber loss.
            \item $\ell_2$ and optimally-scaled Huber ERM estimators achieve consistency with respect to the generalisation error for all fixed values of $\lambda > 0$ with the BO rate $1/\alpha$ for all values of the parameters of the outliers' distribution. 
            \item Optimally-regularised  $\ell_1$ and Huber ERM estimators (with the scale parameter of the Huber loss left fixed, un-optimised) achieve consistency with respect to the generalisation error with the BO rate $1/\alpha$ if and only if 
            $\dout - \din \geq (1-\beta)^2 ( 2\epsilon-1)$ for $0<\beta<1$, and $\dout - \din \leq (1-\beta)^2 ( 2\epsilon-1)$ for $\beta>1$.
            \item Optimally-regularised ERM estimators (for the three losses considered) achieve consistency with respect to the estimation error with the BO rate $1/\alpha$ if and only if $\beta > 1$.
        \end{itemize}
    \end{theorem}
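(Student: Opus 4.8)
\section*{Proof proposal for the estimation-error statement}

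The plan is to use the first item of the theorem (direction alignment) to collapse the estimation-error question to a one-dimensional statement about the \emph{norm} of the minimiser, and then to identify the $\lambda\to 0$ limit of that norm with the slope of a scalar population M-estimation problem whose position relative to the teacher norm is controlled entirely by the sign of $\beta-1$. Concretely, recall $\Eest(\what)=1+q-2m$ with $m=\wstar\cdot\what/d$ and $q=\|\what\|_2^2/d$. Writing $m=\sqrt{q}\,\cos\theta$, where $\theta$ is the angle between $\what$ and $\wstar$, the first item gives $\cos\theta\to 1$ as $\alpha\to\infty$ for every $\lambda$. Minimising $1+q-2\sqrt{q}\,\cos\theta$ over the norm $\sqrt{q}$ at fixed direction gives the optimum at $\sqrt{q}=\cos\theta$ with value $\sin^2\theta\to 0$. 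Since for $\ell_2$ regularisation the norm $\sqrt{q(\lambda)}$ is monotonically non-increasing in $\lambda$ (a standard envelope-theorem consequence of strict convexity, which I would restate directly from eq.~(\ref{eq:l2-regularization-fpe})), the achievable norms over $\lambda\ge 0$ form the interval $(0,\sqrt{q_0}]$ with $q_0:=\lim_{\lambda\to 0}q$. The estimation-optimal norm $\cos\theta\approx 1$ is therefore reachable by some $\lambda\ge 0$ if and only if $q_0\ge 1$; so, modulo the boundary case, consistency of the estimation error is equivalent to $q_0\ge 1$, and the remaining task is to compute $q_0$.

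Second, I would identify $q_0$ with a population minimiser. At $\lambda=0$ and $\alpha\to\infty$ the ERM solution concentrates on the minimiser of the population risk, which by rotational invariance of $P_{\rm input}$ is aligned with $\wstar$ and reduces to the scalar $c^\star=\argmin_{c}\,\mathbb{E}[\mathcal{L}(y,c\,\ystar)]$, giving $m\to c^\star$ and $q_0=(c^\star)^2$. Formally this is recovered by taking $\lambda\to 0$ then $\alpha\to\infty$ in the six fixed-point equations in their general-$\beta$ form (Appendix~\ref{SM:thm-ERM}) and checking the limiting stationarity for each loss. Writing $\psi=-\partial_2\mathcal{L}$ for the influence function (the derivative in the prediction argument)---$\psi(r)=r$ for $\ell_2$, $\psi(r)=\operatorname{sign}(r)$ for $\ell_1$, and $\psi(r)=\max\{-a,\min\{a,r\}\}$ for Huber, all odd and non-decreasing---the stationarity condition reads $\mathbb{E}[\psi(y-c^\star\,\ystar)\,\ystar]=0$.

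Third, I would pin the sign of $c^\star-1$ by Gaussian integration by parts. The gradient of the population risk at $c=1$ is $-\mathbb{E}[\psi(y-\ystar)\,\ystar]$. On inliers $y-\ystar=z\sqrt{\din}$ is independent of $\ystar$ and contributes zero; on outliers $y-\ystar=(\beta-1)\ystar+z\sqrt{\dout}$, and Stein's lemma applied to $\ystar\sim\mathcal{N}(0,1)$ gives $\mathbb{E}[\psi((\beta-1)\ystar+z\sqrt{\dout})\,\ystar]=(\beta-1)\,\mathbb{E}[\psi'(\cdot)]$ with $\mathbb{E}[\psi'(\cdot)]>0$ for all three losses. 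Hence the gradient at $c=1$ has the sign of $-(\beta-1)$, and by convexity of the population risk its minimiser satisfies $c^\star>1\iff\beta>1$ (with $c^\star=1$ at $\beta=1$). Combined with the reduction above this yields $q_0>1\iff\beta>1$, establishing estimation-error consistency precisely in that regime, uniformly over the three losses and independently of $\epsilon,\din,\dout$.

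I expect the main obstacle to lie at the two edges of the argument rather than in its core. The first is rigorously justifying the norm monotonicity and the exchange of the $\lambda\to 0$ and $\alpha\to\infty$ limits that identify $q_0=(c^\star)^2$ from the fixed-point system for the non-smooth $\ell_1$ and Huber losses, where $\psi'$ must be read distributionally. The second, and more delicate, is the boundary $\beta=1$ together with the claimed BO rate $1/\alpha$: for $\beta>1$ I would expand the fixed-point equations around the estimation-optimal regulariser $\lambda^\star(\alpha)$ to show the residual angle, and hence $\Eest$, decays as $1/\alpha$, matching the bound proven in Appendix~\ref{SM:rate-BO}; at $\beta=1$ one has $q_0=1$ exactly but $q(\lambda=0)$ approaches $1$ from below at a strictly slower rate, which is what excludes $\beta=1$ from the consistent regime at the BO rate and forces the strict inequality.
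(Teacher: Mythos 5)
Your proposal covers only one of the four items of the theorem. The first item (vanishing teacher--student angle) is taken as an input rather than proven --- in the paper it is itself a one-line consequence of the loss-independent equations, eq.~(\ref{eq.q=m2}), which force $q_0=m_0^2$ --- and the second and third items (consistency of the generalisation error for $\ell_2$ and optimally-scaled Huber, and the threshold $\dout-\din \gtrless (1-\beta)^2(2\epsilon-1)$ for $\ell_1$ and fixed-scale Huber) are not addressed at all. The paper's proof in Appendix~\ref{SM:thm-large-alpha} obtains all four statements from a single computation: insert the ansatz $m=m_0$, $q=q_0$, $\Sigma=\Sigma_0/\alpha$, $\hat m=\hat m_0\alpha$, $\hat q=\hat q_0\alpha$, $\hat\Sigma=\hat\Sigma_0\alpha$, $\lambda=\lambda_0+\lambda_1\alpha$ into the six self-consistent equations of Theorem~\ref{thm:main-results}, and read off which values of $\lambda_1\geq0$ are required for each notion of consistency. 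As written, your text is therefore a proof of at most one bullet and cannot replace the paper's proof of the full statement.

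For the estimation-error bullet, your route is genuinely different from the paper's and its core is correct; in fact it computes exactly the same quantity by other means. The paper imposes $m_0=1$ in eq.~(\ref{eq.q=m2}) and solves for the required regularisation slope $\lambda_1=\hat m_0-\hat\Sigma_0=(\beta-1)\epsilon\erf\left(a/\sqrt{2\zetaoutzero}\right)$ (eq.~(\ref{eq:estim_optimal})), whose sign is that of $\beta-1$, so the constraint $\lambda\geq0$ decides consistency. Your Stein's-lemma step yields $\mathbb{E}\left[\psi'\left((\beta-1)\ystar+z\sqrt{\dout}\right)\right]=\erf\left(a/\sqrt{2\zetaoutzero}\right)$ for Huber (with $\zetaoutzero=\dout+(1-\beta)^2$ at $m_0=1$), so your population-risk gradient at $c=1$ is, up to sign, the paper's $\lambda_1$. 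What your approach buys is a loss-uniform explanation (odd monotone influence function plus convexity) of why only the sign of $\beta-1$ matters, avoiding the separate $a\to\infty$ and $a\to0$ limits the paper uses for $\ell_2$ and $\ell_1$. What it leaves implicit, and the paper makes explicit, is the scaling needed to realise the calibration: since the limiting equations are independent of $\lambda_0$, \emph{any} bounded-in-$\alpha$ regularisation gives the same limiting norm $c^\star$, and moving the norm requires $\lambda\propto\alpha$; your interval argument "achievable norms are $(0,\sqrt{q_0}]$ over $\lambda\geq0$" is correct at finite $\alpha$, but the exchange of the $\lambda$- and $\alpha$-limits is exactly where the technical debt you acknowledge sits.

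Two further points. First, your treatment of the boundary is unsupported: you assert that $\beta=1$ is excluded because $q(\lambda=0)\to1$ at a slower rate, but by your own argument $c^\star=1$ there, any fixed $\lambda$ gives $m_0=1$, $q_0=1$, and the generic subleading correction is $\mathcal{O}(1/\alpha)$ by the integer-power expansion of Appendix~\ref{SM:rates-ERM}; the strict inequality "$\beta>1$" is an imprecision at the boundary that the paper's own proof shares (it only establishes $\mathrm{sign}(\lambda_1)=\mathrm{sign}(\beta-1)$). Second, the claimed BO rate $1/\alpha$ in the consistent regime is deferred in your proposal; the paper obtains it from the same expansion argument of Appendix~\ref{SM:rates-ERM}, which you would still need to invoke, so your proof does not become self-contained on this point either.
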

    We prove Theorem \ref{thm:large-alpha-ERM} in Appendix~\ref{SM:thm-large-alpha} by expanding at large $\alpha$ the self-consistent equations presented in Theorem \ref{thm:main-results}. 

    Thus, we proved that there exists some regions of parameters in which, for both generalisation and estimation error, $\ell_2$ regularisation is not sufficient to reach BO performance. In those regimes, consistency for the generalisation error can always be recovered by cross-validating on a test set to find the best norm, given that the angle between teacher and student vanishes.
    Due to homogeneity properties of the three losses considered, the rescaling can also be performed before training on the labels, as discussed in Appendix~\ref{SM:thm-large-alpha-rescaling}.
    The estimation error can be calibrated similarly, but we stress that such a calibration requires either having access to an oracle knowledge of the BO norm, or to a clean, unpolluted dataset to cross-validate for the optimal norm. 
    In the absence of such an ideal hold-out dataset, the Huber and $\ell_1$ methods do {\it not} lead to consistent estimates of the objective functions, and robust estimation is biased.
    
    In Figure\ref{fig:figure-1} we show, for both the generalisation and estimation error, the dependency of the error on the sample complexity $\alpha$ in two regimes were some of the losses do not achieve consistency.

    It is worth noticing that consistency could be achieved by carefully allowing for negative $\ell_2$ regularisation. In the case of the $\ell_2$ loss, a negative regularisation does not immediately break the convexity of the risk function. Indeed, the risk function is convex up to $ \lambda = - (1 - \sqrt{\alpha})^2$ (see Appendix~\ref{SM:claim-L2-negative-lambda}).
    Thus, for $\ell_2$ loss consistency for the estimation error can be achieved without using an oracle norm up to (in the large sample complexity limit) $\beta > 1 - \epsilon^{-1}$, by allowing for negative lambda and retaining convexity of the risk function.
    On the other hand, for $\ell_1$ and Huber loss, even an arbitrarily small negative regularisation implies a non-convex and unbounded-from-below risk function. It is reasonable to think though that an optimisation procedure restricted to small enough norms would still result in a well-defined ERM procedure, allowing for negative regularisation also in these case. 
    We leave a more thorough exploration of this effect for future work. We also mention that negative regularisation has been considered in a series of recent works \cite{wu, Kobak, hastie}.

    \section{Dependency of the generalisation  on the parameters} \label{sec:explore}

        \begin{figure}[t!]
        \centering
        \includegraphics[width=0.475\columnwidth]{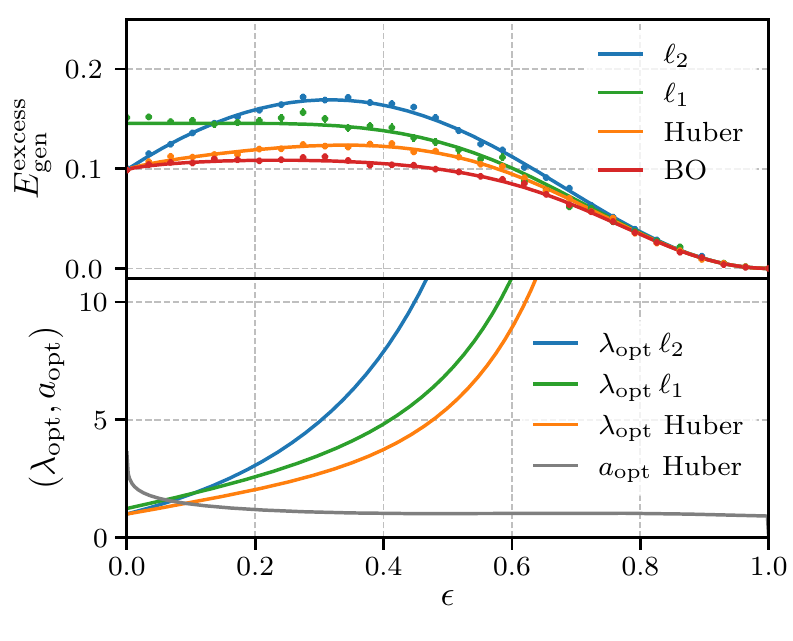}
        \hfill
        \includegraphics[width=0.475\columnwidth]{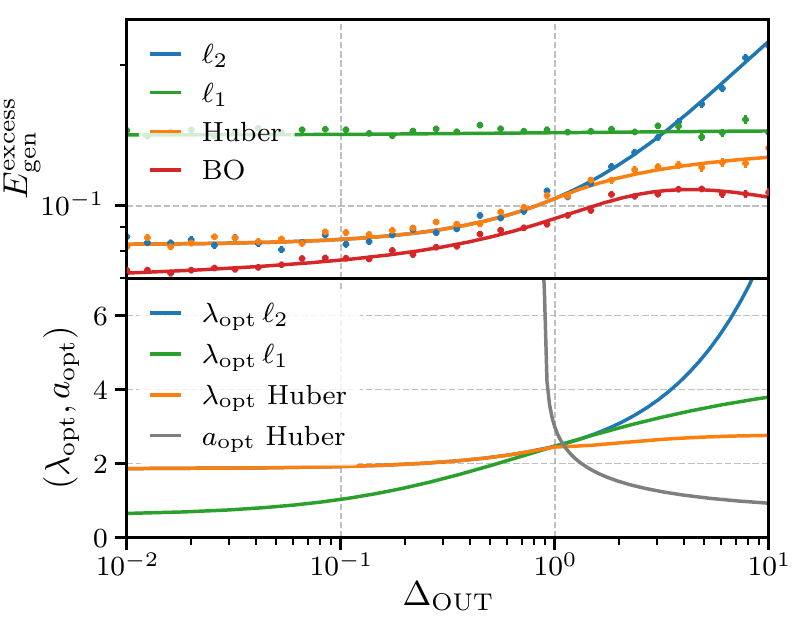}
        \hfill
        \caption{
        Excess generalisation error (top panels) as a function of the percentage of outliers' $\epsilon$ (left) and of the outliers' variance $\dout$ (right), along with the respective optimal hyper-parameters $(\lambda, a)$ (bottom panels). 
        In both plots $\alpha = 10$, $\beta = 0$, $\din = 1$, and for all numerical simulations (dots) have $d=200$, and are averaged over $1000$ samples.
        \textbf{(Left)} In this plot $\dout = 5$. 
        We observe that for small $\epsilon$ both the $\ell_2$ and the Huber loss converge to BO performance, while $\ell_1$ does not.
        \textbf{(Right)} In this plot $\epsilon = 0.3$. For $\dout < \din$ we notice that the Huber scale parameter diverges, so the Huber and $\ell_2$ loss become identical.
        Furthermore, the BO estimator has a non-monotonic behaviour as a function of $\dout$, while ERM estimators are monotonic.
        }
        \label{fig:figure-2}
                \vspace{-0.25cm}
    \end{figure}

    We now investigate the dependence of on the parameters characterising the amount of corruption caused by outliers. As in the previous section, we plot the excess generalisation error, and we always consider optimally tuned hyperparameters $\lambda$ and $a$. Again, we discuss the phenomenology arising for a particular value for the parameters of the model, but we stress that the phenomenology is qualitatively the same for other values of the parameters, see Appendix~\ref{SM:exploration} for additional experiments, as well as Appendix~\ref{SM:real} for a more realistic scenario.
    
    \paragraph{Dependency of the generalisation error on the amount of outliers.}
    We investigate the influence of the percentage of outliers $\epsilon$ on the generalisation error in Figure~\ref{fig:figure-2} left panel. We identify two qualitatively different regimes where the behaviour is different: the regime with small $\epsilon$ where outliers are rare and vice-versa the regime with $\epsilon$ close to 1, where the samples that would be considered outliers in the $\epsilon\to 0$ limit dominate the dataset. 
    
    We observe that in the small $\epsilon$ regime, for all losses, the ERM excess generalisation error increases as the percentage of outliers increases. In the limit of small $\epsilon$, the Huber loss performs similarly to $\ell_2$, which is known to be Bayes optimal at $\epsilon=0$ if optimally regularised \cite{cover1991information}. In contrast, the $\ell_1$ loss converges to a higher value for the generalisation error not reaching BO performances. For all finite values $\epsilon$, ERMs fails to reach BO performance for all three choices of the loss we studied. 
    
    By expanding the explicit asymptotic form for the generalisation error of ERM with $\ell_2$ loss presented in Corollary \ref{thm:l2sol}, we are able to provide an analytical characterisation of the small $\epsilon$  behaviour of $\ell_2$ ERM.
    \begin{theorem}\label{thm:excessL2epsilon}
        As $\epsilon\to0$ (taken after the $n, d \to \infty$ limit), we have that the generalisation error of optimally-regularised $\ell_2$~loss and the optimal regularisation satisfy
        \begin{equation}
        \Egen^{\ell_2} = \Egen^{\mathrm{BO}} + \kappa\epsilon + \mathcal{O}(\epsilon^2)\,,
        \quad
        \lambda = \din + \lambda_1 \epsilon + \mathcal{O}(\epsilon^2)
        \end{equation}
        where $\Egen^{\mathrm{BO}}$ is the Bayes optimal generalisation error, and $\kappa$ and $\lambda_1$ are functions of the model parameters that we provide in Appendix~\ref{SM:thm-L2-small-eps}.
    \end{theorem}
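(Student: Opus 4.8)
The plan is to start from the explicit closed-form expression for the generalisation error of $\ell_2$-regularised $\ell_2$ ERM provided in Corollary~\ref{thm:l2sol}, viewing it as an explicit function $\Egen^{\ell_2}(\lambda,\epsilon)$ of the regularisation and the outlier fraction (with $\din,\dout$ fixed and $\beta=0$). Since at $\epsilon=0$ the data model reduces to linear regression with isotropic Gaussian prior and Gaussian label noise of variance $\din$, ridge regression with $\lambda=\din$ is exactly Bayes optimal; this pins the zeroth-order term $\lambda_0=\din$ in the ansatz $\lambda=\din+\lambda_1\epsilon+\mathcal{O}(\epsilon^2)$ and gives $\Egen^{\ell_2}(\din,0)=\Egen^{\mathrm{BO}}(0)$, so that the excess error is already $\mathcal{O}(\epsilon)$. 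The remaining task is to extract the first-order coefficients $\kappa$ and $\lambda_1$.

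First I would compute $\kappa$ via an envelope-theorem argument. Writing $\lambda^\star(\epsilon)$ for the optimal regularisation and differentiating $\Egen^{\ell_2}(\lambda^\star(\epsilon),\epsilon)$ at $\epsilon=0$, the contribution proportional to $(\lambda^\star)'(0)=\lambda_1$ vanishes because $\partial_\lambda\Egen^{\ell_2}(\din,0)=0$ by optimality of $\lambda_0$. Hence the linear coefficient of the ERM error is simply $\partial_\epsilon\Egen^{\ell_2}(\din,0)$, obtained by fixing $\lambda=\din$ in the closed form and differentiating in $\epsilon$. Subtracting the analogous expansion of the Bayes-optimal error --- obtained from eq.~(\ref{eq:gen-error-BO}) together with the fixed-point equation~(\ref{eq:bayes-opt-fpes}) expanded around its $\epsilon=0$ solution $q_{\mathrm b}(0)$ --- then yields $\kappa=\partial_\epsilon\Egen^{\ell_2}(\din,0)-\partial_\epsilon\Egen^{\mathrm{BO}}(0)$ as an explicit function of $\alpha,\din,\dout$.

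Next I would obtain $\lambda_1$ from the first-order optimality condition, which is where the second-order structure genuinely enters. Expanding $\partial_\lambda\Egen^{\ell_2}(\lambda,\epsilon)=0$ with the ansatz $\lambda=\din+\lambda_1\epsilon$ to first order in $\epsilon$ and using $\partial_\lambda\Egen^{\ell_2}(\din,0)=0$, the implicit function theorem gives
\begin{equation}
\lambda_1 = -\frac{\partial_\epsilon\partial_\lambda\Egen^{\ell_2}(\din,0)}{\partial_\lambda^2\Egen^{\ell_2}(\din,0)}\,,
\end{equation}
requiring the Hessian $\partial_\lambda^2\Egen^{\ell_2}$ and the mixed derivative at $(\din,0)$, both read off from the closed form. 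I would then check that $\partial_\lambda^2\Egen^{\ell_2}(\din,0)>0$, confirming that $\lambda_0=\din$ is a genuine minimum and that the whole expansion is well posed.

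The main obstacle is purely computational: the closed form of Corollary~\ref{thm:l2sol} is an algebraic expression whose $\lambda$- and $\epsilon$-derivatives must be assembled and evaluated at the degenerate point $(\din,0)$, where several factors in the fixed-point equations simplify but must be tracked consistently. The delicate conceptual point is that a naive series expansion of $\Egen^{\ell_2}(\lambda^\star(\epsilon),\epsilon)$ appears to entangle $\kappa$ with $\lambda_1$, whereas the envelope argument shows $\kappa$ is independent of $\lambda_1$; keeping the two computations separate --- $\kappa$ from the objective at fixed $\lambda_0=\din$, and $\lambda_1$ from the stationarity condition --- is exactly what keeps the first-order analysis tractable and makes both coefficients explicit.
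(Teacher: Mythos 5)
Your proposal follows essentially the same route as the paper's proof: both start from the closed form of Corollary~\ref{thm:l2sol}, fix $\lambda_0=\din$ by matching the $\epsilon=0$ error to the Bayes-optimal one, exploit the stationarity $\partial_\lambda\Egen^{\ell_2}(\din,0)=0$ (your envelope argument is precisely the paper's observation that $\partial_{\lambda_1}{\Egen}_{1}\propto\lambda_0-\din$ vanishes), and extract $\lambda_1$ from the next-order optimality condition, your implicit-function-theorem formula being exactly the order-by-order version of the paper's expansion. The only minor deviation is bookkeeping: the paper states its final result relative to the $\epsilon=0$ Bayes-optimal error ${\Egen^{\mathrm{BO}}}_0$ rather than subtracting the $\epsilon$-dependent BO curve, so your definition of $\kappa$ would additionally require the first-order $\epsilon$-expansion of the BO fixed-point equations, a straightforward extra step.
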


    In the $\epsilon \gtrsim 0.5$ regime, where outliers are dominant, we observe a decrease in the excess generalisation error, which eventually vanishes as $\epsilon$ approaches one. 
    In this limit the majority of the dataset is composed of data generated from a $\beta$-rescaled teacher, so that ERM will be biased to smaller ($0< \beta < 1$) or larger ($\beta > 1$) norms than if trained on a dataset with no outliers. As discussed previously, the optimal $\ell_2$ regularisation will counterbalance as much as possible this effect, aiming at the BO norm.
    In the extreme case where $\beta = 0$, outlier samples provide no meaningful information on the ground truth and the BO estimator equals the null vector $\boldsymbol{w} = 0$. This is compatible with what we observe in the figure, where $\lambdaopt$ diverges as $\epsilon$ approaches one.

    \paragraph{Dependency of the generalisation error on the variance of outliers.}
    In Figure~\ref{fig:figure-2}, right panel, we consider the dependency of the generalisation error as a function of the variance of the outliers $\dout$.
    We start by noticing that all ERM estimators have a generalisation error that grows monotonically with $\dout$, while the BO generalisation error has a non-monotonic behaviour, with a local maximum. This behaviour is also present for $\beta > 0$.
    Moreover, we notice that for large outliers' variance the $\ell_2$ loss performs worse than the more resilient $\ell_1$ and Huber losses, as expected.
    
    \paragraph{Performance difference between Huber and \texorpdfstring{$\ell_2$}{L2} loss}

    We observe an interesting behaviour in comparing the Huber~loss and the $\ell_2$~loss. Observing Figure~\ref{fig:figure-2}, right panel,
    we observe that as $\dout$ decreases the scale parameter value $a_{\rm opt}$ of the Huber~loss diverges to infinity at a finite value of $\dout$.  
    When $a \to +\infty$ the Huber~loss converges to the $\ell_2$~loss, as confirmed by the fact that their predicted generalisation errors coincides (Figure~\ref{fig:figure-2} right).
    It is remarkable that optimally-regularised Huber~loss is not always strictly better than the simpler $\ell_2$~loss in a whole region of model parameters.

    We explore further this phase transition by plotting the difference between the generalisation error of the optimal $\ell_2$ and Huber~losses as a function of both $\epsilon$ and $\dout$ in Figure~\ref{fig:figure-3}. We remark that the two plots in Figure~\ref{fig:figure-2} correspond respectively to horizontal and vertical slices in this plot.
    Also, we see that the difference in performance between the two losses is relevant just in the region of large $\dout$ and intermediate $\epsilon$. Moreover, focusing on the region of small $\epsilon$, large $\dout$ and large $\alpha$, \textit{i.e.} for really rare and noisy outliers, Figure~\ref{fig:figure-3} suggests that the more samples one has (larger $\alpha$), the less convenient it becomes to use the Huber loss, at fixed value of all other parameters.
    
    \begin{figure}[t!]
        \begin{minipage}[c]{0.43\columnwidth}
            \includegraphics[width=\textwidth]{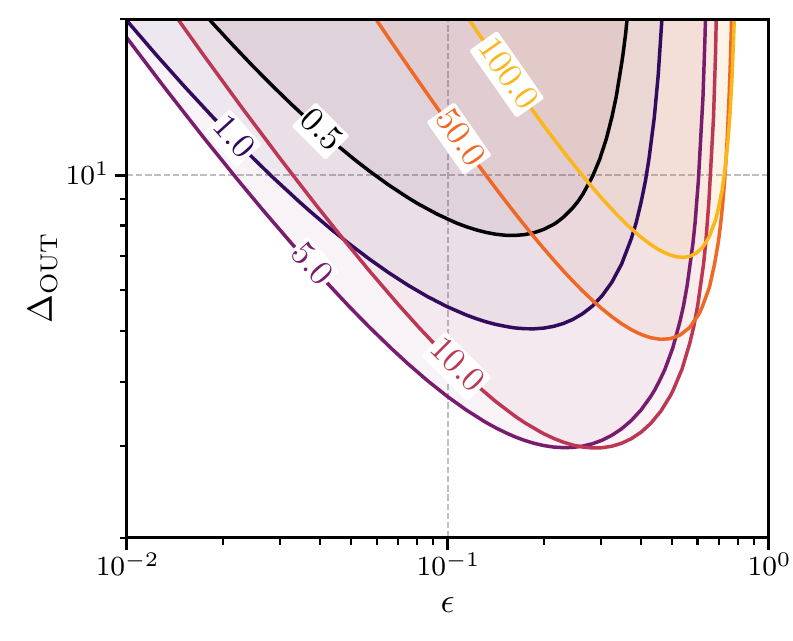}
        \end{minipage}
        \hfill
        \begin{minipage}[c]{0.4\columnwidth}
            \caption{\raggedright
            Difference between the generalisation error of optimally-regularised $\ell_2$ and Huber loss losses as a function of $\epsilon$ and $\dout$ for $\din = 1$, $\beta = 0$. Each shaded region denotes where the two losses have a non-zero difference, \textit{i.e.} where it is better to use the Huber loss, for different values of $\alpha$ as indicated by the labels.
            Focusing on the region of small $\epsilon$ and large $\dout$, \textit{i.e.} when dealing with just a few extremely noisy outliers, the plot suggests that having many samples (large $\alpha$) makes using the Huber loss not advantageous.
            }
        \label{fig:figure-3}
        \end{minipage}
        \hfill
        \vspace{-0.75cm}
    \end{figure}

\section{Concluding remarks}

We studied robust ERM in a model of data with outliers, focusing on three commonly used losses ($\ell_2$-regularised $\ell_2$, $\ell_1$, and Huber loss). The percentage of outliers, variance, and typical norm can be varied, allowing for a thorough exploration of the effect of outliers on the regression task. We derived analytically exact asymptotics for the generalisation and estimation errors of ERM estimators trained on this data model in the high-dimensional limit, where the number of samples $n$ and the dimension $d$ are large with a fixed finite ratio, and we compared them with BO estimators.

For the generalisation error, we found that optimally regularised ERM is asymptotically consistent in the large sample complexity limit at the condition of performing a norm calibration. For the estimation error, we showed that due to a norm calibration mismatch, the consistency of the estimator requires an oracle estimate of the optimal norm or the presence of a cross-validation set not corrupted by the outliers. We also observed an interesting phase transition as a function of the outlier parameters. In a full region of the parameter space $(\epsilon, \dout)$, the optimal scale parameter of the Huber loss $a_{\text{out}}$ diverges, and the error of the ERM with Huber and $\ell_2$ losses coincides.

Our work opens the way to the study of more complicated settings, for instance random features regression \cite{rahimi2007random} or deep learning in the lazy/neural tangent kernel regime \cite{chizat2019lazy}, where one could study effects that are not directly explorable in the vanilla generalised linear regression setting, such as over-parametrization \cite{belkin2020two}, benign overfitting \cite{bartlett2020benign}, implicit regularisation \cite{gunasekar2017implicit}, and their interplay with outliers.

\section*{Acknowledgements} 

We thank Lenka Zdeborová for fruitful discussions and insightful ideas, and Yatin Dandi and Borja Rodriguez Mateos for useful discussions in the early stages of this project. We acknowledge funding from the ERC under the European Union’s Horizon 2020 Research and Innovation Program Grant Agreement 714608-SMiLe, as well as by the Swiss National Science Foundation grants SNSF OperaGOST $200021\_200390$ and SNSF TMPFP2\_210012.

\newpage
\appendix

\section{Mathematical details}

\subsection{Claim: Bayes optimal estimator for the generalisation error}
\label{SM:bo-gen-error}

Define the generalisation error as in eq.~(\ref{eq:gen-error})
 \begin{equation}
        \Egen(\boldsymbol{w}) = \EEb{ (\bar{\boldsymbol{x}}, \bar{y} | \wstar) }{ \qty(\bar{y} - \frac{\bar{\boldsymbol{x}}\cdot \boldsymbol{w}}{\sqrt{d}} )^2} \,, 
    \end{equation}
where the average is over a sample-label pair $(\bar{\boldsymbol{x}}, \bar{y})$ generated with the same distribution as the training set, namely with the same teacher vector $\wstar$.
The posterior-averaged generalisation error is
\begin{equation}
    \mathcal{R}(\boldsymbol{w}) = \EEb{\wstar | \mathcal{D}}{
        \EEb{ (\bar{\boldsymbol{x}}, \bar{y}) | \wstar}{
            \qty(\bar{y} - \frac{\bar{\boldsymbol{x}}\cdot \boldsymbol{w}}{\sqrt{d}} )^2
        } 
    }
\end{equation}
where $\mathcal{D}$ is the observed training dataset, $\EEb{\wstar | \mathcal{D}}{\cdot}$ is the posterior average and $(\bar{\boldsymbol{x}}, \bar{y})$ is a sample label pair.
Differentiating and setting the derivative to zero (in order to minimise the posterior-averaged generalisation error) gives that the minimiser $\what$, \textit{i.e.} the BO estimator, satisfies 
\begin{equation}
    \what^{\rm BO}_{\rm gen}( \mathcal{D}) = \EEb{ \boldsymbol{w} | \mathcal{D} }{ \EEb{ \bar{\boldsymbol{x}}, \bar{y} | \boldsymbol{w} }{ \bar{y} \, \bar{\boldsymbol{x}} } } \,,
\end{equation}
which for our specific choice of data model reduces to
\begin{equation}
     \what^{\rm BO}_{\rm gen}( \mathcal{D}) = \qty(1-\epsilon + \beta \epsilon) \EEb{ \boldsymbol{w} | \mathcal{D}}{ \boldsymbol{w} } \, .
\end{equation}

\subsection{Theorem 4.1: self-consistent equations for the errors of ERM}
\label{SM:thm-ERM}

The more general proof of the form of the self-consistent equations can be found in \cite{Loureiro_2022}. Our case corresponds to taking $\boldsymbol{\varphi}_s(\boldsymbol{x}) = \boldsymbol{\varphi}_t(\boldsymbol{x}) = \boldsymbol{x}$ which is equivalent to the setting  presented in \cite{aubin2020generalisation}, of which we will follow the notation.

In the following we will refer to the loss function as $\mathcal{L}(\cdot,\cdot)$ and to the regularisation function as $r(\cdot)$. We then define the general quantities that will appear in the fixed point equations, they are:
\begin{equation}\label{eq.app-ingredients}
\begin{aligned}
    \Zoutstar(y,\omega,V) &= \mathbb{E}_{z\sim\mathcal{N}(\omega,V)}\left[\pout{y | z}\right] \, , \\
    \foutstar(y,\omega,V) &= \partial_\omega \log (\mathcal{Z}_{\mathrm{out}^*}(y,\omega,V)) \, ,\\
    \fout(y,\omega,V) &= (\mathcal{P}_V[\mathcal{L}(y,\cdot)](\omega) - \omega) / V \, ,\\
    \mathcal{Z}_{\mathrm{w}^*}(\gamma, \Lambda) & =\mathbb{E}_{\omega \sim P_{\mathbf{w}^*}}\left[e^{-\frac{1}{2} \Lambda \omega^2+\gamma \omega}\right] \,, \\
    f_{\mathrm{w}^*}(\gamma, \Lambda) &=\partial_\gamma \log \mathcal{Z}_{\mathrm{w}^*}(\gamma, \Lambda) \, ,\\
    f_{\mathrm{w}}(\gamma, \Lambda) &= \mathcal{P}_{\Lambda^{-1}}[r(.)]\left(\Lambda^{-1} \gamma\right),\\
\end{aligned}
\end{equation}
where $\omega$ in the definition of $\mathcal{Z}_{\mathrm{w}^*}(\gamma, \Lambda)$ is sampled with the teacher weight distribution and $\mathcal{P}_V[f](x)$ is the proximal operator defined as follows
\begin{equation}
    \mathcal{P}_V[f(y,\cdot)](x) = \arg\min_z \left[f(y,z) + \frac{1}{2V}(z-x)^2  \right]\,.
\end{equation}
The self-consistent equations in general form then read 
\begin{equation}\label{eq:fixed-point-equations-full-generality}
\begin{aligned}
    m &=\mathbb{E}_{\xi}\left[\mathcal{Z}_{\mathrm{w}^{\star}}(\sqrt{\hat{\eta}} \xi, \hat{\eta}) f_{\mathrm{w}^{\star}}(\sqrt{\hat{\eta}} \xi, \hat{\eta}) f_{\mathrm{w}}\left(\hat{q}^{1 / 2} \xi, \hat{\Sigma}\right)\right], \\
    q &=\mathbb{E}_{\xi}\left[\mathcal{Z}_{\mathrm{w}^{\star}}(\sqrt{\hat{\eta}} \xi, \hat{\eta}) f_{\mathrm{w}}\left(\hat{q}^{1 / 2} \xi, \hat{\Sigma}\right)^{2}\right], \\
    \Sigma &=\mathbb{E}_{\xi}\left[\mathcal{Z}_{\mathrm{w}^{\star}}(\sqrt{\hat{\eta}} \xi, \hat{\eta}) \partial_{\gamma} f_{\mathrm{w}}\left(\hat{q}^{1 / 2} \xi, \hat{\Sigma}\right)\right]\,, \\
    \hat{m} &=\alpha\int \mathbb{E}_{\xi}\left[\mathcal{Z}_{\mathrm{out}^{\star}}(y, \sqrt{\eta} \xi, (1-\eta)) \cdot f_{\mathrm{out}^{\star}}\left(y, \sqrt{\eta} \xi, (1-\eta)\right) f_{\mathrm{out}}\left(y, q^{1 / 2} \xi, \Sigma\right)\right]\,dy, \\
    \hat{q} &=\alpha \int\mathbb{E}_{\xi}\left[\mathcal{Z}_{\mathrm{out}^{\star}}\left(y, \sqrt{\eta} \xi, (1-\eta)\right) f_{\mathrm{out}}\left(y, q^{1 / 2} \xi, \Sigma\right)^{2}\right]dy\,, \\
    \hat{\Sigma}&=-\alpha \int\mathbb{E}_{ \xi}\left[\mathcal{Z}_{\mathrm{out}^{\star}}\left(y, \sqrt{\eta} \xi, (1-\eta)\right) \partial_{\omega} f_{\mathrm{out}}\left(y, q^{1 / 2} \xi, \Sigma\right)\right] dy\,.
\end{aligned}
\end{equation}
where we have denoted with $\mathbb{E}_\xi$ the expectation value over a standard Gaussian variable $\xi \sim  \mathcal{N}(0,1)$, and we defined $\eta = m^2 / q$ and $\hat\eta = \hat m^2/\hat q$.

In the main text we fixed the regularisation function to the $\ell_2$ regularisation, \textit{i.e.} $r(\mathbf{w}) = \frac{1}{2} \norm{\mathbf{w}}^2_2$ and the teacher weights to be extracted from the distribution $\wstar\sim\mathcal{N}(0, \mathbbm{1}_d)$. 
The quantities $\Zwstar$ and $\fwstar$ can be written as
\begin{equation}
    \mathcal{Z}_{\mathrm{w}^*}(\gamma, \Lambda) = \frac{e^{\frac{\gamma^2}{2(\Lambda+1)}}}{\sqrt{\Lambda+1}}\,,\quad
    f_{\mathrm{w}^*}(\gamma, \Lambda) = \frac{\gamma}{\Lambda + 1}\,.
\end{equation}

Also we have that 
\begin{equation}
    \Zw^{\ell_2,\lambda} (\gamma, \Lambda) = \frac{e^{\frac{\gamma^2}{2(\lambda + \Lambda)}}}{\sqrt{\lambda + \Lambda}}\,, \quad \fw^{\ell_2,\lambda} (\gamma,\Lambda) = \frac{\gamma}{\lambda+\Lambda}\,.
\end{equation}

This allows to write the first three self consistent equations as:
\begin{equation}
    m = \frac{\hat m}{\lambda + \hat\Sigma}\,, \quad q = \frac{\hat m^2 + \hat q}{(\lambda + \hat\Sigma)^2}\,, \quad \Sigma = \frac{1}{\lambda+\hat \Sigma}\,.
\end{equation}
As explained in \cite{aubin2020generalisation} these first three equations only depend on the regularisation term and they will be thus fixed during our analysis.

Thus, to compute the performance of ERM with a given loss, we just need to compute $\Zoutstar, f_{\mathrm{out}^*}$ and $f_{\mathrm{out}}$, and solve the corresponding system of self-consistent equations. For our noise model $P_{\rm out}$ we have that
\begin{equation}
    \Zoutstar(y,\omega,V) = \mathcal{N}_y(\omega, V + \din) + \mathcal{N}_y(\beta \omega, V + \beta^2 \dout) \,,
\end{equation}
and $\foutstar$ is given from the definition in eq.~(\ref{eq.app-ingredients}).

In the following subsections we specialise these equations to the $\ell_2$, $\ell_1$ and Huber~losses by computing $\fout$, which is the only loss-dependent part in the equations.

\paragraph{$\ell_2$ loss case.}
In this case, the $\fout$ in eq.~(\ref{eq.app-ingredients}) reads 
\begin{equation}\label{eq:app-fout-ell-2}
    f_{\mathrm{out}}^{\ell_2}(y, \omega, V)=\frac{y-\omega}{1+V}\,.
\end{equation}
Then, the remaining integrals can be solved explicitly as they are all expectation of Gaussian variables, giving
\begin{equation}\label{eq:l2-decorrelated-eq}
    \begin{aligned}
        m &= \frac{\hat m}{\lambda + \hat\Sigma}\,, \quad q = \frac{\hat m^2 + \hat q}{(\lambda + \hat\Sigma)^2}\,, \quad \Sigma = \frac{1}{\lambda+\hat \Sigma}\,, \\
        \hat m &= \frac{\alpha}{1 + \Sigma}(1 + \epsilon(\beta - 1)) \,, \quad \hat{\Sigma} = \frac{\alpha}{1 + \Sigma} \,, \\
        \hat q &= \frac{\alpha}{(1 + \Sigma)^2} \qty[ 1 + q + \deff + \epsilon \qty(\beta^2 - 1) - 2 m \qty(1+ \epsilon (\beta - 1))] \,.
    \end{aligned}
\end{equation}
For this specific case we have that one can find an explicit form for the order parameters which is presented in Appendix~\ref{SM:corollary-L2}.

\paragraph{Useful Integrals.}
To perform the double integration over $(y, \xi)$ for the last three equations in eq.~(\ref{eq:fixed-point-equations-full-generality}) we will use a linear change of variables. The relevant change of variables is $(u,v) = (y-\beta \sqrt{\eta} \xi, y - \sqrt{q} \xi)$. The associated Jacobian is constant and equal to $\abs{\beta \sqrt{\eta} - \sqrt{q}}$. 
With this change of variable, we obtain explicit expressions for the following integrals, that will be used below.
The first integral integral is
\begin{equation}
\begin{aligned}
    \mathcal{I}_{1}&(\Delta, \beta, \kappa, \Lambda, a, m, q) = \int_{\mathbb{R}^2} \dd{\xi} \dd{y} e^{-\frac{1}{2} \xi^2-\frac{1}{2} \frac{\qty(\beta \sqrt{\eta} \xi - y)^2}{\Delta + \beta^2 (1- \eta)}} \qty(\beta \sqrt{\eta} \xi - y)
    \begin{cases}
        a & \text{if } y- \sqrt{q} \xi > \kappa \\
        \frac{y- \sqrt{q} \xi}{\Lambda} & \text{if } \abs{y- \sqrt{q} \xi} < \kappa \\
        -a & \text{if } y- \sqrt{q} \xi < -\kappa \\
    \end{cases} \\
    &= \frac{2\pi}{\Lambda}\qty(\qty(1-\eta)\beta^2 + \Delta)^\frac{3}{2} \qty[ \frac{\sqrt{2}(\kappa- a\Lambda)e^{-\frac{\kappa ^2}{2 \qty(\beta ^2+\Delta -2 \beta  m+q)} }}{\sqrt{\pi (q-2m\beta+\beta^2+\Delta)}} -\erf\qty(\kappa / \sqrt{2(\beta ^2+\Delta -2 \beta  m+q)}) ]\,.
\end{aligned}
\end{equation}
The second integral is:
\begin{equation}
\begin{aligned}
    \mathcal{I}_{2}&(\Delta, \beta, \kappa, \Lambda, a, m, q) = \int_{\mathbb{R}^2} \dd{\xi} \dd{y} e^{-\frac{1}{2} \xi^2-\frac{1}{2} \frac{\qty(\beta \sqrt{\eta} \xi - y)^2}{\Delta + \beta^2 (1- \eta)}} \begin{cases}
        \qty(\frac{y- \sqrt{q} \xi}{\Lambda})^2 & \text{if } \abs{y- \sqrt{q} \xi} < \kappa \\
        a^2 & \text{otherwise} \\
    \end{cases} \\
    &=\frac{2\pi}{\Lambda^2}\sqrt{\qty(1-\eta)\beta^2 + \Delta} \Bigg[a^2 - \kappa \sqrt{2(q-2m\beta + \beta^2 + \Delta) / \pi} \, e^{-\frac{\kappa ^2}{2 \qty(\beta ^2+\Delta -2 \beta  m+q)}} \\
    &\qquad + \qty(q-2m\beta + \beta^2 + \Delta - a^2) \erf\qty( \kappa/\sqrt{2(\beta ^2+\Delta -2 \beta  m+q)} ) \Bigg] \, .
\end{aligned}
\end{equation}
The third  integral is 

\begin{equation}
\begin{aligned}
    \mathcal{I}_{3}&(\Delta, \beta, \kappa, \Lambda, a, m, q) = \int_{\RR^2}\dd{\xi}\dd{y} e^{-\frac{1}{2}\xi^2 -\frac{1}{2} \frac{\qty(y-\beta \sqrt{\eta} \xi)^2}{\Delta + \beta^2 (1- \eta)}} \mathbb{I}_{\abs{y-\sqrt{q}\xi} < \kappa} \\
    &=2\pi\sqrt{\qty(1-\eta)\beta^2 + \Delta} \erf\qty(\kappa / \sqrt{2(\beta ^2+\Delta -2 \beta  m+q)}) / \Lambda \,,
\end{aligned}
\end{equation}

Using these integrals
we can write explicitly the self-consistent equations, leading to the equations presented in Theorem \ref{thm:main-results}.
\paragraph{$\ell_1$ loss case.}
For the $\ell_1$ loss, the $\fout$ in eq.~(\ref{eq.app-ingredients}) reads 
\begin{equation}
    \fout^{\ell_1}(y, \omega, V) = 
    \begin{cases}
        \frac{1}{V}\qty(y-\omega) & \text{if } |\omega - y| \leq V \\ 
        \operatorname{sign}(\omega-y) & \text{otherwise }
    \end{cases}\,.
\end{equation}
With the integration technique explained above we have that the explicit form of the self-consistent equations is
\begin{equation}
    \begin{aligned}
        \hat m &= \frac{\alpha}{\Sigma} \qty[(1-\epsilon) \erf\qty(\chiin) + \beta \epsilon\erf\qty(\chiout)] 
        \, , \\
        \hat q &= \frac{\alpha}{\Sigma ^2} \Bigg[(1-\epsilon) \left(\zetain - \mu ^2\right) \erf\qty(\chiin) + \epsilon \left(\zetaout-\mu ^2\right) \erf\qty(\chiout)
        \\
        &\quad 
        - \mu  \sqrt{\frac{2}{\pi }} \left( (1-\epsilon ) \sqrt{\zetain} \, e^{-\chiin^2}\right. + \left.\epsilon \sqrt{\zetaout} \, e^{-\chiout^2} \right) + \mu^2\Bigg] \, , \\
        \hat \Sigma &= \frac{\alpha}{\Sigma }  \qty[(1-\epsilon ) \erf\qty(\chiin) + \epsilon \erf\qty(\chiout)] \,,
    \end{aligned}
\end{equation}
where, as in the main text, we defined $\zetain = \din -2 m+q+1$, $\zetaout = \dout+\beta^2+q - 2\beta m$ and
\begin{equation}
    \mu^{\ell_1} = \nu^{\ell_1} = \Sigma
    \, , \quad
    \chiin^{\ell_1} = \Sigma / \sqrt{2 \zetain}
    \, , \quad
    \chiout^{\ell_1} = \Sigma / \sqrt{2 \zetaout}
    \, ,
\end{equation}

\paragraph{Huber loss case.} For the Huber loss, the $\fout$ in eq.~(\ref{eq.app-ingredients}) reads 
\begin{equation}
    \fout^{\mathrm{Huber}}(y, \omega, V) = 
    \begin{cases}
        -a & \text{if } y-\omega>a(V+1) \\ 
        \frac{y-\omega}{V+1} & \text{if }|y-\omega|<a(V+1) \\ 
        a & \text{otherwise}
    \end{cases} \,.
\end{equation}

we have that the integrals in eq.~(\ref{eq:fixed-point-equations-full-generality}) become
\begin{equation}
    \begin{aligned}
        \hat m &= \frac{\alpha}{\Sigma + 1} \qty[(1-\epsilon) \erf\qty(\chiin) + \beta \epsilon\erf\qty(\chiout)] 
        \, , \\
        \hat q &= \frac{\alpha}{\qty(\Sigma + 1) ^2} \Bigg[(1-\epsilon) \left(\zetain - \mu ^2\right) \erf\qty(\chiin) + \epsilon \left(\zetaout-\mu ^2\right) \erf\qty(\chiout)
        \\
        &\quad 
        - \mu  \sqrt{\frac{2}{\pi }} \left( (1-\epsilon ) \sqrt{\zetain} \, e^{-\chiin^2}\right. + \left.\epsilon \sqrt{\zetaout} \, e^{-\chiout^2} \right) + \mu^2\Bigg] \, , \\
        \hat \Sigma &= \frac{\alpha}{\Sigma + 1 }  \qty[(1-\epsilon ) \erf\qty(\chiin) + \epsilon \erf\qty(\chiout)] \,
    \end{aligned}
\end{equation}
where $\zetain = \din -2 m+q+1$, $\zetaout = \dout+\beta^2+q - 2\beta m$
and
\begin{equation}
    \nu^{\text{Huber}} = \Sigma + 1
    \, , \quad
    \mu^{\text{Huber}} = a\nu^{\text{Huber}} 
    \, , \quad
    \chiin^{\text{Huber}} = \mu^{\text{Huber}} / \sqrt{2\zetain} 
    \, , \quad
    \chiout^{\text{Huber}} = \mu^{\text{Huber}} / \sqrt{2\zetaout}
    \, .
\end{equation}

\subsubsection[Corollary: explicit form of the error for L2]{Corollary: explicit form of the error for \texorpdfstring{$\ell_2$}{Lg}}
\label{SM:corollary-L2}

\begin{corollary}\label{thm:l2sol}
    In the Ridge~regression case, for a generic value of $\beta$, with fixed regularisation $\lambda \geq 0$ the generalisation error has the explicit form
    \begin{equation}\label{eq:explicit-solution-l2-main}
        \begin{aligned}
        \Egen =&\ 1 + (\beta^2 - 1)\epsilon + q - 2m (1 + (\beta - 1) \epsilon) \,, \\ 
        m =&\ \frac{2 \alpha  \Gamma }{p +t+1} \,,\\
        q =&\ \frac{4 \alpha  \qty[\alpha  \Gamma ^2 (p +t-3)+\Lambda  (p +t+1)]}{(p+t+1) \qty[p^2-2 c +t^2+2 t (p+1)+1]} \,,
        \end{aligned}
    \end{equation}
    where $\deff = (1-\epsilon)\din + \epsilon \dout$, $p=\alpha+\lambda$, $c=\alpha-\lambda$, $t =\sqrt{ (p-1)^2 + 4\lambda}$, $\Gamma = 1+\epsilon(\beta - 1)$ and $\Lambda = 1 + \deff +\epsilon (\beta^2 - 1)$.
\end{corollary}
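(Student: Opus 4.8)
The plan is to solve the six $\ell_2$ self-consistent equations of eq.~(\ref{eq:l2-decorrelated-eq}) by successive elimination, exploiting the fact that for Ridge~regression the channel-side equations are rational rather than transcendental. The error formulas themselves are not the hard part: evaluating eq.~(\ref{eq:gen-error}) in the high-dimensional limit only requires the joint Gaussianity of the teacher and student pre-activations $(\boldsymbol{x}\cdot\wstar/\sqrt{d},\, \boldsymbol{x}\cdot\boldsymbol{w}/\sqrt{d})$, whose covariance is fixed by the overlaps $m$ and $q$; expanding the square and averaging over the two-component output channel reproduces the stated expression for $\Egen$ in terms of $m$ and $q$. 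Thus the substance of the corollary is producing closed forms for $m$ and $q$.

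First I would decouple the variance channel. Since $\Sigma = (\lambda + \hat\Sigma)^{-1}$ and $\hat\Sigma = \alpha/(1+\Sigma)$, eliminating $\hat\Sigma$ gives the single quadratic $\lambda\Sigma^2 + (p-1)\Sigma - 1 = 0$ with $p = \alpha + \lambda$. Its roots have product $-1/\lambda < 0$, so exactly one is positive; this is the physical branch (the minimiser of a convex risk must have $\Sigma>0$). Selecting it and rationalising the numerator yields the compact form $\Sigma = 2/(t + p - 1)$ with $t = \sqrt{(p-1)^2 + 4\lambda}$, which also has the virtue of being manifestly regular as $\lambda \to 0^+$. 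From here $\Sigma/(1+\Sigma) = 2/(t+p+1)$ follows immediately.

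Next, because $\lambda + \hat\Sigma = \Sigma^{-1}$, the first two equations collapse to $m = \hat m\,\Sigma$ and $q = (\hat m^2 + \hat q)\,\Sigma^2 = m^2 + \hat q\,\Sigma^2$. Substituting $\hat m = \alpha\Gamma/(1+\Sigma)$ with $\Gamma = 1 + \epsilon(\beta-1)$ and the identity above directly gives $m = 2\alpha\Gamma/(p+t+1)$. For $q$ the key observation is that the bracket multiplying $\hat q$ is exactly $q + \Lambda - 2m\Gamma$ with $\Lambda = 1 + \deff + \epsilon(\beta^2-1)$; writing $A := \alpha\,\Sigma^2/(1+\Sigma)^2 = 4\alpha/(p+t+1)^2$, the self-overlap equation becomes the linear relation $q = m^2 + A(q + \Lambda - 2m\Gamma)$. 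Solving for $q$ and using $m^2 = \alpha\Gamma^2 A$ together with $2m\Gamma = 4\alpha\Gamma^2/(p+t+1)$ reduces the numerator to $A[\alpha\Gamma^2(p+t-3) + \Lambda(p+t+1)]/(p+t+1)$ and leaves the factor $(p+t+1)^2 - 4\alpha$ in the denominator.

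The remaining work is purely algebraic bookkeeping: one rewrites $A/(1-A) = 4\alpha/[(p+t+1)^2 - 4\alpha]$ and recasts the denominator into the stated form $p^2 - 2c + t^2 + 2t(p+1) + 1$, which after expanding $(p+t+1)^2$ reduces to the single identity $p + c = 2\alpha$, immediate from $p = \alpha+\lambda$ and $c = \alpha-\lambda$. I expect the main obstacle to be organisational rather than conceptual --- keeping the sign choice for the root of $\Sigma$ consistent, carrying the $\lambda\to 0^+$ limit through the rationalised expressions so the ridgeless case is recovered smoothly instead of as a $0/0$ indeterminacy, and using $t^2 = (p-1)^2 + 4\lambda$ to clear every occurrence of $t^2$ so that the final expressions depend on $t$ only linearly.
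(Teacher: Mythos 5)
Your proposal is correct and follows essentially the same route as the paper's proof: solve the decoupled $(\Sigma,\hat\Sigma)$ pair first (your quadratic $\lambda\Sigma^2+(p-1)\Sigma-1=0$ with the positive root is exactly the paper's $\Sigma=(1-\alpha-\lambda+t)/(2\lambda)$, just rationalised), then substitute into the $m,\hat m$ equations, and finally treat the $q,\hat q$ pair as a linear equation in $q$. Your intermediate steps --- the identification of the bracket as $q+\Lambda-2m\Gamma$, the factor $A=4\alpha/(p+t+1)^2$, and the denominator identity $p+c=2\alpha$ --- all check out and reproduce the stated closed forms.
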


\begin{proof}
The form for the excess generalisation error is the same as in Theorem~\ref{thm:main-results}, so we look for a solution for the self-consistent equations, eqs.~(\ref{eq:l2-decorrelated-eq}).
The equations can be simplified if one defines $\Gamma = 1 + \epsilon(\beta - 1)$ and $\Lambda = 1 + \deff + \epsilon \qty(\beta^2 - 1)$ with $\deff = (1-\epsilon) \din + \epsilon \dout$. To solve them we first notice that the equations for $\Sigma$ and $\hat{\Sigma}$ only depend on themselves and thus can be solved, getting:
\begin{equation}\label{eq:sigma-sigma-hat-ridge}
    \Sigma = \frac{1-\alpha-\lambda+t}{2 \lambda}\,,\quad \hat{\Sigma} = \frac{-1+\alpha-\lambda+t}{2} \, ,
\end{equation}
where here and in the following we have that $t=\sqrt{4 \lambda+(-1+\alpha+\lambda)^2}$. 
Plugging these values in the equations for $m$ and $\hat{m}$ gives
\begin{equation}
    m = \frac{2 \alpha  \Gamma }{\alpha +\lambda +t+1} \,, \quad \hat{m} = -\frac{2 \alpha  \Gamma  \lambda }{-\alpha +\lambda +t+1}
\end{equation}
Finally, plugging everything in the equations for $q$ and $\hat{q}$, we obtain
\begin{equation}
    \begin{aligned}
        q =& \frac{4 \alpha  \left(\alpha  \Gamma ^2 (p+t-3)+\Lambda 
   (p+t+1)\right)}{(p+t+1) \left(-2 c+(p+t)^2+2 t+1\right)} \\
        \hat{q} =& \frac{4 \alpha  \lambda ^2}{(2 \lambda -p+t+1)^2} \left(\frac{4 \alpha  \left(\alpha  \Gamma ^2 (p+t-3)+\Lambda  (p+t+1)\right)}{(p+t+1) \left(-2 c+(p+t)^2+2 t+1\right)}+\Lambda -\frac{4 \alpha  \Gamma ^2}{p+t+1}\right)
   \end{aligned}
\end{equation}
where we defined $p = \alpha + \lambda$ and $c = \alpha - \lambda$. Upon rearranging of terms we have the same form for $q$ as in the Corollary.
\end{proof}

\subsection{The rate of BO is \texorpdfstring{$1/\alpha$}{-1}}
\label{SM:rate-BO}

The equations for the BO are presented in the main text eq.~(\ref{eq:bayes-opt-fpes}). If we suppose that $q_{\mathrm{b}} = 1 - c / \alpha + \mathcal{O}(\alpha^{-2})$ and $\hat{q}_{\mathrm{b}} = \hat{c} \alpha + \mathcal{O}(1)$ we have that
\begin{equation}
    q_{\mathrm{b}} = \frac{\hat{q}_{\mathrm{b}}}{1 + \hat{q}_{\mathrm{b}}} = 1 - \frac{1}{\hat{c} \alpha}
\end{equation}
thus the relation between the expansion coefficients which is $\hat{c} = 1 / c$. The coefficient $\hat{c}$ is found from the first order expansion of the self-consistent equations and it is equal to
\begin{equation}
    \hat{c} = \int \dd{\xi} \dd{y} e^{-\frac{1}{2} \xi^2}
    \frac{
        \left(\frac{\epsilon  (\beta  \xi -y) e^{-\frac{(y-\beta  \xi )^2}{2 \dout }}}{\sqrt{2 \pi} \dout }+\frac{(\epsilon -1) (y-\xi ) e^{-\frac{(y-\xi )^2}{2 \din }}}{\sqrt{2 \pi } \din}\right)^2
    }{
        \frac{\epsilon  e^{-\frac{(y-\beta  \xi )^2}{2 \dout }}}{\sqrt{2 \pi } \sqrt{\dout }}-\frac{(\epsilon -1) e^{-\frac{(y-\xi )^2}{2 \din }}}{\sqrt{2 \pi }\sqrt{\din }}
    }
\end{equation}

Note that even if we supposed a faster decay for the second term of $q_{\mathrm{b}}$ we would find anyway that $\hat{q}_{\mathrm{b}} \propto \alpha$.

\subsection{Theorem 5.1: large sample size limit for the self-consistent equations for ERM}
\label{SM:thm-large-alpha}

In this Appendix we derive the asymptotic value of the generalisation and estimation errors of ERM for $\alpha \to \infty$. We will first consider the case for the $\ell_2$ loss where the explicit solution of the order parameters gives a more direct approach to study the above mentioned limit. After we will consider the $\ell_1$ and Huber~loss with a framework together.

\paragraph{$\ell_2$ loss.}
In this case we can leverage Corollary~\ref{thm:l2sol} to get the large $\alpha$ limit of the order parameters and thus of the generalisation error. We notice that the solution presented in Corollary~\ref{thm:l2sol} are formal solution that do not suppose anything on the scalings in $\alpha$ of the other parameters, \textit{i.e.} $\lambda$. We will consider the case where $\lambda$ is constant in $\alpha$ and see that this suffices to have BO rates for the generalisation error. From eq.~(\ref{eq:explicit-solution-l2-main}) we have by expanding
\begin{equation}
    q = \Gamma ^2+\frac{\Lambda -\Gamma ^2 (2 \lambda +1)}{\alpha }+\frac{\Gamma ^2 \left(3 \lambda ^2-1\right)-2 \lambda  \Lambda
   +\Lambda }{\alpha ^2}+\mathcal{O}\qty(\frac{1}{\alpha^3}) \,,
\end{equation}
and
\begin{equation}
    m = \Gamma -\frac{\Gamma  \lambda }{\alpha }+\frac{\Gamma  (\lambda -1) \lambda }{\alpha ^2}+ \mathcal{O}\qty(\frac{1}{\alpha^3})\,,
\end{equation}
with the same definition of $\Gamma = 1 + \epsilon (\beta - 1)$ and $\Lambda = 1 + \deff + \epsilon (\beta^2 - 1)$. By the knowledge of the order parameters and the formula for the generalisation and estimation error one has that
\begin{equation}
    \begin{aligned}
        &\Egen = \frac{\deff -(\beta -1)^2 (\epsilon -1) \epsilon }{\alpha } \\
        &+\frac{2 (\beta -1)^2 \lambda  (\epsilon -1) \epsilon +((\beta -1) \lambda  \epsilon +\lambda )^2-\left((\beta -1)^2 (\epsilon -1) \epsilon \right)-2 \deff  \lambda +\deff }{\alpha ^2} + \mathcal{O}(\alpha^{-3}) \,,\\
        &\Eest = (\beta -1)^2 \epsilon ^2 + 
        \frac{
            (\beta -1) \epsilon  ((2 \lambda +1) (\epsilon -1) -\beta  (2 \lambda  \epsilon +\epsilon-1))+\deff
        }{\alpha} \\
        & + \frac{\lambda  ((\beta -1) \epsilon  (\lambda  (3 (\beta -1) \epsilon +4)-2 \beta )+\lambda)-\left((\beta -1)^2 (\epsilon -1) \epsilon \right)-2 \deff  \lambda +\deff }{\alpha ^2} + \mathcal{O}(\alpha^{-3})\,.
    \end{aligned}
\end{equation}
Notice that in the estimation error the value of the plateu depends only on the noise parameters $\epsilon$ and $\beta$ but not on the noise variances.

\paragraph{$\ell_1$ and Huber loss}
We make the following ansatz for the leading order form of the following parameters in $\alpha$
\begin{equation}\label{eq.ansatz}
    m = m_0\,, \qquad q=q_0\, , \qquad \Sigma= \frac{\Sigma_0}{\alpha} \, , \qquad \hat{m}=\hat{m}_0\alpha \, , \qquad \hat{q}=\hat{q}_0 \alpha  \, , \qquad \hat{\Sigma} = \hat{\Sigma}_0 \alpha \, , 
\end{equation}
where $m_0, q_0, \Sigma_0, \hat{m}_0, \hat{q}_0$ and $\hat{\Sigma}_0$ are constants not depending on $\alpha$ in the limit. We use the following ansatz for $\lambda$ as a function of $\alpha$
\begin{equation}
    \lambda = \lambda_0 + \lambda_1 \alpha\,, 
\end{equation}
again with $\lambda_1$ and $\lambda_2$ finite in the large $\alpha$ limit.
We now show that this ansatz leads to a system of self-consistent equations involving only  $m_0, q_0, \Sigma_0, \hat{m}_0, \hat{q}_0$ and  $\lambda_1,  \lambda_0$ and additionaly find the conditions for which the first order term in $\lambda$ vanishes.
This system characterises at leading order the behaviour of the errors of ERM.

From the loss-independent self-consistent equations presented in Theorem \ref{thm:main-results}
\begin{equation}
    m = \frac{\hat m}{\lambda + \hat\Sigma}\,, \quad q = \frac{\hat m^2 + \hat q}{(\lambda + \hat\Sigma)^2}\,, \quad \Sigma = \frac{1}{\lambda+\hat \Sigma}\,
\end{equation}
we obtain at large $\alpha$ the equations
\begin{equation}\label{eq.q=m2}
    m_0 = \frac{\hat m_0}{\lambda_1 + \hat\Sigma_0}\,, \quad q_0 = \frac{\hat m_0^2}{(\lambda_1 + \hat\Sigma_0)^2}=m_0^2\,, \quad \Sigma_0 = \frac{1}{\lambda_1+\hat \Sigma_0}\,
\end{equation}
Notice that the expressions are independent of $\lambda_0$, which means that for $\lambda$ to have an effect on the generalisation error it needs to have a non-vanishing coefficient for the term linear in $\alpha$, \textit{i.e.} $\lambda_1 \neq 0$.
At leading order, the loss-independent constants defined in Theorem~\ref{thm:main-results} read 
\begin{equation}\label{eq:zeta-in-out-alpha-big}
    \zetainzero = \din + (m_0-1)^2\,,\qquad\zetaoutzero = \dout + (m_0-\beta)^2\, .
\end{equation}

\paragraph{$\ell_1$ loss.}
For the $\ell_1$ loss we have at leading order
\begin{equation}
    \chiin=\chiout=\mu=\nu=\mathcal{O}(1/\alpha)\,,
\end{equation}
giving for the three loss-dependent self-consistent equations
\begin{equation} \label{eq:firstorder_l1}
    \begin{aligned}
        \hat{m}_0 &= \sqrt{\frac{2}{\pi }} \left(\frac{1-\epsilon }{\sqrt{\zetainzero}}+\frac{\beta\epsilon }{\sqrt{\zetaoutzero}}\right) \, ,\\
        \hat{\Sigma}_0 &= \sqrt{\frac{2}{\pi }} \left(\frac{1-\epsilon }{\sqrt{\zetainzero}}+\frac{\epsilon }{\sqrt{\zetaoutzero}}\right) \, ,\\
        \hat{q}_0 &= 1\, .
    \end{aligned}
\end{equation}

\paragraph{Huber loss.}
For the Huber loss we will consider the scale parameter $a$ to be $\alpha$ independent. Thus we have at leading order
\begin{equation} \label{eq:chi_zero}
\chiin=a/\sqrt{2\zetainzero}\,, \quad
\chiout=a/\sqrt{2\zetaoutzero}\,, \quad
\mu=a\,, \quad
\nu=1\, ,
\end{equation}
giving for the three loss-dependent self-consistent equations
\begin{equation} \label{eq:firstorder_huber}
    \begin{aligned}
        \hat m_0 &= (1-\epsilon) \erf\qty(\frac{a}{\sqrt{2\zetainzero}}) +\beta \epsilon \erf\qty(\frac{a}{\sqrt{2\zetaoutzero}})\,, \\ 
        \hat \Sigma_0 &= (1-\epsilon ) \erf\qty(\frac{a}{\sqrt{2\zetainzero}}) + \epsilon \erf\qty(\frac{a}{\sqrt{2\zetaoutzero}}) \, ,
        \\
        \hat{q}_0 &=\ (1-\epsilon) \left(\zetainzero -a^2\right) \erf\left(\frac{a}{\sqrt{2 \zetainzero}}\right) + \epsilon  \left(\zetaoutzero -a^2\right) \erf\left(\frac{a}{\sqrt{2 \zetaoutzero}}\right)\,.
    \end{aligned}
\end{equation}
At leading order, the generalisation and estimation error read
\begin{equation}
\begin{aligned}
    \Egen \,
    &=   1 + \epsilon (\beta^2 - 1) + m_0^2 - 2  m_0  (1 + \epsilon (\beta - 1)) + \deff =\\
    &= (m_0-1)^2 - 2  m_0  \epsilon (\beta - 1) + \epsilon (\beta^2 - 1) + \deff\,, \\ 
    \Eest \,&= 1 + q_0 - 2m_0 = (m_0-1)^2 \,.
\end{aligned}
\end{equation}

\subsubsection{Rates of ERM}
\label{SM:rates-ERM}

Before deriving any conclusion from the set of self-consistent equations that we just derived in the large $\alpha$ limit, it is worth considering what is the scaling of the first subleading term for $m$ and $q$, as it determines the scaling of the first subleading term for the generalisation and estimation errors. 

It is easy to see that a general expansion in integer powers of alpha for the order parameters (so more general of the leading order ansatz of eq. (\ref{eq.ansatz})) still satisfies the self-consistent equations, as the order parameters enter the equations only through $\mathcal{C}^{\infty}$ functions. We just need to check that our functions are indeed smooth. This is contingent on the fact that $\Sigma_0 \neq 0$ and $\hat{\Sigma}_0 + \lambda_1\neq 0$. The first condition can just assumed to be true because of eq. \eqref{eq.q=m2}. Indeed, for $\Sigma_0$ to be zero we need $\hat{\Sigma}_0$ to diverge, which can't be since $\zetainzero>0$, $\zetaoutzero>0$ as long as $\din>0$, $\dout>0$ (equation \eqref{eq:zeta-in-out-alpha-big}). For the second one we can simply notice that as long as $\epsilon\leq1$, $\hat{\Sigma}_0 > 0$.

Thus, the subleading order of $m$ and $q$, and thus of the errors, is at least of order $\mathcal{O}(\alpha^{-1})$.

On the other hand, at the values of the parameters for which we have that the error of ERM converges to the BO performance, we have that the subleading order of the errors, is at most of order $\mathcal{O}(\alpha^{-1})$, which is the BO scaling.

Thus, whenever ERM is consistent, it also achieves BO rates. 
A more detailed computation would be needed to asses whether ERM is identical to BO at order $\mathcal{O}(\alpha^{-1})$, but we believe that this will not be the case in general.

\subsubsection{Angle between ERM minimiser and teacher goes to zero}
\label{SM:thm-large-alpha-angle}

The angle between the student and the teacher weights is given in concentrates in high-dimension onto \cite{aubin2020generalisation}
\begin{equation}\label{eq:angle-teacher-student-ovl}
    \theta_{\boldsymbol{w}^\star,\hat{\boldsymbol{w}}} = \frac{1}{\pi}\arccos\left( \frac{m}{\sqrt{q}} \right)
\end{equation}
but from the loss-independent equations eq. (\ref{eq.q=m2}) we have $q \to m^2$, so that $\theta \to 0$ for large $\alpha$.

\subsubsection{Consistency of generalisation error}
\label{SM:thm-large-alpha-gen}

In order for the generalisation error to be consistent, we need to impose that the excess generalisation error vanishes, \textit{i.e.} we require that
\begin{equation}
    \Egen = (m_0-1)^2 - 2  m_0  \epsilon (\beta - 1) + \epsilon (\beta^2 - 1) + \deff \overset{!}{=} \epsilon (1-\epsilon) (1-\beta)^2 + \deff
\end{equation}
Which results in $m_0 = \Gamma = 1 +\epsilon (\beta - 1)$. 
This imposes a condition on $\lambda_1$ because of eq.~(\ref{eq.q=m2}) which reads
\begin{equation}\label{eq:lambda_opt}
    \lambda_1 = \frac{\hat m_0}{\Gamma} - \hat{\Sigma}_0
\end{equation}

This needs to be compatible with $\lambda\geq 0$, which at large $\alpha$ implies $\lambda_1\geq 0$. Firstly we will consider the $\ell_2$~loss. In this case we have $\hat{m}_0 = \Gamma$ and $\hat\Sigma_0 = 1$, so $\lambda_1=0$ meaning that the value of $\lambda$ doesn't need to diverge with $\alpha$ to have consistency of $\Egen$. This result is compatible to the one obtained by looking at the explicit solution of the order parameters as in Corollary~\ref{thm:l2sol}, where we have found that even with a finite value of $\lambda$ in the limit $\alpha\to\infty$ the generalisation error has optimal rate.

For the other two losses the inequality in eq.~(\ref{eq:lambda_opt}) is equivalent to
\begin{equation}
    (1-\beta)(1-\epsilon)\epsilon\left[ \erf\qty(\frac{a}{\sqrt{2\zetainzero}}) - \erf\qty(\frac{a}{\sqrt{2\zetaoutzero}}) \right] \geq 0
\end{equation}
For $\beta < 1$, considering that $\epsilon \in (0,1)$, remembering the definitions in eq.~(\ref{eq:zeta-in-out-alpha-big}), this condition is equivalent to
\begin{equation}\label{eq:consitency-condition}
    \dout - \din \geq (1-\beta)^2(2\epsilon-1)\,.
\end{equation}
If $\beta > 1$ the condition in eq.~(\ref{eq:consitency-condition}) flips the order of the condition.

We remark that in the limit $a \to \infty$ the equations is always satisfied. This tells us that whenever L2 is better then Huber, the optimisation over $a$ of Huber will lead to $a \to \infty$, \textit{i.e.} the Huber loss becomes the $\ell_2$ loss.

\subsubsection{Consistency of estimation error}
\label{SM:thm-large-alpha-estim}

In order for the estimation error to be consistent, we need to impose that the estimation error vanishes for $\alpha\to\infty$. This is the condition
\begin{equation}
    \Eest = (m_0-1)^2 \overset{!}{=} 0\,,
\end{equation}
which is equivalent to $m_0 = 1$. As in Appendix~\ref{SM:thm-large-alpha-gen} we use eq.~(\ref{eq.q=m2}) and obtain the value of $\lambda_1$ as 
\begin{equation} \label{eq:estim_optimal}
    \lambda_1 = (\beta-1)\epsilon \erf\qty(\frac{a}{\sqrt{2\zetaoutzero}})\,,
\end{equation}
for the Huber loss.
The other two losses can be obtained by setting $a \to \infty$ for the $\ell_2$, and $a \to 0$ with the appropriate rescaling of $\lambda_1$ for $\ell_1$. Both limits do not alter the overall sign of $\lambda_1$, which just depends on $\beta - 1$.
This implies that the estimator is can only be consistent in estimation for $\beta > 1$ since we have the constraint $\lambda \geq 0$.

\subsubsection[Consistency of estimation error for L2 with negative regularisation]{Consistency of estimation error for \texorpdfstring{$\ell_2$}{Lg} with negative regularisation}
\label{SM:thm-large-alpha-L2-negative-reg}

In Appendix~\ref{SM:claim-L2-negative-lambda} we show that the Ridge~regression risk can endure negative regularisation without losing strict convexity for all $\lambda > - (1 - \sqrt{\alpha})^2$. 

Using eq.~(\ref{eq:estim_optimal}), in the limit $a \to \infty$ to recover the $\ell_2$ loss, we see that the condition above for lambda implies, at leading order in $\alpha$ and for the $\ell_2$ loss that
\begin{equation}
    (\beta-1)\epsilon > -1 \, .
\end{equation}

\subsubsection{Oracle norm/label rescaling restores consistency}
\label{SM:thm-large-alpha-rescaling}
We notice that since the angle between the teacher and the student goes to zero as shown in Appendix~\ref{SM:thm-large-alpha-angle}, it suffices to rescale the student optimally to achieve consistency.
This can be achieved either by knowing the norm, or by cross-validation on the appropriate validation set to find the best norm.

Notice that all losses satisfy some kind of homogeneity property, \textit{i.e.}
\begin{equation}
    \losshuber{a} \qty(c\,y, \frac{ \boldsymbol{x} \cdot \boldsymbol{w} }{\sqrt{d}}) = c^2 \, \losshuber{c\,a} \qty(y, \frac{ \boldsymbol{x} \cdot \boldsymbol{w} }{c \sqrt{d}}) \, ,
\end{equation}
\begin{equation}
    \ell_1 \qty(c\,y, \frac{ \boldsymbol{x} \cdot \boldsymbol{w} }{\sqrt{d}}) = c \, \ell_1 \qty(y, \frac{ \boldsymbol{x} \cdot \boldsymbol{w} }{c \sqrt{d}}) \, ,
\end{equation}
\begin{equation}
    \ell_2 \qty(c\,y, \frac{ \boldsymbol{x} \cdot \boldsymbol{w} }{\sqrt{d}}) = c^2 \, \ell_2 \qty(y, \frac{ \boldsymbol{x} \cdot \boldsymbol{w} }{c \sqrt{d}}) \, .
\end{equation}
For this reason, the rescaling of the ERM estimator can be produced equivalently by rescaling the labels of the training set, and the hyperparamters, appropriately.

\subsection{Claim: Ridge~regression risk is still convex for not-too-large negative regularisation}
\label{SM:claim-L2-negative-lambda}

Consider the Ridge~regression risk
\begin{equation}\label{eq:risk-def-ridge}
    \mathcal{R}_{\mathrm{ridge}}(\boldsymbol{w}) = \frac{1}{2} \sum_{i=1}^n \norm{ y_i - \frac{\boldsymbol{x}_i \cdot \boldsymbol{w}}{\sqrt{d}} }^2_2 + \frac{\lambda}{2} \norm{\boldsymbol{w}}^2_2 \, .
\end{equation}
This is a quadratic form in the $d$-dimensional vector $\boldsymbol{w}$, with quadratic part
\begin{equation}\label{eq:risk-ridge-quadratic-part}
    \mathcal{R}_{\mathrm{ridge}}^{\mathrm{quad.}}(\boldsymbol{w})
    = 
    \frac{1}{2} \sum_{a, b=1}^d {w}^a {w}^b \left( \frac{\alpha}{n} \sum_{i=1}^n {x}_i^a {x}_i^b + \lambda \delta_{ab} \right) 
    \overset{\mathrm{def}}{=}
    \frac{1}{2} \sum_{a, b=1}^d {w}^a {w}^b H_{a,b}
    \, ,
\end{equation}
where the Hessian of the quadratic form is the matrix $H = X + \lambda \, \mathrm{Id}$.
The matrix $X$ is a $d \times d$ Marchenko-Pastur random matrix.
Supposing $\alpha  = n/d > 1$, this matrix has minimal eigenvalue concentrating on $(1 - \sqrt{1 / \alpha})^2$, giving minimal Hessian eigenvalue equal to
\begin{equation}
    \text{minimal eigenvalue} = (1 - \sqrt{\alpha})^2 + \lambda \, .
\end{equation}
Thus, the Ridge~regression risk is convex as long as the Hessian is positive definite, meaning
\begin{equation}
    \forall \lambda \, : \,\lambda > - (1 - \sqrt{\alpha})^2 \, .
\end{equation}

\subsection[Theorem 6.1: small outliers' percentage epsilon limit of the error of ERM with L2 loss]{Theorem 6.1: small outliers' percentage \texorpdfstring{$\epsilon$}{Lg} limit of the error of ERM with \texorpdfstring{$\ell_2$}{Lg} loss}
\label{SM:thm-L2-small-eps}

We start from the knowledge of the explicit form of the order parameters in the Ridge~regression case, presented in Corollary~\ref{thm:l2sol}. If one expands $m$ and $q$  up to second order in $\epsilon$ one has that:
\begin{equation}
    m = m_0 + m_1 \epsilon + m_2 \epsilon^2 + \mathcal{O}(\epsilon^3)\,, \: q = q_0 + q_1 \epsilon + q_2 \epsilon^2 + \mathcal{O}(\epsilon^3)\,, \: \lambda = \lambda_0 + \lambda_1 \epsilon + \lambda_2 \epsilon^2 + \mathcal{O}(\epsilon^3)\,,
\end{equation}
with coefficients that are constants in $\epsilon$.
The generalisation error up to second order in $\epsilon$ is
\begin{equation}
\begin{aligned}
    \Egen &= {\Egen}_{0} + {\Egen}_{1} \epsilon + {\Egen}_{2} \epsilon^2 + \mathcal{O}(\epsilon^3) \\
    &= 1 + q + \left(\beta ^2-1\right) \epsilon + \deff - 2 m ((\beta -1) \epsilon +1) \\
    &= (1-2 m_0 + q_0 + \din) + ((\beta -1) (\beta - 2 m_0 + 1) + \dout - \din - 2 m_1 + q_1) \epsilon \\
    & + (1 - 2 (\beta -1) m_1 - 2 m_2) \epsilon^2  + \mathcal{O}\left(\epsilon ^3\right) \\
\end{aligned}
\end{equation}
This shows that the generalisation error has the correct expansion in powers of $\epsilon$. We now need to find the optimal regularisation $\lambda$, and plug it back in in the generalisation error to complete the proof.

To start, we notice that at $\epsilon = 0$ the BO generalisation error can be found explicitly, as the output channel become a single Gaussian, giving
\begin{equation}\label{eq:order-zero-gen-error}
\begin{aligned}
    {\Egen^{\mathrm{BO}}}_0 &= \frac{1}{2} \left(\sqrt{(\alpha +\din +1)^2-4 \alpha }-\alpha + \din + 1\right) \\ 
    {\Egen^{\ell_2}}_0 &=\frac{1}{2} \left(\frac{\lambda_0 (\alpha + \din) + \alpha (\alpha + \din - 2) + \din + \lambda_0 + 1}{\sqrt{(\alpha
   + \lambda_0 - 1)^2 + 4 \lambda_0 }} - \alpha + \din + 1\right) \\ 
\end{aligned}
\end{equation}
where in the second line we compare with the zero-th order term of $\ell_2$ ERM.
The two generalisation errors match for $\lambda_0 = \din$, giving us the zero-th order value of the expansion of the optimal regularisation.

To find the next orders for the optimal regularisation $\lambda$ we compute order-by-order the derivative of the generalisation error with respect to $\lambda$. At first order we have
\begin{equation}
    \pdv{{\Egen}_{1}}{\lambda_1} = \frac{2 \alpha  (\lambda_0-\din )}{\left(\alpha ^2+2 \alpha  (\lambda_0-1)+(\lambda_0+1)^2\right)^{3/2}}=0
\end{equation}
which vanishes as soon as $\lambda_0 = \din$ as computed previously.
Looking at the derivative at second order in $\epsilon$ allows to fix $\lambda_1$ as 
\begin{equation}
    \lambda_1 = \frac{\left(\alpha ^2+2 \alpha  (\lambda_0-1)+(\lambda_0+1)^2\right)
   \left(\beta ^2-2 \beta  (\lambda_0+1)-\din +\dout +2 \lambda_0+1\right)}{\alpha ^2+\alpha  (3 \din -\lambda_0-2)+(\lambda_0+1) (3 \din -2 \lambda_0+1)} \, .
\end{equation}

Thus the optimal order parameter can be written as $\lambdaopt = \din + \lambda_1 \epsilon + \mathcal{O}(\epsilon^2)$ and the value of the error is
\begin{equation}
\begin{aligned}
    \Egen^{\ell_2} &= {\Egen^{\mathrm{BO}}}_{0} + \bigg[ \frac{1}{2\kappa} \Big(2 \alpha ^2 (\beta -1)+\alpha  (\beta  (\beta +2 \din -6)-3 \din +\dout +5) \\
    & + (\din +1) \left(\beta ^2-\din +\dout - 1\right) \Big)  + \frac{1}{2} \left(-2 \alpha (\beta -1) + \beta ^2 + \din - \dout - 1\right) \bigg]\epsilon \\
    &+ \mathcal{O}(\epsilon^2)\,,
\end{aligned}
\end{equation}
with the definitions of eq.~(\ref{eq:order-zero-gen-error}) and
\begin{equation}
\begin{aligned}
    \kappa &= \sqrt{\alpha ^2+2 \alpha  (\din -1) + (\din +1)^2} \,.
\end{aligned}
\end{equation}

\section{Simulations details}

\subsection{Solving the self-consistent equations for ERM and BO}

The self-consistent equations from Proposition~\ref{thm:main-results} are written in a way amenable to be solved via fixed-point iteration. From a random initialisation, we iterate through both the hat and non-hat variable equations until the maximum absolute difference between the order parameters in two successive iterations falls below a tolerance of $10^{-9}$.

To speed-up convergence we use a damping scheme, updating each order parameter at iteration $i$, designated as $x_{i}$, using $x_{i} := x_i \mu + x_{i-1} (1-\mu)$, with $\mu$ as the damping parameter ($\mu \in [0.7, 0.9]$ was found most effective).

The Bayes-optimal (BO) fixed-point equations involve integrals which we could not express explicitly, necessitating a numerical integrator. The integration region must be subdivided into smaller regions to improve accuracy.
In the iteration of the self-consistent equations, the above-mentioned damping trick is employed.

Once convergence is achieved for fixed $\lambda$ and possibly $a$, hyper-parameters are optimised using the Nelder-Mead optimiser, a gradient-free numerical minimisation procedure.

\subsection{Numerical simulations for ERM}
\label{SM:numericalERM}

We obtain numerical simulations for the Empirical Risk Minimisation (ERM) by sampling synthetic datasets using eq.~(\ref{eq.datamodel}). The ERM problem is solved as follows:
\begin{itemize}[noitemsep,wide=0pt]
    \item for $\ell_2$ ERM is performed using the closed form solution. If we define the data matrix $\Phi \in \mathbb{R}^{n,d}$ as $\Phi_{ij} = x_i^j$, then the $\ell_2$ ERM problem is solved by:
    \begin{equation}\label{eq:ridge-inverse-estimator}
        \hat{\boldsymbol{w}} = \qty(\frac{\Phi^\top \Phi}{d} + \lambda \, \mathrm{Id})^{-1} \frac{\Phi^\top \boldsymbol{y}}{\sqrt{d}}
    \end{equation}
    \item For the Huber~loss, ERM is performed using the routine of the Scikit-learn library \cite{scikit-learn}, which uses a L-BFGS optimiser \cite{Liu1989}.
    \item For the $\ell_1$ loss we use its mapping onto Huber with small scale parameter, setting $a=10^{-3}$. To obtain the same result as the Huber regression with regularisation parameter $\lambda$ oen has to consider the $\ell_1$ regression with regularisation parameter $a \lambda$.
\end{itemize}

For the comparison with synthetic data in Figure~\ref{fig:figure-1} we first find the optimal parameters $(\lambda, a)$ for the generalisation error from the explicit solution of the fixed point equations. These values are then plugged into the ERM procedures described above.

To evaluate the estimation error it is pretty straightforward from the definition in eq.~(\ref{eq:estimation-error}). To evaluate the excess generalisation error from the definition of eq.~(\ref{eq:gen-error}), we generated $\ntest$ sample dataset from the same teacher and, given an estimate for the weights $\what$ we evaluated the following
\begin{equation}
    \frac{1}{\ntest} \sum_{i=1}^{\ntest}  \qty(y_i - \frac{\boldsymbol{x} \cdot \what}{\sqrt{d}})^2 - \frac{1}{\ntest} \sum_{i=1}^{\ntest}  \qty(y_i - \qty(1 - \epsilon + \beta \epsilon) \frac{\boldsymbol{x} \cdot \wstar}{\sqrt{d}})^2 \,,
\end{equation}
where the rescaling $(1-\epsilon + \epsilon \beta)$ is the one with minimal generalisation error.

\begin{figure}[t!]
    \centering
    \begin{tikzcd}
    {\fout}_{\mu}^{t} \arrow[rrrr, bend left] \arrow[r] & \gamma_{i}^{t} \arrow[r] \arrow[rd] & \hat{\mathrm{w}}^{t+1}_i \arrow[rr] &                                                 & \omega_{\mu}^{t+1} \arrow[rd] \arrow[r] & {\fout}_{\mu}^{t+1} \arrow[r] & \gamma_{i}^{t+1} \\
    \Lambda_{\mu}^{t} \arrow[rru] \arrow[rr] \arrow[ru]          &                                     & c^{t+1}_i \arrow[r]                 & V_{\mu}^{t+1} \arrow[ru] \arrow[rr] \arrow[rru] &                                         & \Lambda_{i}^{t+1} \arrow[ru]           &                 
    \end{tikzcd}
    \caption{Sketch of the dependence of the various quantities in GAMP algorithm. }
    \label{fig:dependance-g-amp}
\end{figure}
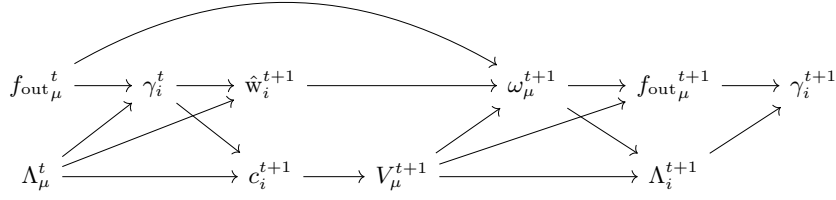

\subsection{Numerical simulations for BO: the Generalised Approximate Message Passing (GAMP) algorithm}
\label{SM:GAMP}

We summarise here the Generalised Approximate Message Passing (GAMP) algorithm, firstly introduced in \cite{Rangan}. We will follow the notation in \cite{aubin2020generalisation}. Given loss and  regularisation functions $\fout$ and $\fw$, computed as in Appendix \eqref{SM:thm-ERM}, we can estimate the student weight $\hat{\boldsymbol{w}}$ of the associated ERM problem by iterating the following fixed point system
\begin{equation}
\begin{aligned}[c]
    {\fout}_{\mu}^{t} &= \fout (y_\mu, \omega_\mu^t, V_\mu^t) \\
    \hat{\boldsymbol{w}}^{t+1}_i &= \fw(\gamma^t_i, \Lambda^t_i) \\
    c^{t+1}_i &= \partial_\gamma \fw(\gamma^t_i, \Lambda^t_i) \\
\end{aligned}
\qquad\qquad\qquad
\begin{aligned}[c]
    \Lambda_i^t &= - \frac{1}{d} \sum_{\mu = 1}^n X_{\mu i}^2 \partial_\omega {\fout}_{\mu}^{t} \\
    \gamma_i^t &= \frac{1}{\sqrt{d}} \sum_{\mu = 1}^n X_{\mu i} {\fout}_{\mu}^{t} + \Lambda_i^t \hat{\boldsymbol{w}}^t_i\\
    V_\mu^t &= \frac{1}{d} \sum_{i = 1}^d X_{\mu i}^2 \hat{c}^{t}_i\\
    \omega_\mu^t &= \frac{1}{\sqrt{d}} \sum_{\mu = 1}^n X_{\mu i} \hat{\boldsymbol{w}}^t_i - V^t_\mu {\fout}_{\mu}^{t-1}\\
\end{aligned}
\end{equation}

It has been shown \cite{Barbier2019} that if $\fw=\fwstar$ and $\fout = \foutstar$ then GAMP provides a Bayes Optimal estimation $\hat{\boldsymbol{w}} = \boldsymbol{w}_{\rm{BO}}$ as per  Section~\ref{sec:bo-and-erm}. We sketch in Figure~\ref{fig:dependance-g-amp} the dependence of the parameters of GAMP during one iteration step going from iteration time $t$ to iteration time $t+1$.

\section{Experiments with non-Gaussian data}\label{SM:real}
We provide here some numerical experiments for a model of data for which the samples $x$ are not Gaussian and non-i.i.d. In particular, we consider a regression problem on the MNIST dataset, preprocessed by extracting $300$ random Fourier data features. The experiments are presented in Figure \ref{fig:real_data}. We find results which are qualitatively compatible with what we presented in the main text. 
We observe (Figure \ref{fig:real_data}, left) that the estimation error is not consistent for all the three losses we consider. We also observe (Figure \ref{fig:real_data}, center and right) the sharp transition behaviour where Huber collapses to the $\ell_2$ loss, with $a_{\rm opt}$ diverging (the fact that the parameter does not diverge is to be expected, as for finite sample complexity $\alpha$, there will be a value $a = a_{\rm opt}$ above which the Huber loss behaves as the $\ell_2$ loss at all points in the training dataset, and increasing $a$ above such a threshold will not affect the learning problem anymore.). In particular, Figure \ref{fig:real_data}, center, shows the performance of the $\ell_2$ and Huber losses, and Figure \ref{fig:real_data}, right, the corresponding optimal value of the Huber scale parameter. We can see that the phase transition in $\Delta_{\rm OUT}$ is conserved even with real data.

The data is generated as follows. We select $n$ MNIST images, each considered as vector $X_i\in\mathbb{R}^{d_0}$ for $i=1, \dots, n$, and we compute for each of them $p$ random Fourier features by generating a random real-valued Gaussian matrix $\Omega \in\mathbb{R}^{p\times d_0}$, and computing $\hat X_i = \exp(-i\Omega X_i)\in\mathbb{C}^{p}$, where the exponential is taken component-wise. This provides $p$ complex features for each sample, which, when unpacked into their real and imaginary parts, translate into $2p$ random real Fourier features for each sample. We collect the features into the data matrix $\hat X\in\mathbb{R}^{2p \times n}$, whose aspect ratio $\alpha = n / 2p$ defines the sample complexity of the linear regression problem.
The labels are generated using the model of outliers considered in the manuscript \eqref{eq.datamodel} with a random $\boldsymbol{w}^\star \in \mathbb{R}^{2p}$.
The regression problem is solved, as detailed in Appendix \ref{SM:numericalERM}, for the three choices of losses considered in the submission. In this case, we fix $p = 300$ and choose $n$ depending on the value of $\alpha$.

The optimal values of the regularisation $\lambda$ and the scale parameter $a$ are computed by cross-validation on the estimation error.

\begin{figure}[h] 
    \vspace{0.5cm}
    \centering
    \includegraphics[width=0.31\textwidth]{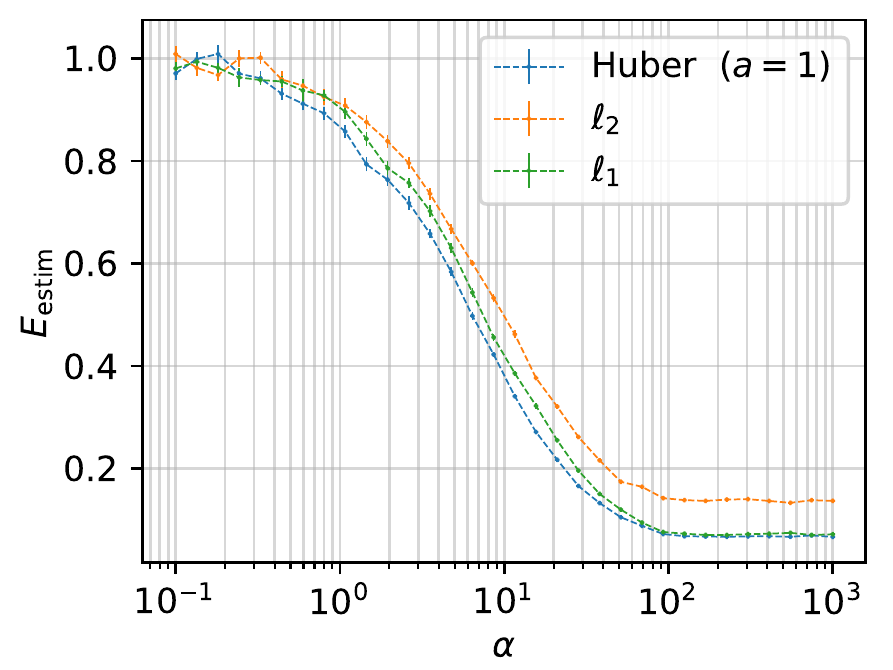}
    \hfill
    \includegraphics[width=0.31\textwidth]{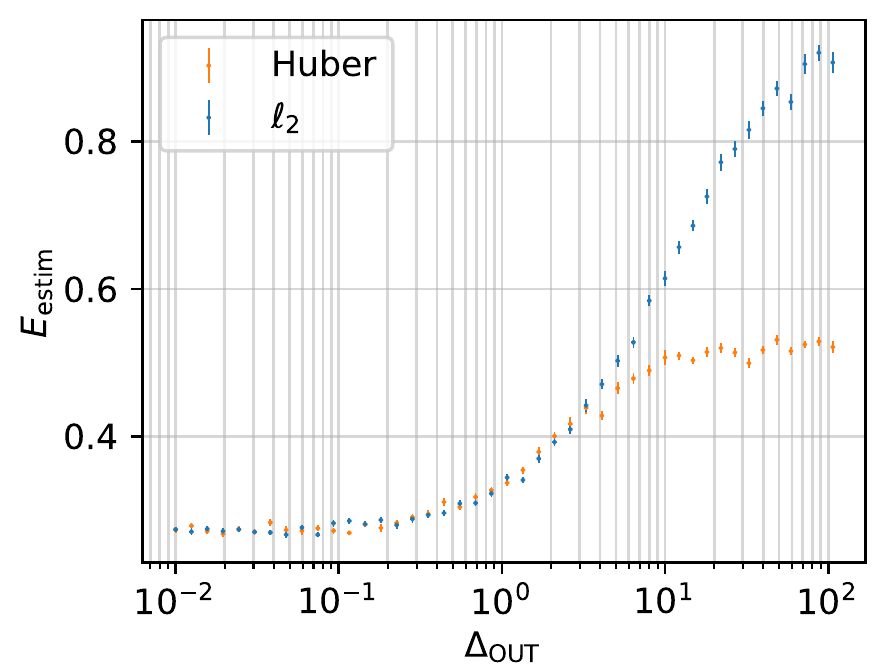}
    \hfill
    \includegraphics[width=0.31\textwidth]{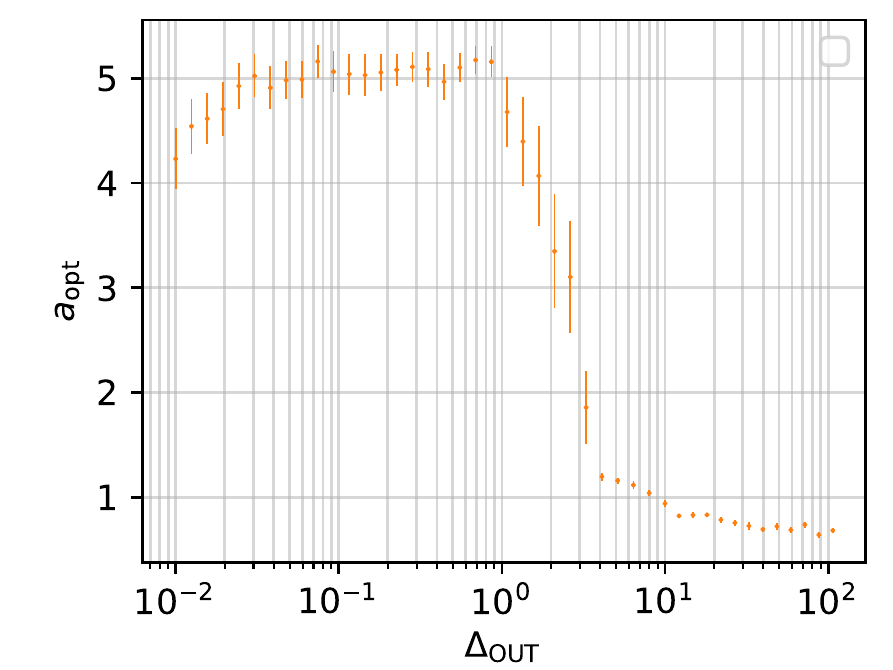}
    \caption{
    The points represent the mean and error of the mean of 16 instances of the simulations with data as described before. The simulations are performed with 30\% of outliers ($\epsilon = 0.3$), with an additive noise variance of the inliers $\Delta_{\rm IN} = 1$ and correlation coefficient of the outliers $\beta = 0$. The $\ell_2$ regularisation parameter is chosen to be the one that minimises the estimation error in all three Figures. \textbf{(Left)} This plot is the counterpart to Figure \ref{fig:figure-1} (right) of the main text. Here we vary the sample complexity $\alpha$ and fix the noise variance of the outliers $\Delta_{\rm OUT} = 5$. Additionally for the Huber loss we fixed the scale parameter $a = 1$. \textbf{(Center) \& (Right)} These two plots are the counterparts of Figure  \ref{fig:figure-2} (right) of the main text. In these plots we are varying $\Delta_{\rm OUT}$ with a fixed sample complexity $\alpha = 10$.
    }
    \label{fig:real_data}
\end{figure}

\section{Exploration of parameters' phenomenology}
\label{SM:exploration}
In this appendix we complement the figures in the main text by offering a more comprehensive exploration of the dependence of parameters $\alpha$, $\epsilon$, $\din$, $\dout$ and $\beta$.
In most of the figures we plot both the error (generalisation or estimation) in the upper panel and the value of the optimal hyper parameters in the bottom panel. Hyperparameters are plotted with colors matching the associated upper panel.

Gray lines correspond to the optimally tuned value of the scale parameter $a$ for Huber.

In Figure~\ref{fig:additional-sweep-alpha-params},~\ref{fig:additional-sweep-alpha-params-eps},~\ref{fig:additional-sweep-alpha-params-estimation-beta-small},~\ref{fig:additional-sweep-alpha-params-estimation-beta-large} and ~\ref{fig:angle-going-to-zero} we explore the dependence on $\alpha$:
\begin{itemize}[leftmargin=7mm]
    \item in Figures ~\ref{fig:additional-sweep-alpha-params} and \ref{fig:additional-sweep-alpha-params-eps} we plot the optinally-tuned excess generalisation error as a function of $\alpha$ for two choices of $\beta$. In figure \ref{fig:additional-sweep-alpha-params} we displayed the dependence on $\dout$. We notice that the phenomenology is qualitatively the same for the two different values of $\beta$ ($\beta = 0$ on the left column and $\beta = 0.2$ on the right column). As we expect from Theorem \ref{thm:large-alpha-ERM}. for large enough values of $\dout$ we achieve consistency with all losses, in particular with a diverging $\lambdaopt$. On the other hand if the value of $\dout$ is smaller than the threshold in \ref{thm:large-alpha-ERM} no choice of $\lambda\geq 0$ restores consistency in the large $\alpha$ limit.
    In Figure~\ref{fig:additional-sweep-alpha-params-eps} we show also that the same phenomenology holds when changing $\epsilon$. As in Figure ~\ref{fig:additional-sweep-alpha-params} we see that for certain choices of the parameters, at large enough $\epsilon$ the estimator is not consistent.

    \item In Figures~\ref{fig:additional-sweep-alpha-params-estimation-beta-small} and~\ref{fig:additional-sweep-alpha-params-estimation-beta-large}, we consider the optimally-tuned estimation error. In the figure \ref{fig:additional-sweep-alpha-params-estimation-beta-small} we focus on $\beta < 1$ and see that estimation cannot be found for positive values of $\lambda$. In figure \ref{fig:additional-sweep-alpha-params-estimation-beta-large} we have instead $\beta > 1$. Here consistent estimation can be achieved for an optimally tuned $\lambda$, but we stress that this requires a clean dataset. 

    \item In Figure~\ref{fig:angle-going-to-zero} we plotted the teacher-student angle as per eq.~(\ref{eq:angle-teacher-student-ovl}). We see that the figures are consistent with the claims of Section~\ref{SM:thm-large-alpha-angle} since even for non-optimised values of the parameters, the angle between the teacher and student vector is going to zero with $\alpha$.
\end{itemize}

Figure~\ref{fig:additional_sweeps-eps} and Figure~\ref{fig:additional_sweeps-delta-out} explore the dependency of the generalisation error on $\epsilon$ and $\dout$. We have qualitatively the same behaviour as in Figure~\ref{fig:figure-2} of the main text. Interestingly, the excess generalisation error is not zero in the limit $\epsilon\to 1$ for $\beta \neq 0$ for the parameters chosen in figures. The outlier regression problem in this limit is equivalent to regression without outlier, but with variance $\dout$.

Figure~\ref{fig:additional_phase-diag} shows quantitatively how the Huber loss performs better than the $\ell_2$ loss in the regression task at hand when optimally tuned. This complements Figure~\ref{fig:figure-3} of the main text, where we simply delimited the region where the two losses are identical. Interestingly, the advantage of Huber over $\ell_2$ grows the further away one is from the borders of this region.

\begin{figure}[b]
    \centering
    \includegraphics[width=0.39\columnwidth]{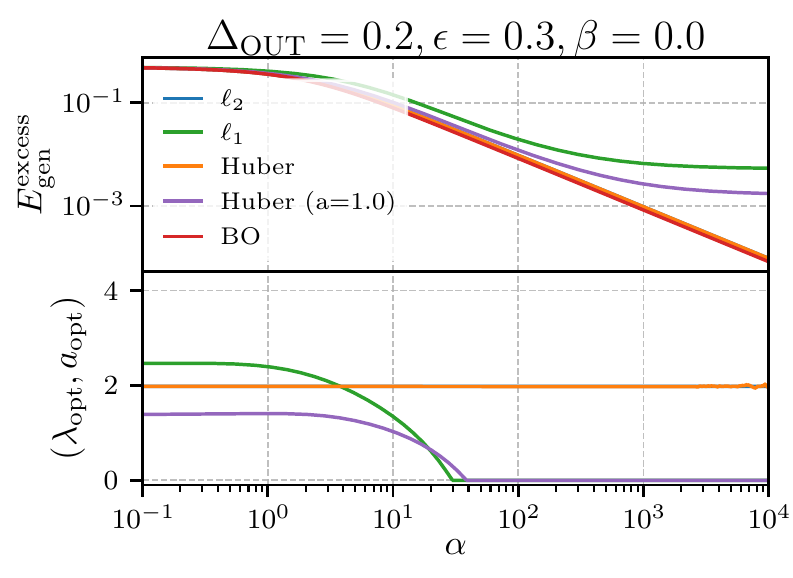}
    \includegraphics[width=0.39\columnwidth]{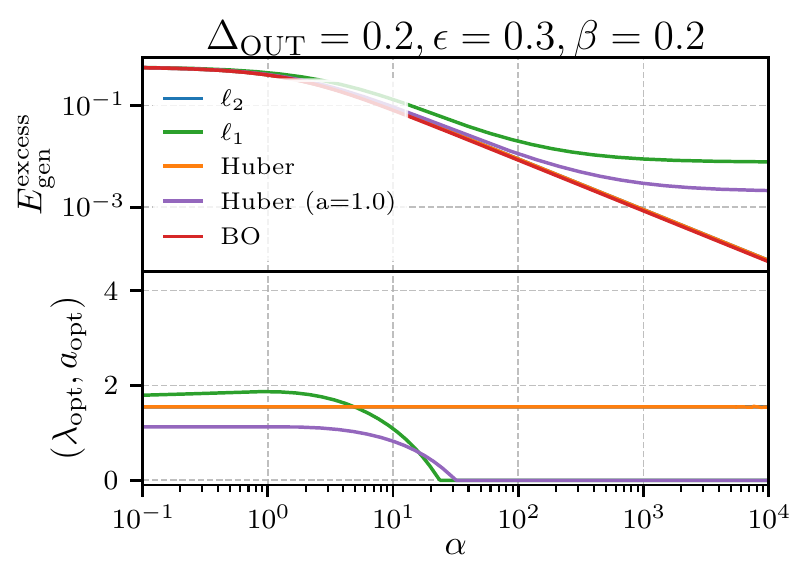} \\
    \includegraphics[width=0.39\columnwidth]{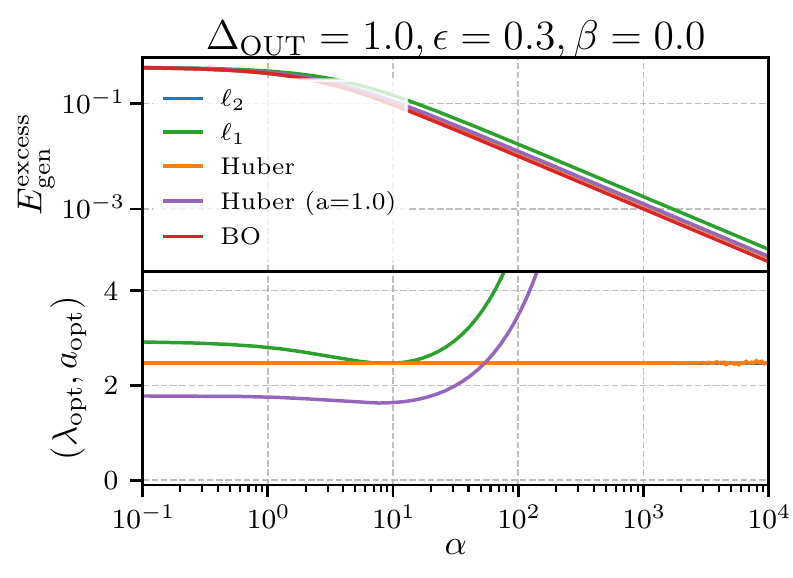}
    \includegraphics[width=0.39\columnwidth]{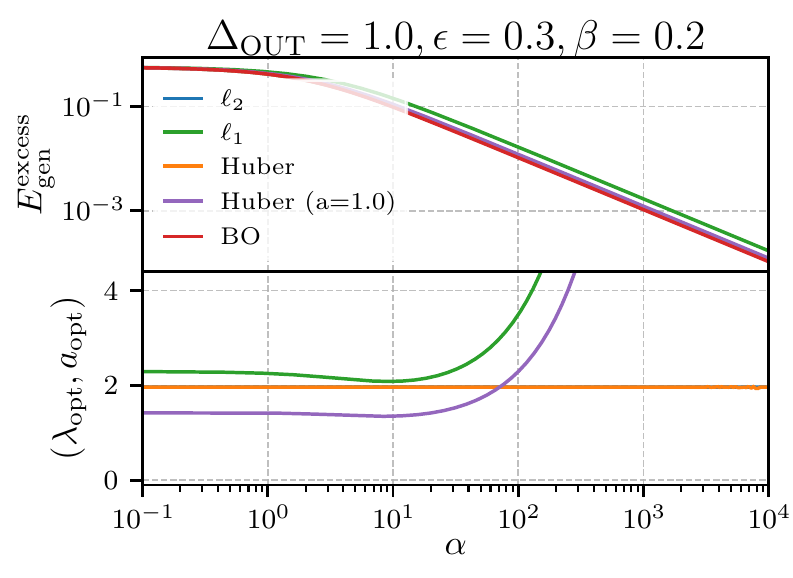} \\
    \includegraphics[width=0.39\columnwidth]{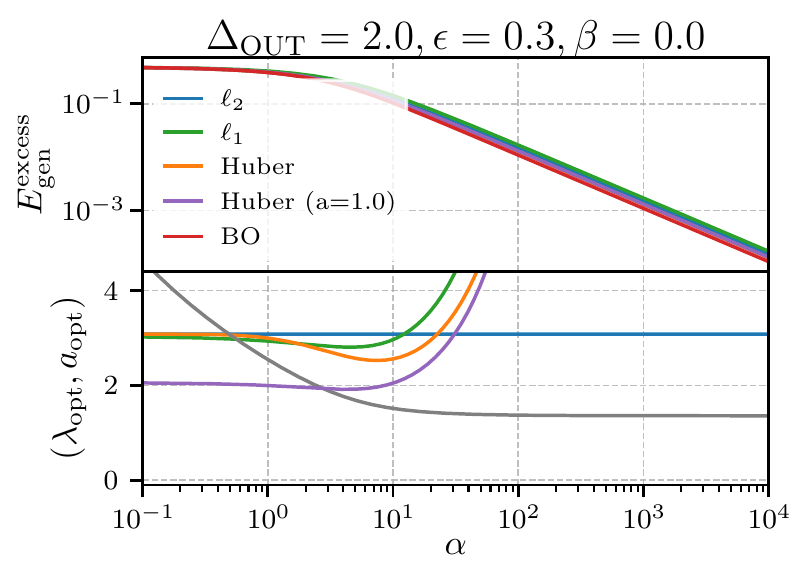}
    \includegraphics[width=0.39\columnwidth]{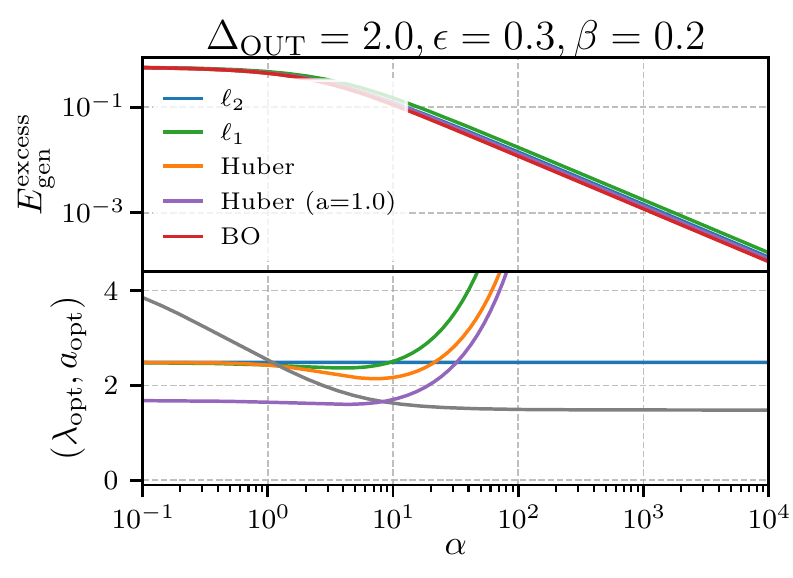} \\
    \includegraphics[width=0.39\columnwidth]{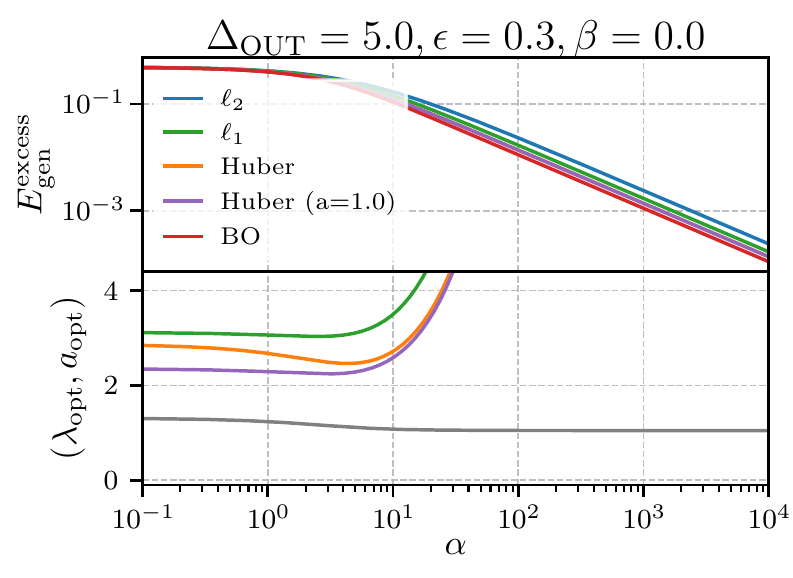}
    \includegraphics[width=0.39\columnwidth]{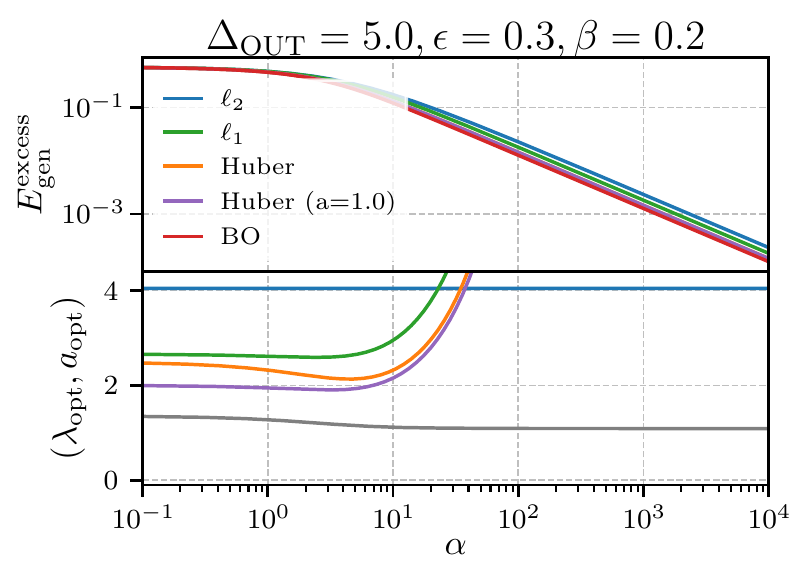}
    \caption{
        Excess generalisation error as a function of sample complexity for different value of the noise models' parameters. The hyper parameters, $\lambda$ and optionally $a$, for the ERM procedures are optimised to obtain the best generalisation error. 
    }
    \label{fig:additional-sweep-alpha-params}
\end{figure}

\begin{figure}[b]
    \centering
    \includegraphics[width=0.39\columnwidth]{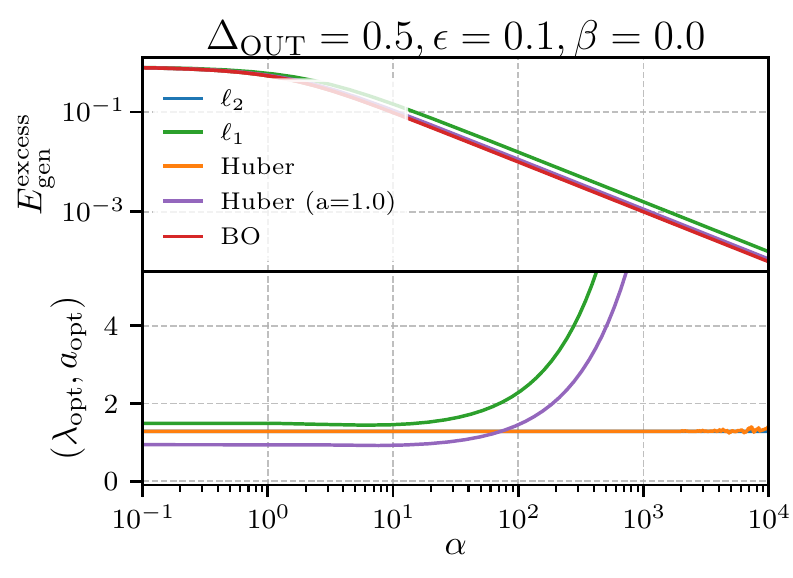}
    \includegraphics[width=0.39\columnwidth]{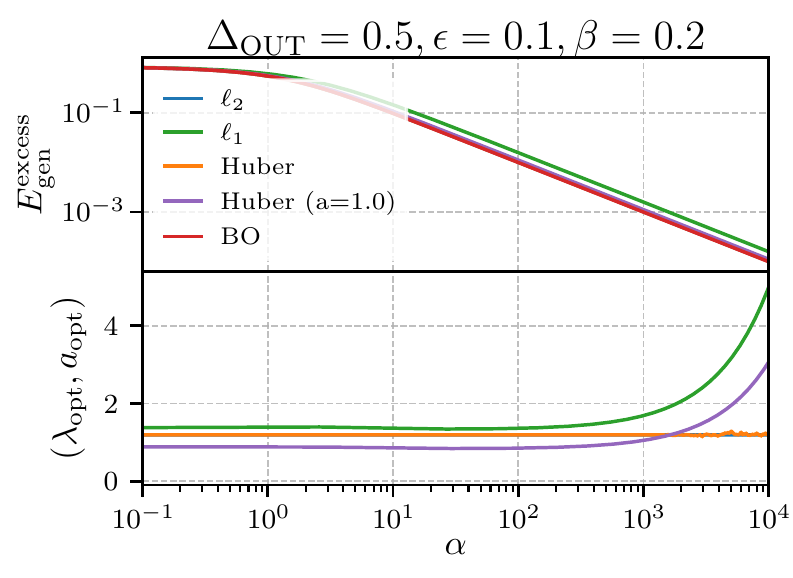} \\
    \includegraphics[width=0.39\columnwidth]{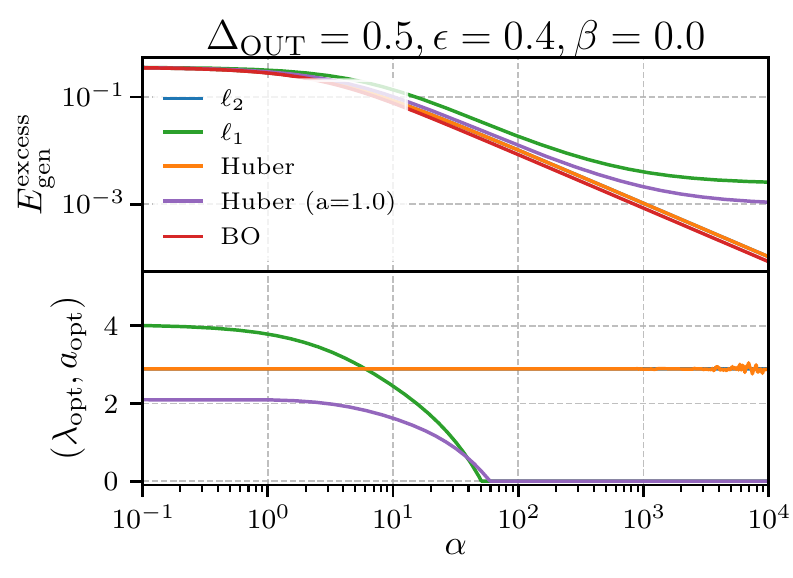}
    \includegraphics[width=0.39\columnwidth]{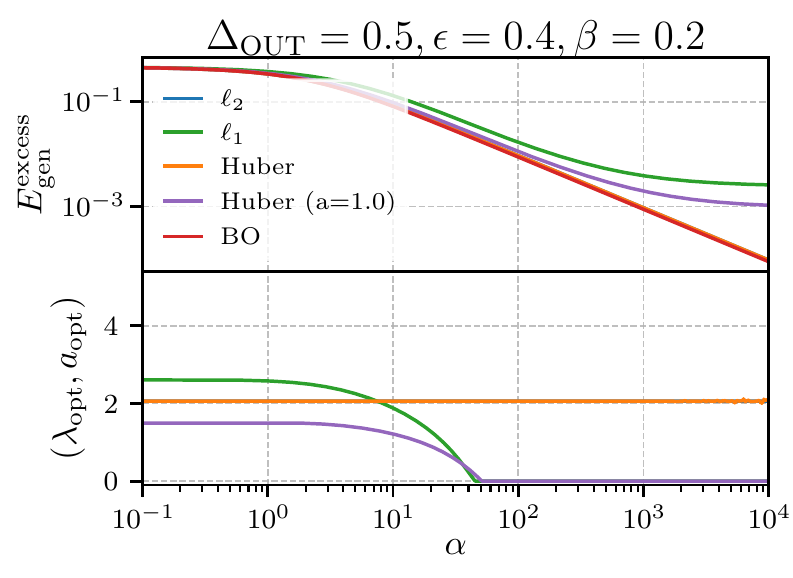} \\
    \includegraphics[width=0.39\columnwidth]{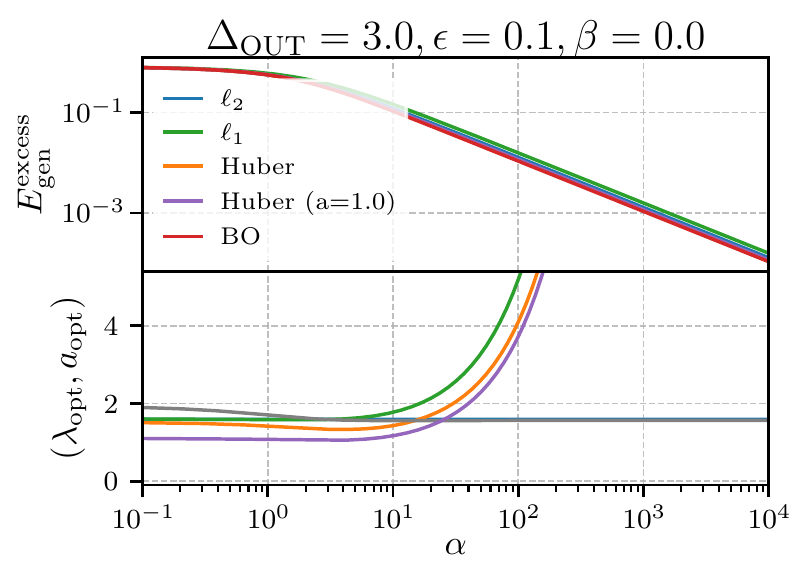}
    \includegraphics[width=0.39\columnwidth]{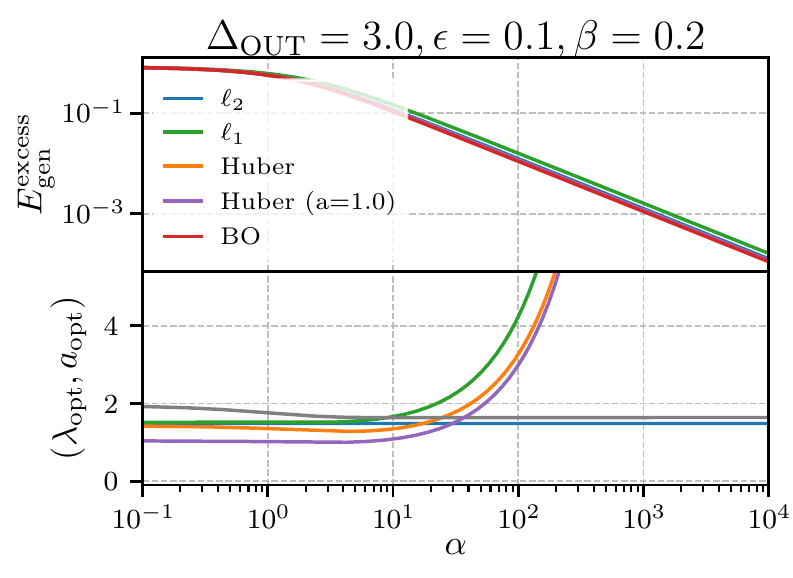} \\
    \includegraphics[width=0.39\columnwidth]{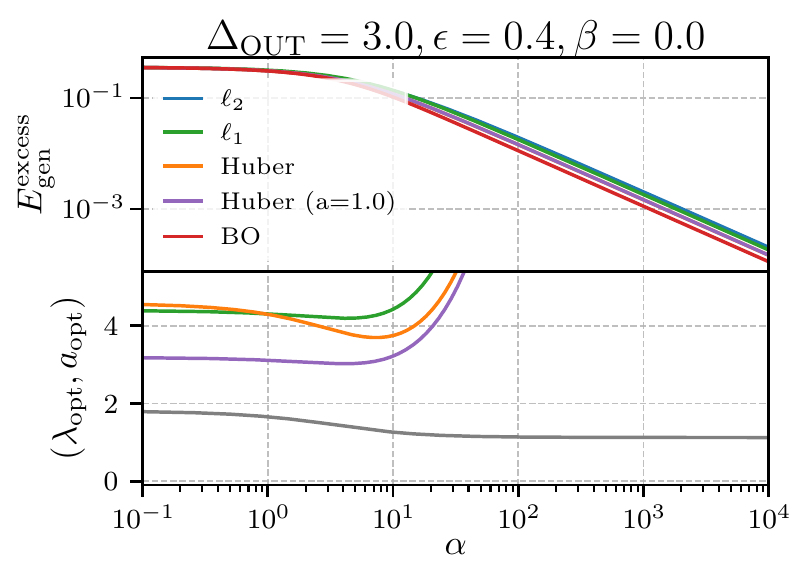}
    \includegraphics[width=0.39\columnwidth]{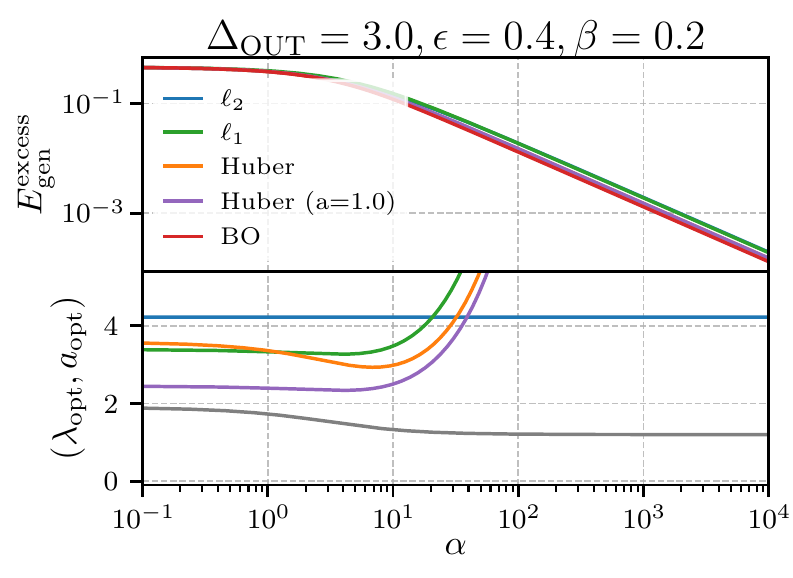}
    \caption{
        Excess generalisation error as a function of sample complexity for different value of the noise models' parameters. In all the plots shown we fixed $\din=1$. The hyper parameters, $\lambda$ and optionally $a$, for the ERM procedures are optimised to obtain the best generalisation error. 
    }
    \label{fig:additional-sweep-alpha-params-eps}
\end{figure}

\begin{figure}[b]
    \centering
    \includegraphics[width=0.39\columnwidth]{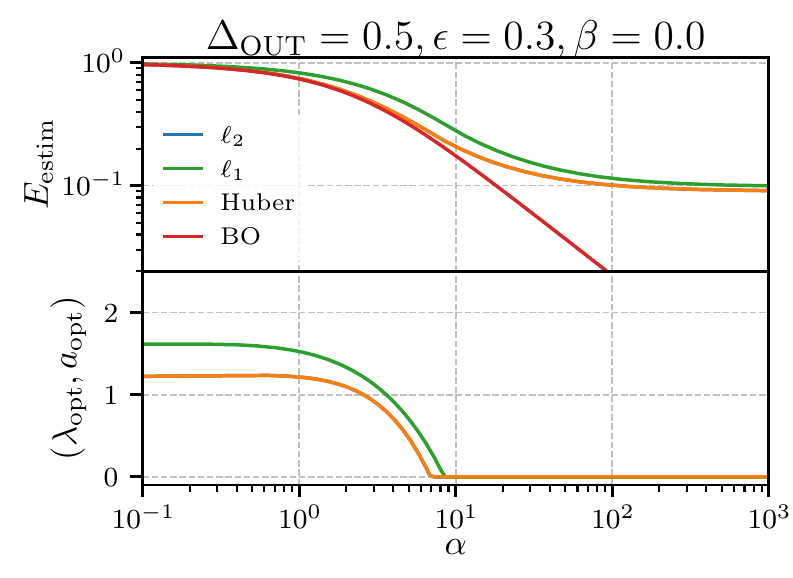}
    \includegraphics[width=0.39\columnwidth]{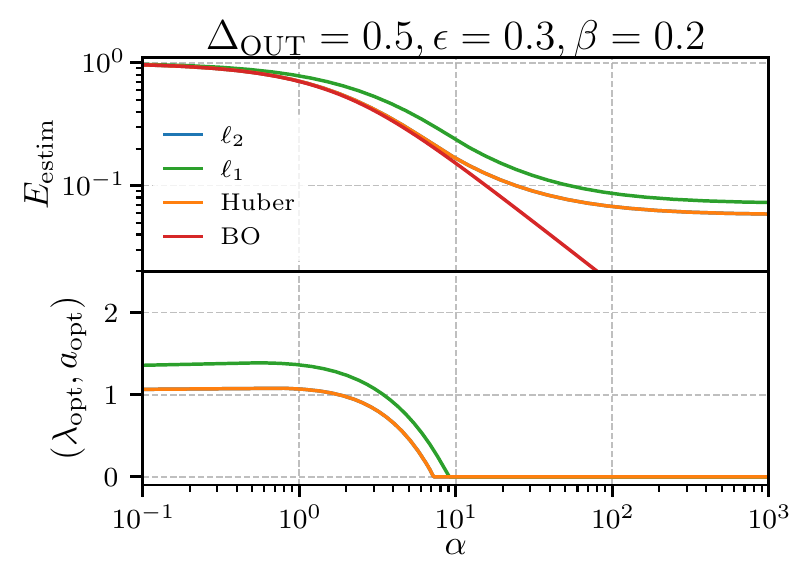} \\
    \includegraphics[width=0.39\columnwidth]{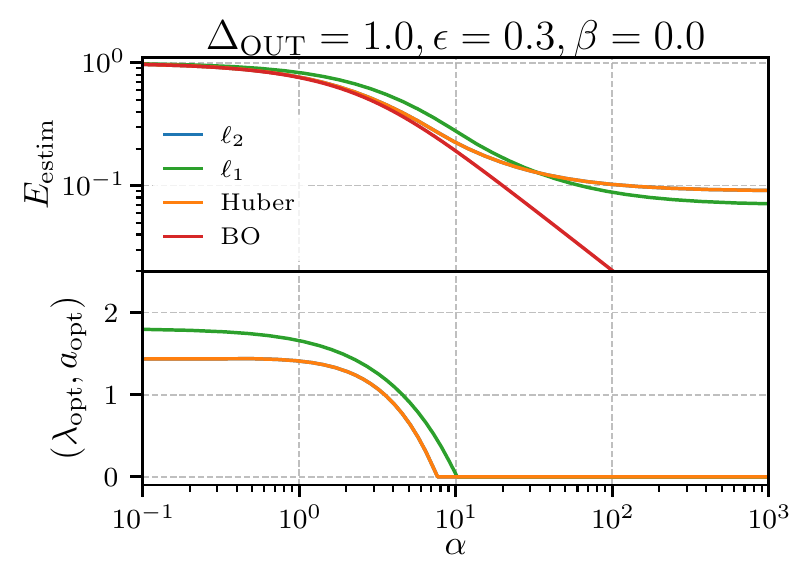}
    \includegraphics[width=0.39\columnwidth]{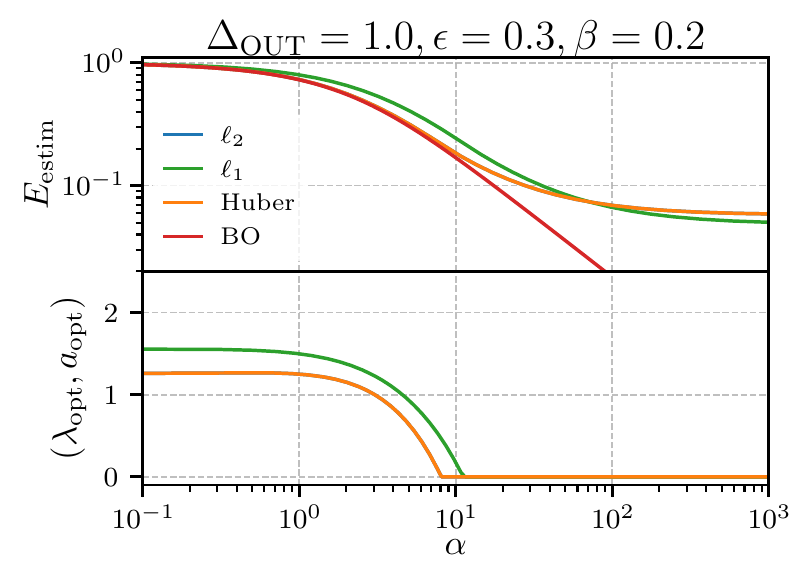} \\
    \includegraphics[width=0.39\columnwidth]{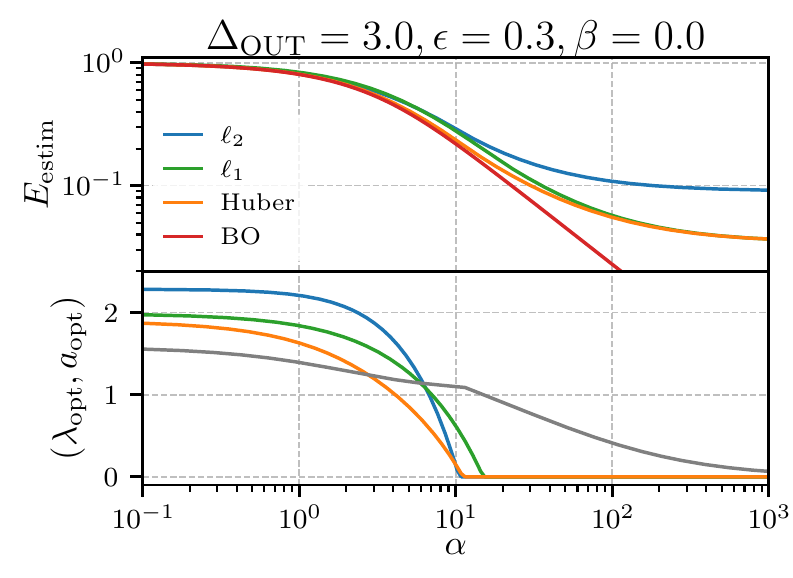}
    \includegraphics[width=0.39\columnwidth]{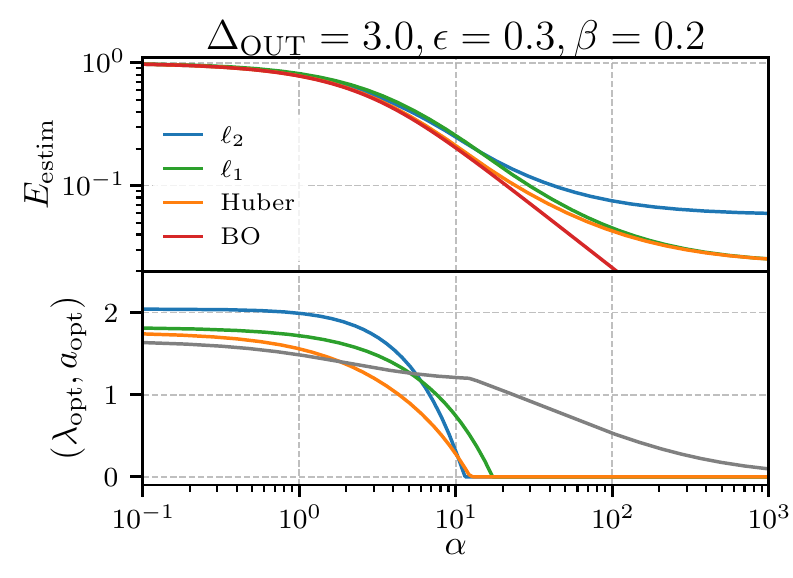}
    \caption{
        Estimation error as a function of sample complexity for different value of the noise models' parameters for $\beta < 1$. In all the plots shown we fixed $\din=1$. The hyper parameters, $\lambda$ and optionally $a$, for the ERM procedures are optimized to obtain the best generalisation error. 
    }
    \label{fig:additional-sweep-alpha-params-estimation-beta-small}
\end{figure}

\begin{figure}[b]
    \centering
    \includegraphics[width=0.39\columnwidth]{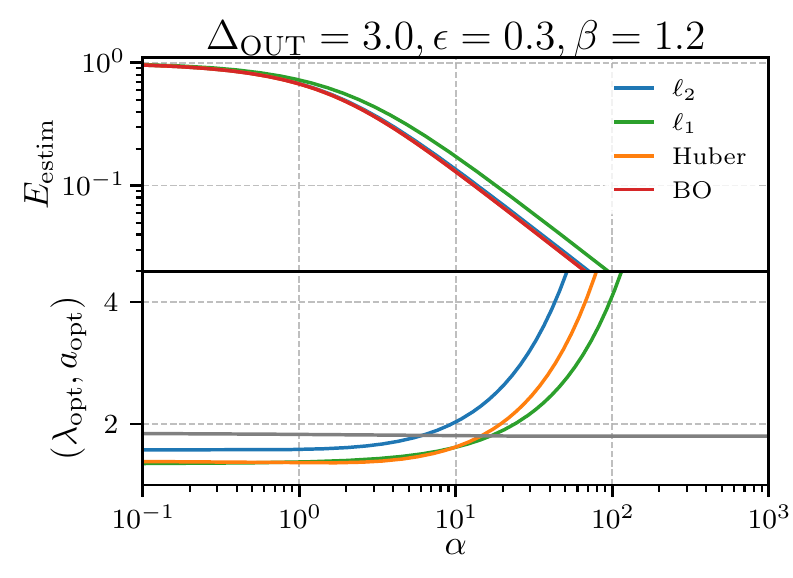}
    \includegraphics[width=0.39\columnwidth]{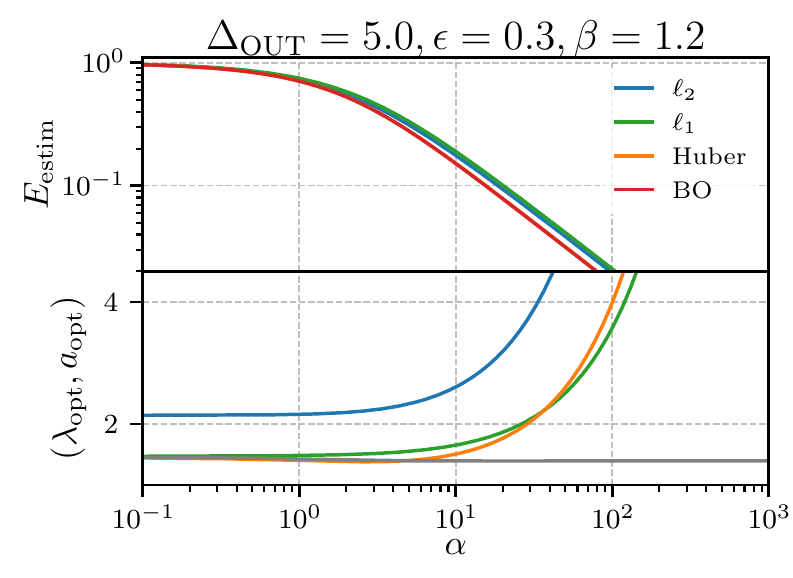}
    \caption{
        Estimation error as a function of sample complexity for different value of the noise models' parameters for $\beta > 1$. In all the plots shown we fixed $\din=1$. The hyper parameters, $\lambda$ and optionally $a$, for the ERM procedures are optimized to obtain the best generalisation error. 
    }
    \label{fig:additional-sweep-alpha-params-estimation-beta-large}
\end{figure}

\begin{figure}
    \centering
    \includegraphics[width=0.39\columnwidth]{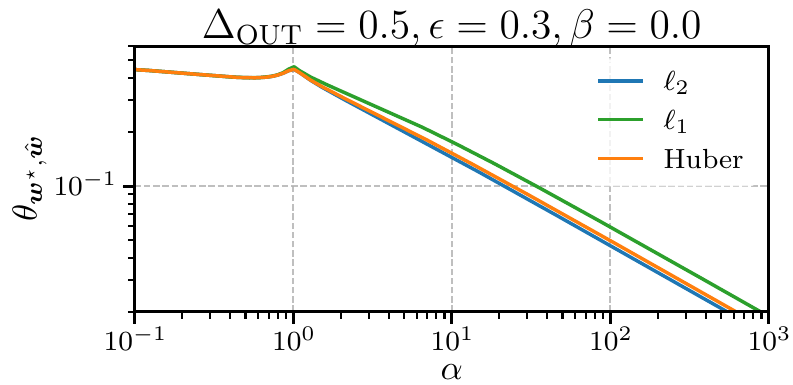}
    \includegraphics[width=0.39\columnwidth]{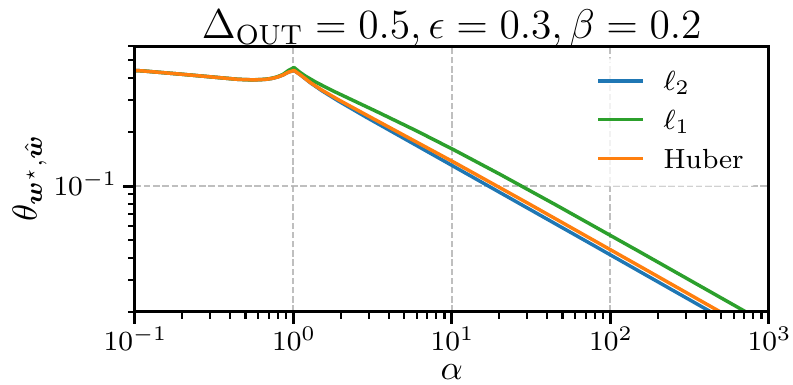} \\
    \includegraphics[width=0.39\columnwidth]{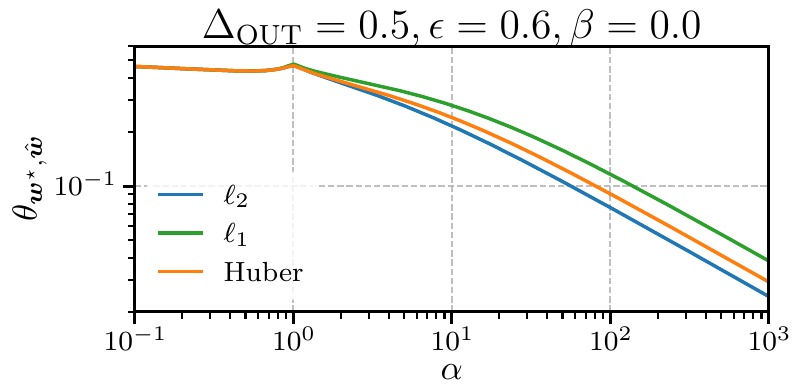}
    \includegraphics[width=0.39\columnwidth]{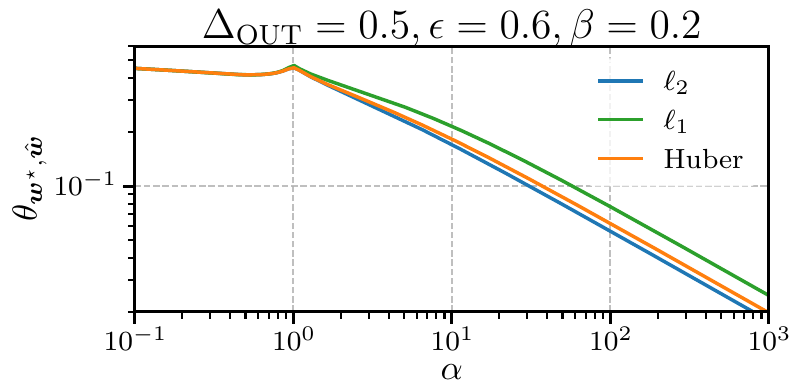} \\
    \includegraphics[width=0.39\columnwidth]{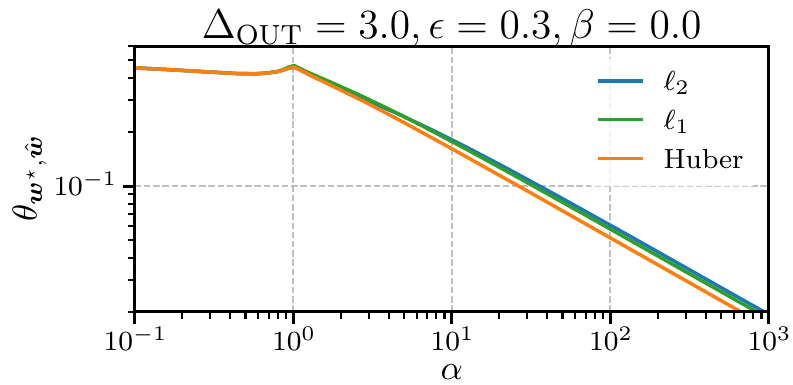}
    \includegraphics[width=0.39\columnwidth]{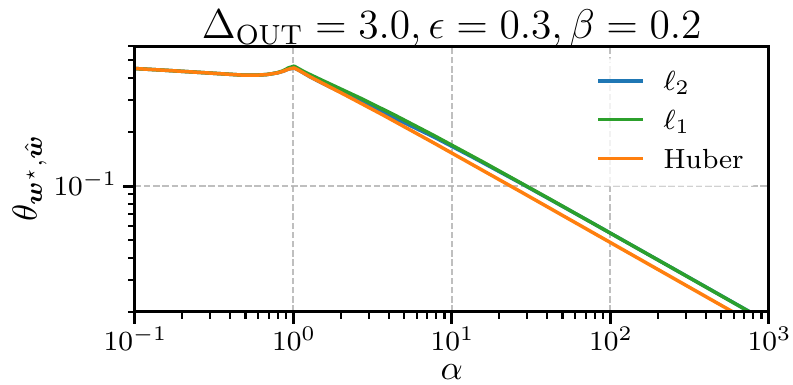} \\
    \includegraphics[width=0.39\columnwidth]{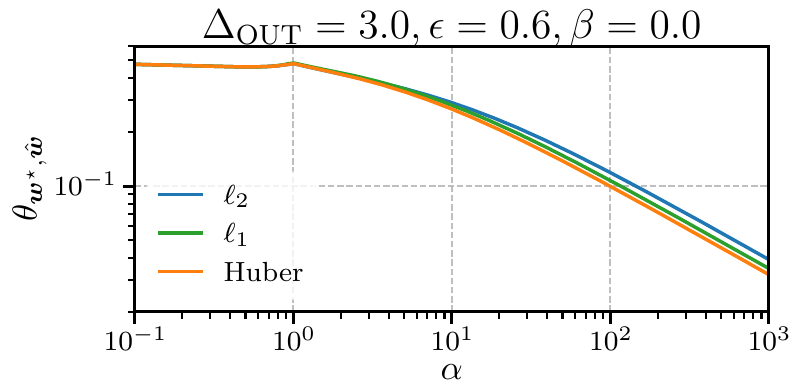}
    \includegraphics[width=0.39\columnwidth]{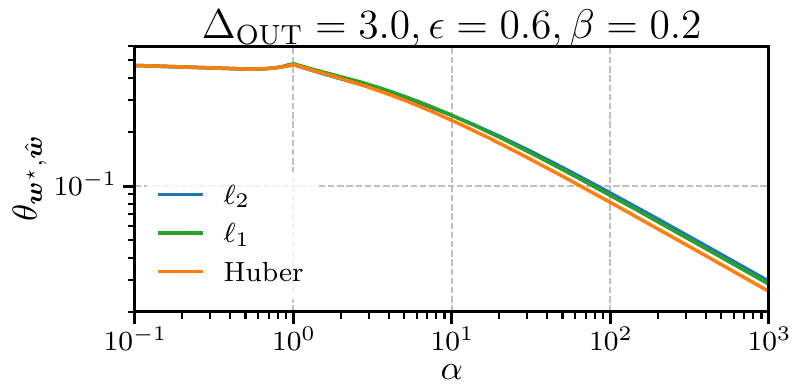}
    \caption{
        Angle between the teacher and the student --- eq.~(\ref{eq:angle-teacher-student-ovl}) --- as a function of the sample complexity $\alpha$. The figures are generated for a fixed value of regularisation parameter $\lambda = 10^{-3}$ and in the case of the Huber loss for $a = 1$. For each plot we fixed $\din = 1$ and the other parameters chosen are indicated on top of the plot.
    }
    \label{fig:angle-going-to-zero}
\end{figure}

\begin{figure}
    \centering
    \includegraphics[width=0.39\columnwidth]{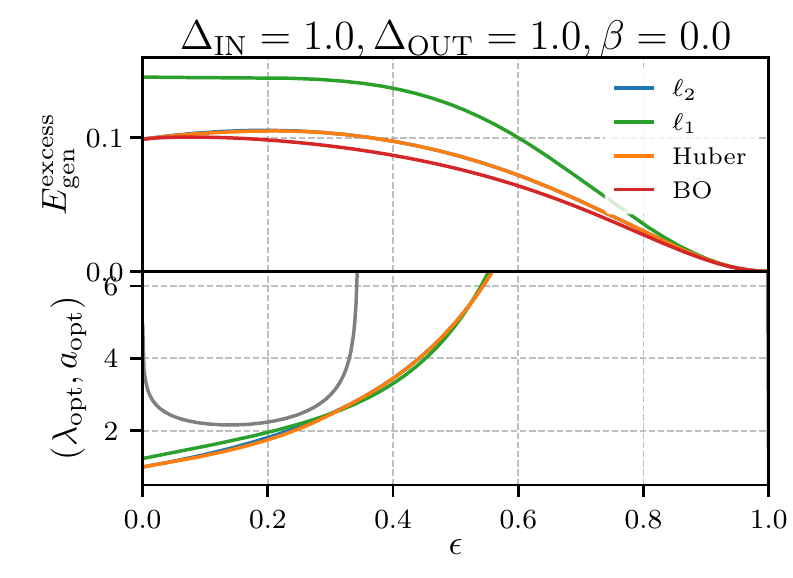}
    \includegraphics[width=0.39\columnwidth]{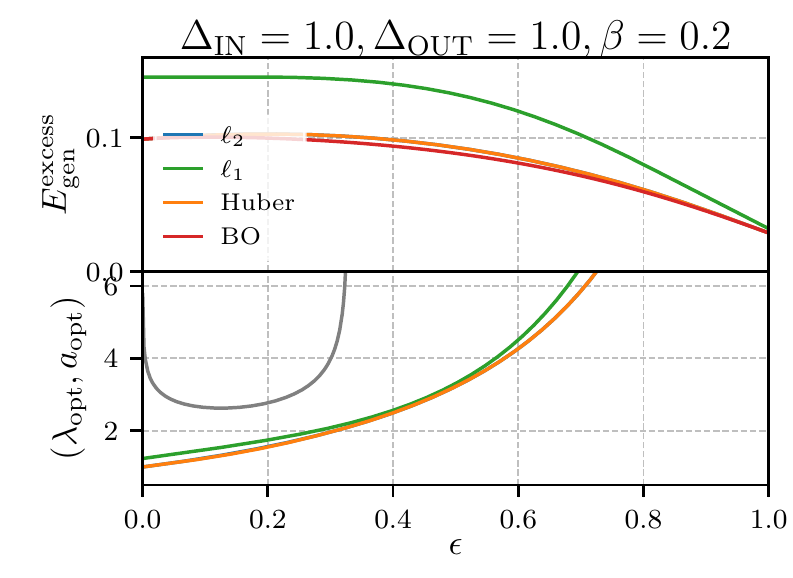} \\ 
    \includegraphics[width=0.39\columnwidth]{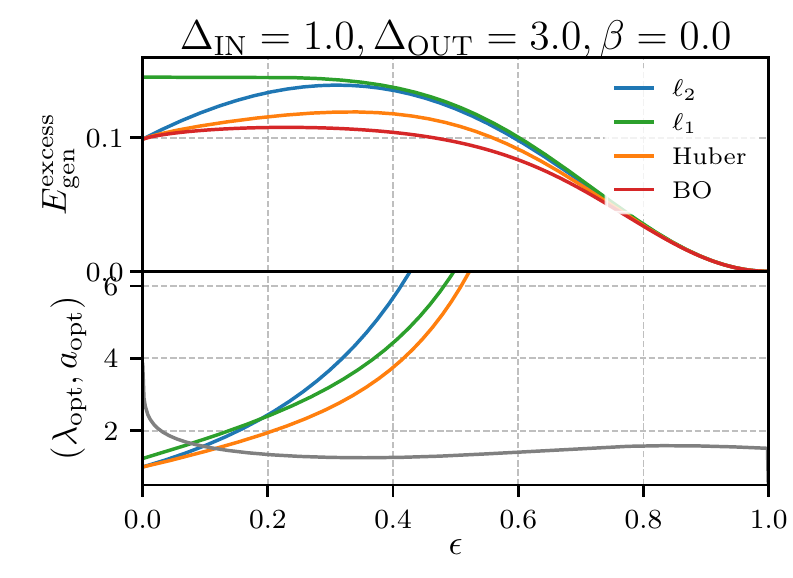}
    \includegraphics[width=0.39\columnwidth]{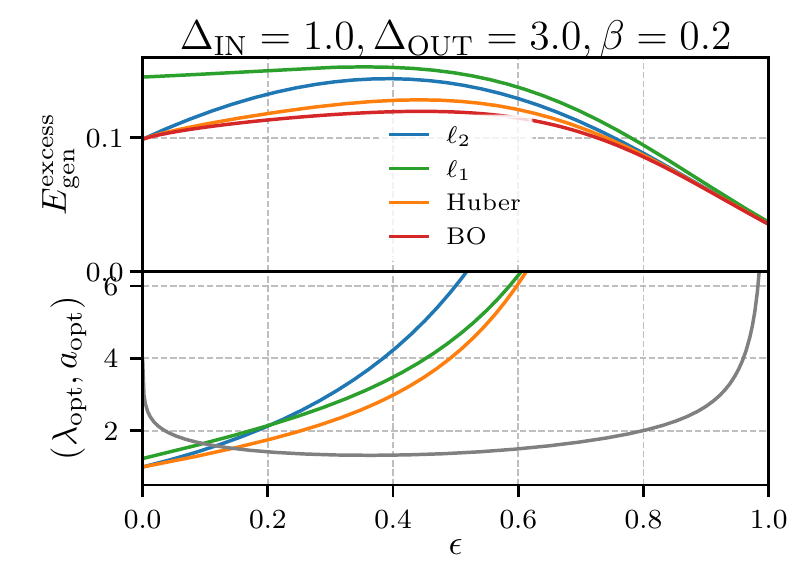}
    \caption{
        Excess generalisation error as a function of $\epsilon$. In all the plots shown we fixed $\alpha = 10$. The hyper parameters, $\lambda$ and optionally $a$, for the ERM procedures are optimized to obtain the best generalisation error. 
    }
    \label{fig:additional_sweeps-eps}
\end{figure}

\begin{figure}
    \centering
    \includegraphics[width=0.39\columnwidth]{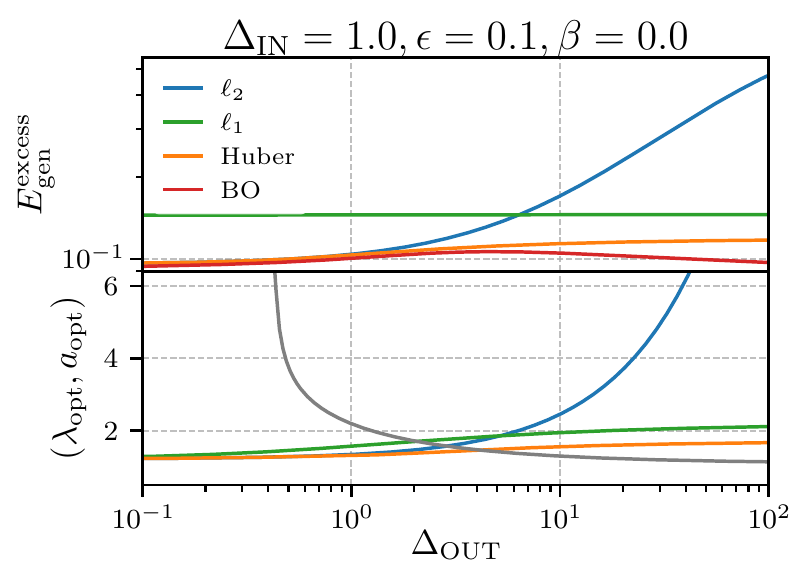}
    \includegraphics[width=0.39\columnwidth]{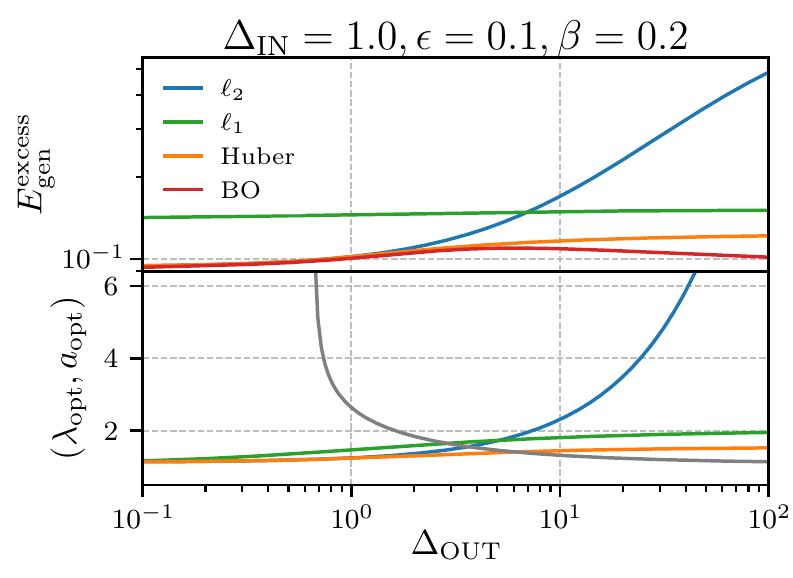} \\
    \includegraphics[width=0.39\columnwidth]{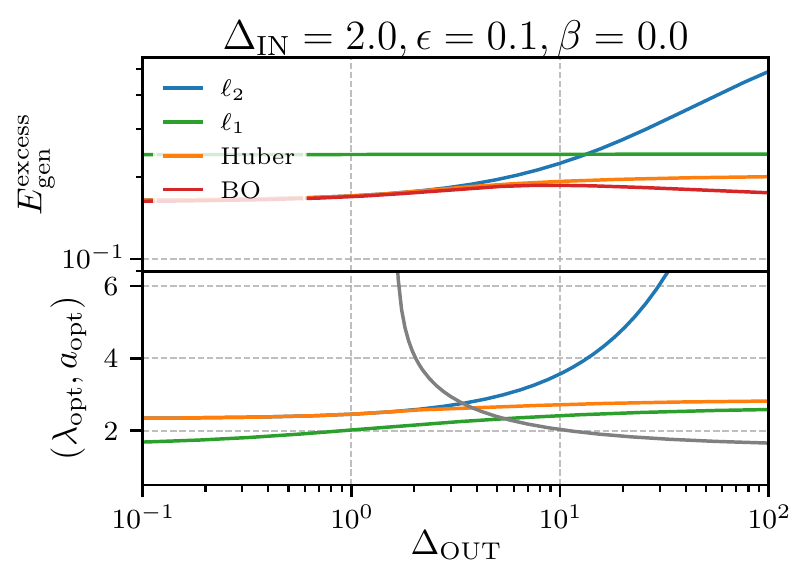}
    \includegraphics[width=0.39\columnwidth]{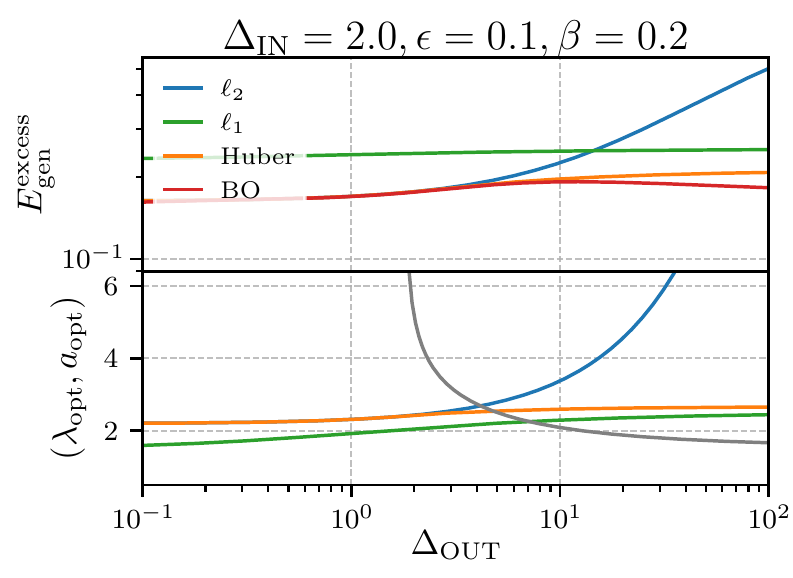}
    \caption{
        Excess generalisation error as a function of $\dout$. In all the plots shown we fixed $\alpha = 10$. The hyper parameters, $\lambda$ and optionally $a$, for the ERM procedures are optimized to obtain the best generalisation error. 
    }
    \label{fig:additional_sweeps-delta-out}
\end{figure}

\begin{figure}[hb]
    \centering
    \includegraphics[width=0.39\columnwidth, height=0.34\textwidth]{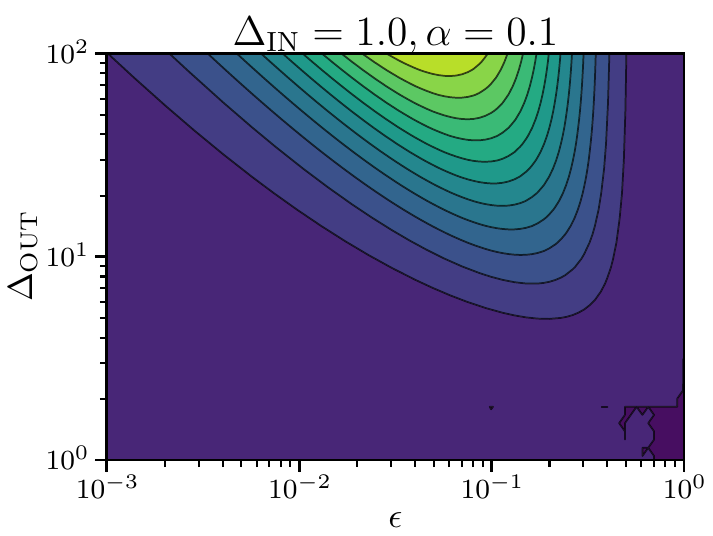}
    \includegraphics[width=0.1\columnwidth, height=0.34\textwidth]{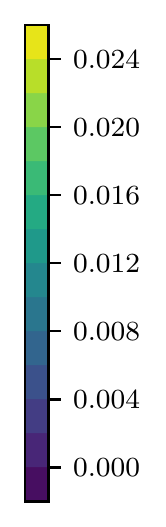} \hfill
    \includegraphics[width=0.39\columnwidth, height=0.34\textwidth]{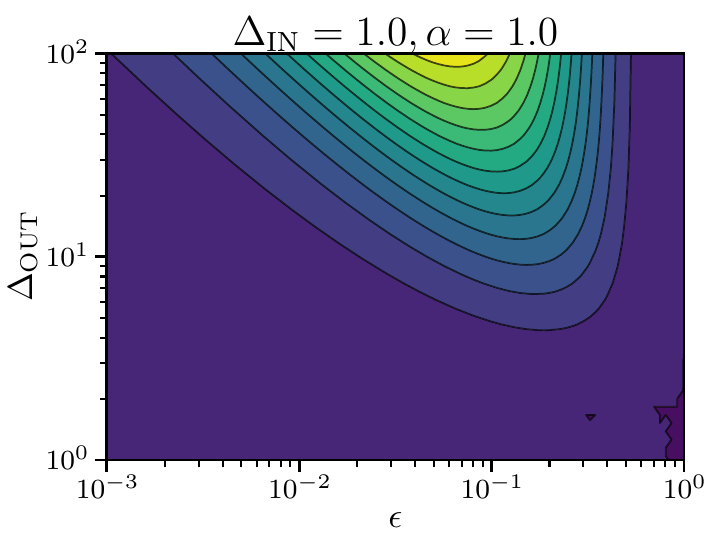}
    \includegraphics[width=0.1\columnwidth, height=0.34\textwidth]{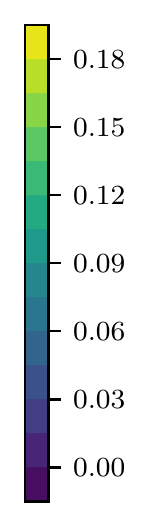} \\
    \includegraphics[width=0.39\columnwidth, height=0.34\textwidth]{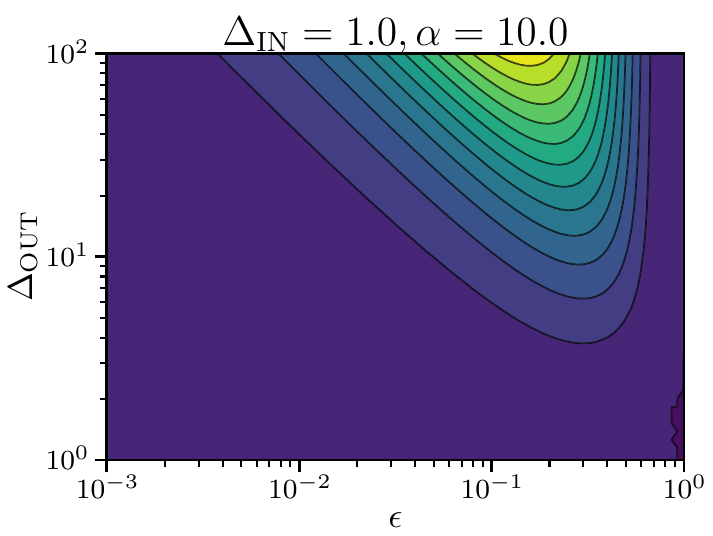}
    \includegraphics[width=0.1\columnwidth, height=0.34\textwidth]{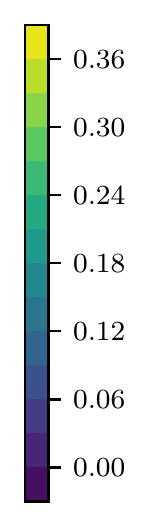}\hfill
    \includegraphics[width=0.39\columnwidth, height=0.34\textwidth]{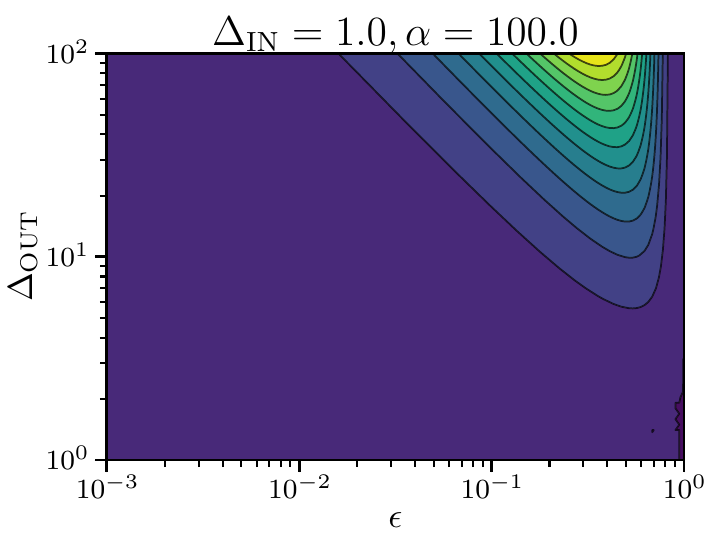}
    \includegraphics[width=0.1\columnwidth, height=0.34\textwidth]{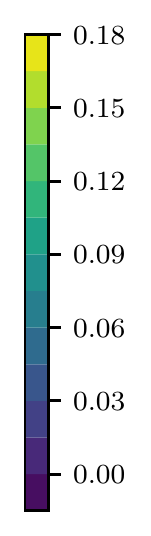}
    \caption{
        Difference between the Huber and $\ell_2$ generalisation error as a function of $\dout$ and $\epsilon$. Here we fix $\din=1$. Hyper-parameters for both losses are optimally tuned. In the dark blue region the difference is identically zero.
    }
    \label{fig:additional_phase-diag}
\end{figure}

\end{document}